\let\csname algorithm*\endcsname\relax
\let\csname endalgorithm*\endcsname\relax
\newcommand{\coloredsquare}[1]{\textcolor[HTML]{#1}{\rule{1ex}{1ex}}}
\theoremstyle{plain}
\newtheorem{theorem}{Theorem}[section]
\newtheorem{lemma}[theorem]{Lemma}
\theoremstyle{definition}
\theoremstyle{remark}
\newtheorem{remark}[theorem]{Remark}
\icmltitlerunning{PAC-Bayesian Reinforcement Learning Trains Generalizable Policies}
\begin{document}

\twocolumn[
  \icmltitle{PAC-Bayesian Reinforcement Learning Trains 
  Generalizable Policies}




  \begin{icmlauthorlist}
    \icmlauthor{Abdelkrim Zitouni}{lyon2-1-eric}
    \icmlauthor{Mehdi Hennequin}{comp}
    \icmlauthor{Juba Agoun}{lyon2-1-eric}
    \icmlauthor{Ryan Horache}{lyon1-liris}
    \icmlauthor{Nadia Kabachi}{lyon1-2-eric}
    \icmlauthor{Omar Rivasplata}{uom}
  \end{icmlauthorlist}

  \icmlaffiliation{lyon2-1-eric}{Université Lumière Lyon 2, Université Lyon 1, ERIC, 69007, Lyon, France}
  \icmlaffiliation{comp}{Omundu, Lyon, France}
  \icmlaffiliation{lyon1-2-eric}{Université Lyon 1, Université Lumière Lyon 2, ERIC, 69100, Villeurbanne, France}
  \icmlaffiliation{lyon1-liris}{Université Claude Bernard Lyon 1, LIRIS, UMR CNRS 5205, France}
  \icmlaffiliation{uom}{University of Manchester, UK}

  \icmlcorrespondingauthor{Abdelkrim Zitouni}{abdelkrim.zitouni@univ-lyon2.fr}

  \icmlkeywords{Machine Learning, ICML, Reinforcement Learning, PAC-Bayes, Learning Theory, Generalization}

  \vskip 0.3in
]



\printAffiliationsAndNotice{}  

\begin{abstract}
    We derive a novel PAC-Bayesian generalization bound for reinforcement learning that explicitly accounts for Markov dependencies in the data, through the chain's mixing time. This contributes to overcoming challenges in obtaining generalization guarantees for reinforcement learning, where the sequential nature of data breaks the independence assumptions underlying classical bounds. The new bound provides non-vacuous certificates for modern off-policy algorithms such as Soft Actor-Critic. We demonstrate the practical utility of the bound through PB-SAC, a novel algorithm that optimizes the bound during training to guide exploration. Experiments across several continuous control tasks show that the proposed approach provides meaningful confidence certificates while maintaining competitive performance.
\end{abstract}

\section{Introduction}

\begin{figure*}[t]
\centering
\includegraphics[width=\textwidth]{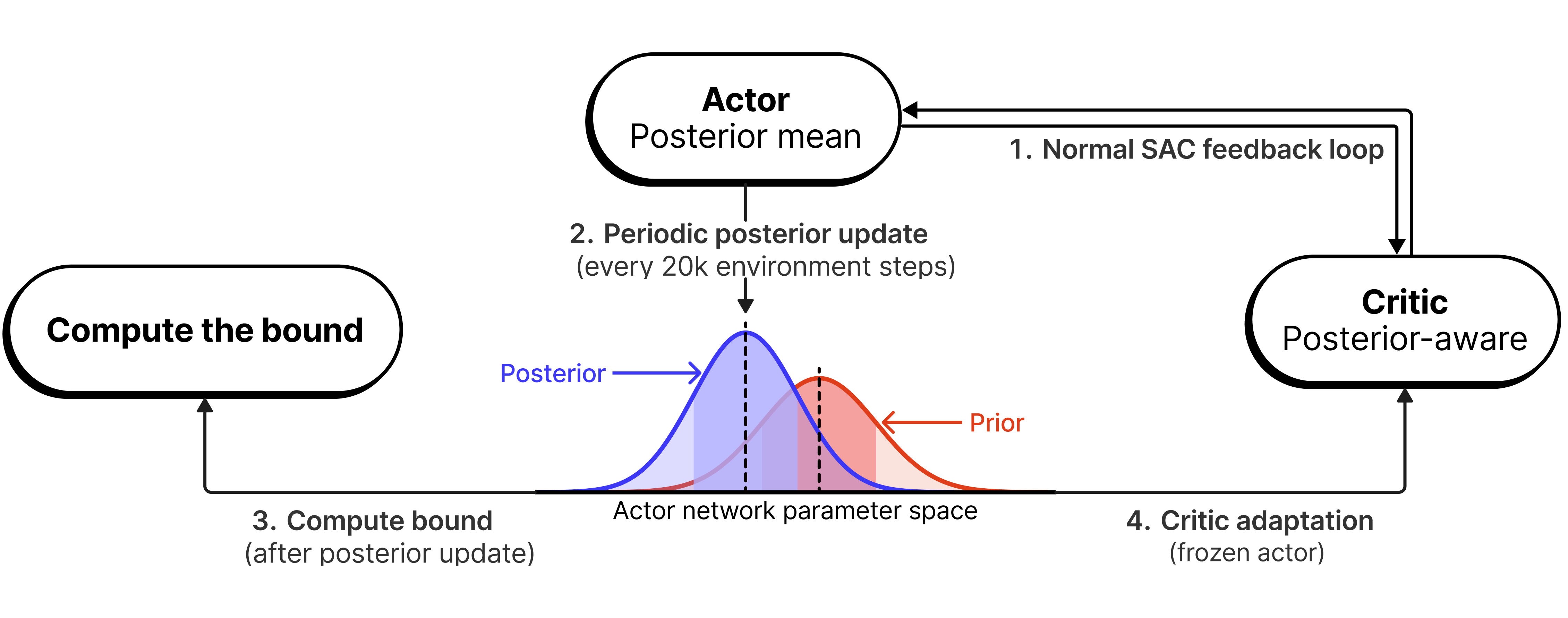}
\caption{High-level illustration of our algorithm (PB-SAC) flow described in Section~\ref{sec:algorithm}, showing the interaction between the actor, critic, and bound computation components. The periodic posterior update (step 2) guides exploration through PAC-Bayes bounds (step 3) while maintaining standard Soft Actor-Critic (SAC) training (step 1) using an adaptive sampling (step 4).}
\label{fig:algorithm-illustration}
\end{figure*}

Deploying reinforcement learning (RL) algorithms in real environments and safety-critical applications requires high confidence that the learned policies will generalize beyond the training data. Although the statistical learning literature has proved rigorous generalization guarantees for supervised learning, and despite progress in generalization guarantees for deep learning models \citep{perez-ortiz2021} in this setting, some challenges prevent extending to other settings; particularly pertaining to data assumptions. 

In RL, an outstanding challenge stems from the fact that algorithms learn from sequential, temporally correlated trajectories that break the standard assumption of independent and identically distributed (\textit{i.i.d.}) data underlying most classical generalization theory. Since RL trajectories exhibit strong temporal dependencies, where future states depend on past actions and the evolving policy, classical analyses that rely on the \textit{ i.i.d.} assumption, and their associated sharp concentration bounds, cannot be directly applied to provide meaningful generalization guarantees in this setting. 

Facing the aforementioned challenge, the PAC-Bayesian computations framework, which uses PAC-Bayes bounds \citep{McAllester1999-os, Catoni2007, Germain2015, Alquier_2024} as optimization objectives and for computing high confidence certificates \citep{perez-ortiz2021} that hold at distribution level, emerges as a promising solution: the analysis maintaining distributions over hypotheses rather than a single model can be extended to temporally dependent data through using appropriate concentration inequalities and new techniques for controlling the exponential moment needed to obtain a new PAC-Bayes bound.

Various approaches have tried to relax the independent data condition, to handle data with dependence structures. Martingale-based methods \citep{Seldin11, Seldin12a} extend PAC-Bayesian analysis to sequential settings by constructing martingale sequences from the data (e.g., value function errors, Bellman residuals), then applying martingale concentration inequalities such as Azuma-Hoeffding's \citep{Azuma1967WEIGHTEDSO} and Freedman's \citep{freedman1975tail}. Although these approaches are mathematically very elegant, RL problems do not naturally yield martingales. 
Since RL data possess Markov structure by design, concentration inequalities tailored specifically for Markov chains can provide a more natural and better suited approach to this domain.

In this work, we present a novel PAC-Bayesian bound for RL that explicitly accounts for Markov dependencies through the chain's mixing time.\footnote{The smallest number of steps required for the chain's distribution to become nearly indistinguishable from its stationary distribution, regardless of its initial state.} 
Our key technical contribution integrates a bounded-differences condition on the negative empirical return with McDiarmid-type concentration inequality for Markov chains \citep{paulin2018}, yielding a bound with explicit constants and improved scaling that avoids the vacuity of previous approaches (cf. Section~\ref{sec:pac-bayes-rl-previous}). We demonstrate that this new bound is not merely a theoretical curiosity by introducing PB-SAC, an actor-critic algorithm that makes practical use of the bound as a live, optimizable objective. The algorithm periodically computes the numerical value of the PAC-Bayesian bound and uses it to guide learning through posterior sampling, transforming the generalization guarantee from a passive post-training evaluation tool into an active component of a learning algorithm. 

Thus, the work at hand proposes the following contributions. From a theoretical perspective, we develop a PAC-Bayesian bound for RL with explicit mixing-time dependence and improved scaling with realistic trajectory lengths and discount factors relative to prior work (cf. Section~\ref{sec:pac-bayes-rl-previous}), representing a step forward in statistical learning theory for sequential temporally-dependent data and RL certificates, specifically via PAC-Bayesian computations to obtain tight certificates. 
Algorithmically, we introduce PB-SAC (Figure~\ref{fig:algorithm-illustration}), the first practical algorithm for Soft Actor-Critic that puts into action a 
PAC-Bayes bound as an optimizable objective in modern deep RL, including key innovations for stable optimization. Empirically, we demonstrate that bound values remain informative across continuous-control tasks while maintaining competitive performance with state-of-the-art methods.

Our approach establishes the first practical PAC-Bayesian computations framework for optimization and certification in modern RL, bridging the gap between learning theory and algorithmic practice in sequential decision-making.

\section{Preliminaries} \label{sec:preliminaries}
In this section, we briefly recall the definitions of reinforcement learning and statistical learning theory concepts we rely on throughout the paper. The exposition in this section is intentionally concise—the goal is to fix notation and state the learning theory principles that underpin our results. 

\subsection{Reinforcement Learning}
\label{sec:rl-theory}

Reinforcement Learning (RL) studies how an \emph{agent} learns to make decisions through sequential interaction with an environment.  Formally, the environment is
modeled as a (possibly unknown) Markov Decision Process (MDP) denoted
$\mathcal{M}=(\mathcal{S},\mathcal{A},\mathbb{P},R,\gamma)$; where
$\mathcal{S}$ is the state space, $\mathcal{A}$ the action space,
$\mathbb{P}(s' \mid s,a)$ the transition kernel, $R(s,a)$ the reward
function bounded in $[0,R_{\max}]$, and $\gamma\in(0,1)$ a discount
factor.  At each time $t$, the agent observes a state $S_t\in\mathcal{S}$,
chooses an action $A_t\in\mathcal{A}$ according to a \emph{policy}
$\pi(a| s)$, 
and then observes a reward $R_{t+1}=R(S_t,A_t)$ and a transition to a new state
$S_{t+1}\sim\mathbb{P}(\cdot\mid S_t,A_t)$.

\vspace{2pt}
\noindent
The learning agent’s objective is, at least in principle, to maximize the \emph{expected discounted return}
\begin{equation}
V_{\pi}(s) = \mathbb{E}_{\pi,\mathbb{P}}\!\bigl[G_t \mid S_t=s\bigr],
\end{equation}
where 
\begin{equation}
G_t = \sum_{k=0}^{\infty}\gamma^{k}\,R_{t+k+1}.
\end{equation}
The function $V_{\pi}(\cdot)$ is the state--value function.  It is well known that the optimal value
function $V^{\star}(s)=\sup_{\pi}V_{\pi}(s)$ satisfies the Bellman
optimality equation
\begin{equation}
V^{\star}(s)=\max_{a\in\mathcal{A}}
\Bigl\{ R(s,a)+\gamma\,\mathbb{E}_{s'\sim\mathbb{P}}\bigl[V^{\star}(s')\mid
s,a\bigr]\Bigr\}.
\end{equation}

RL algorithms either learn directly a policy (policy–gradient and actor–critic methods, see e.g. \citet{sac, Konda-Tsitsiklis-actor-critic, Sutton-PG}), or learn an action–value function $Q_{\pi}(s,a)$ in value–based methods such as Q‐learning and its deep variants \cite{KorayDQN}.  
Model–free approaches do not use an
explicit model of $\mathbb{P}$, while model–based methods leverage or learn a
transition model to plan \citep{SuttonB2018book}. 

\subsection{Probably Approximately Correct (PAC) 
Bounds}
\label{sec:pac}

In a supervised learning task, the instance space is taken to be the product $\mathcal{Z}= \mathcal{X} \times \mathcal{Y}$, where $\mathcal{X} \subseteq \mathbb{R}^d$ is the feature space and $\mathcal{Y}$ the label space; commonly $\mathcal{Y} \subseteq \mathbb{R}$ for regression, and $\mathcal{Y} \subseteq \mathbb{N}$ for classification problems. In this setting, a learning algorithm is a function that takes in a training sample \( S = \{(\bm{x}_i, y_i)\}_{i=1}^m \) and returns a prediction function $f_\theta : \mathcal{X} \rightarrow \mathcal{Y}$, also referred to as a hypothesis, parametrized by $\theta \in \Theta$, where $\Theta \subset \mathbb{R}^p$ denotes the set of all admissible parameter vectors.
An unknown data distribution $\mathcal{D}$ over \( \mathcal{Z} \) is postulated, with $\mathcal{D}_{\mathcal{X}}$ denoting the marginal distribution on $\mathcal{X}$, and training samples \( S = \{(\bm{x}_i, y_i)\}_{i=1}^m \) are such that each pair \( (\bm{x}_i, y_i) \in \mathcal{Z} \) is an independent and identically distributed (i.i.d.) random draw from \( \mathcal{D} \), that is, $S \sim \mathcal{D}^{\otimes m} := \mathcal{D}\otimes\cdots\otimes\mathcal{D}$ ($m$ copies). 
The “quality” of a hypothesis $f_\theta$ is typically assessed through a loss function $\ell: \mathcal{Y} \times \mathcal{Y} \rightarrow \mathbb{R}_+$, which quantifies the discrepancy between predicted and true outputs (labels). Two important functionals associated to a hypothesis are its \textit{true risk} on the distribution $\mathcal{D}$, and its \textit{empirical risk} on the sample \( S \), which are respectively given by
\begin{align}
    \mathcal{L}(\theta) &= \underset{{(x,y)\sim\mathcal D}}{\mathbb E}\bigl[\ell\bigl(f_\theta(\bm{x}),y\bigr)\bigr], \\
    \hat{\mathcal{L}}_S(\theta) &= \frac{1}{m}\sum_{i=1}^m\ell\bigl(f_\theta(\bm{x}_i),y_i\bigr).
\end{align}
In supervised machine learning, the goal is to learn a hypothesis \( f_\theta \) that accurately predicts labels \( y \in \mathcal{Y} \) for given inputs \( \bm{x} \in \mathcal{X} \).
Since training is based on a finite dataset, a central question is: how can we ensure that the learned function \( f_\theta \) will perform well on unseen data?

Probably Approximately Correct (PAC) learning answers this question via probability inequalities saying that, under suitable restrictions, for a chosen level $\delta$ within $(0,1)$ we have
\[
\Pr_{S\sim\mathcal D^m}\bigl\{\mathcal{L}(\theta)\le \hat{\mathcal{L}}_S(\theta)+\epsilon\bigr\} \ge 1-\delta .
\]
Choosing a small $\delta$ then translates into a high-confidence bound for $\mathcal{L}(\theta)$.
Concrete PAC bounds specify how large $m$ must be (or how large the gap $\epsilon$ can be) in terms of properties of the hypothesis class, e.g. VC‑dimension, Rademacher complexity, stability, compression, etc. 

All of the classical PAC bounds treated $f_\theta$ as a deterministic output of the learning algorithm.

\subsection{PAC-Bayesian Bounds}
\label{sec:pac-bayes}

The PAC-Bayesian literature  \citep[see e.g.][for a comprehensive coverage]{Alquier_2024} 
extends PAC learning bounds to analyze distributions over hypothesis, rather than individual hypotheses. PAC-Bayes bounds have been useful in studying generalization for stochastic learning algorithms and prediction rules based on randomizing or averaging.
Let $\Theta$ denote the set of parameters defining a family of prediction functions $\{f_\theta : \mathcal{X} \rightarrow \mathcal{Y}\}_{\theta \in \Theta}$. 
A distribution $\mu \in \mathcal{P}(\Theta)$ is specified over $\Theta$, called a \emph{prior} distribution to indicate being completely or largely independent of data (see below). 
Upon receiving data $S\sim \mathcal{D}^{\otimes m}$, the learning algorithm then selects a \emph{posterior} distribution $\rho \in \mathcal{P}(\Theta)$.
PAC-Bayesian theory provides high-confidence bounds on the population risk \( \mathbb{E}_{\theta \sim \rho}[\mathcal{L}(\theta)] \) in terms of the empirical risk \( \mathbb{E}_{\theta \sim \rho}[\mathcal{\hat{L}_S}(\theta)] \) and an additional term that effectively constrains the complexity of the posterior distribution \( \rho \) via an information quantity measuring the discrepancy between \( \rho \) and \( \mu \), typically the Kullback-Leibler divergence \( \mathrm{KL}(\rho \| \mu) \), but there are other choices. Formally, for any \( \kappa > 0 \) and chosen level $\delta \in (0,1)$, the following inequality holds with probability of at least \( 1 - \delta \) over the random draw of the training sample \( S \), simultaneously for all distributions $\rho$:
\begin{align}
    \underset{\theta\sim\rho}{\mathbb E}
    [\mathcal{L}(\theta)] &\le \underset{\theta\sim\rho}{\mathbb E}[\hat{\mathcal{L}}_S(\theta)] \nonumber \\
    &\hspace{-15pt}
    + \frac{1}{\kappa}\Bigl(\mathrm{KL}(\rho\Vert\mu)+\ln\tfrac{1}{\delta} +\Psi_{\ell,\mu}(\kappa,m) \Bigr)
\end{align}
with
\begin{align}
    &\Psi_{\ell,\mu}(\kappa,m)=\ln \underset{\theta\sim\mu}{\mathbb E}\; \underset{S \sim \mathcal{D}^{\otimes m}}{\mathbb{E}}\Bigl[\exp\bigl(\kappa\,\bigl(\mathcal{L}(\theta)-\hat{\mathcal{L}}_S(\theta\bigr)\bigr)\Bigr].
    \nonumber
\end{align}
Compared with classical PAC guarantees, PAC‑Bayes offers two advantages that are critical for reinforcement learning: (1) \textit{Data‑dependent priors \citep{Parrado12a}} ---when $\mu$ can itself depend on previous data, e.g. earlier tasks or behavioural trajectories \citep{zhang2024statisticalguaranteeslifelongrl}, the bound adapts to the knowledge already acquired, tightening $\mathrm{KL}(\rho\Vert\mu)$; and (2) \textit{Fine‑grained control via $\Psi$} ---by tailoring the concentration inequality used to upper‑bound $\Psi$ one can incorporate dependence structures such as martingales \citep{Seldin12a, Seldin11}, $\beta$‑mixing \citep{Ralaivola09a, Abeles25a} sequences or Markov chains \citep{fard2012, tasdighi2025deepexplorationpacbayes}. The latter is exactly the scenario in which RL trajectories are collected.

\subsection{Previous PAC-Bayesian Bounds in RL}
\label{sec:pac-bayes-rl-previous}

Early efforts to apply PAC-Bayesian theory to RL, notably the works of \citet{fard2010} and \citet{fard2012}, established the framework's viability for model selection in batch settings. However, their bounds suffered from poor scaling with the discount factor, rendering them numerically vacuous for problems with long effective horizons typical in modern RL \citep{tasdighi2025deepexplorationpacbayes}.
Contemporary approaches have repurposed PAC-Bayes bounds for other algorithmic goals. Recent works have used them to derive training objectives for deep exploration \citep{tasdighi2025deepexplorationpacbayes} or as regularizers for lifelong learning \citep{zhang2024statisticalguaranteeslifelongrl}. While these works demonstrate the versatility of PAC-Bayesian computations for algorithm design, they sidestep the original goal of providing tight, computable certificates for modern deep RL agents.  This reveals a fundamental gap: although PAC-Bayesian theory holds promise for certified RL, existing approaches either yield vacuous bounds or repurpose them for different objectives. To the best of our knowledge, no practical algorithm had leveraged non-vacuous PAC-Bayesian bounds as live performance certificates within modern RL frameworks to date.

\section{A new PAC-Bayesian Bound for RL}\label{sec:methods}

We now present our main theoretical contribution: a new PAC-Bayes generalization bound for RL that explicitly accounts for temporal dependencies in data trajectories through the mixing time of the underlying Markov chain.

\subsection{Problem Setup}\label{sec:problem_setup}

As outlined earlier, our objective is to establish a \emph{high‑probability} PAC‑Bayes \textbf{value‑error} bound for a policy operating in a Markov decision process (MDP) when the training data are \emph{dependent} trajectories—possibly gathered under an off‑policy algorithm. In this section, we begin by fixing notation and then present the main results; all proofs are deferred to Appendix~\ref{AppB}.

Let \( \mathcal{M} = (\mathcal{S}, \mathcal{A}, \mathbb{P}, R, \gamma) \) be a discounted MDP, where \( \mathcal{S} \) and \( \mathcal{A} \) are the state and action spaces, \( \mathbb{P} \) is the transition kernel, \( R \) is the reward function such that \( R_t \in [0, R_{\max}] \), and \( \gamma \in (0,1) \) is the discount factor. A policy \( \pi_{\theta} \) induces a (not necessarily \textit{time-homogeneous}) Markov chain \( \xi = (S_1, A_1, R_1, S_2, \dots) \) terminating at state $S_H$ for trajectories of finite horizon \( H \) considered here. Our analysis naturally extends to the infinite-horizon case. The distribution of this chain is determined by the initial state distribution \( \nu \), transition probabilities \( \mathbb{P} \), stochastic policy \( \pi_{\theta} \), and random rewards $R$ as described next.

We assume access to a dataset \( \mathfrak{D} = \{ \xi^{(1)}, \dots, \xi^{(T)} \} \) of \( T \) trajectories (i.e., \( N = H T \) transitions in total), collected using a behavior policy \( \pi_b \). An evaluation policy \(\pi_{\theta}\) is parameterized by \( \theta \in \Theta \), the parameters \( \theta \) are drawn from a distribution \( \rho \in \mathcal{P}(\Theta) \), where \( \Pi = \{ \pi_{\theta} : \theta \in \Theta \} \) denotes the policy class. Henceforth, we write \( \xi \sim \mathcal{M} \) (\textbf{resp.} $\mathfrak{D} \sim \mathcal{M}^{(T)}$) to denote sampling a trajectory (\textbf{resp.} a set $\mathfrak{D}$ of $T$ trajectories) under the environment dynamics \( \mathbb{P} \), initial state distribution $\nu$, policy \( \pi_b \), and reward function \( R \), in order to avoid notational overload. Throughout, \emph{dependence} refers strictly to \emph{intra-trajectory} dependence: the transitions within a single episode $(S_1, A_1, R_1, S_2, \ldots, S_H)$ are sequentially correlated through the Markov property, since each $S_{t+1}$ is determined by $(S_t, A_t)$.

We define the discounted return of a trajectory \(\xi \) and the value of policy \( \pi_{\theta} \) respectively as:
\begin{align}
G(\xi) = \sum_{k=0}^{H-1} \gamma^k R_{k+1}
\quad \text{and} \quad
V_{\pi_{\theta}} = \mathbb{E}_{\xi \sim \mathcal{M}}[G(\xi)]
\end{align}
We now define the expected (true) loss and its empirical counterpart using importance sampling to account for the off-policy data distribution:
\begin{align} \label{eq:true_loss}
\mathcal{L}(\theta) 
&= - \underset{\xi \sim \mathcal{M}}{\mathbb{E}} \left[ \frac{\pi_\theta(\xi)}{\pi_b(\xi)} G(\xi) \right] 
\\
\label{eq:emp_loss}
\hat{\mathcal{L}}_{\mathfrak{D}}(\theta) 
&= - \frac{1}{T} \sum_{j=1}^T w_j(\theta)G(\xi^{(j)}),
\end{align}
where $w_j(\theta) = \frac{\pi_\theta(\xi^{(j)})}{\pi_b(\xi^{(j)})}$ is the likelihood ratio of trajectory $\xi^{(j)} \sim \pi_b$. 
Notice that the estimator $\hat{\mathcal{L}}_{\mathfrak{D}}(\theta)$ is unbiased:
$$
\mathcal{L}(\theta) = \underset{\mathfrak{D} \sim \mathcal{M}^{(T)}}{\mathbb{E}} [\hat{\mathcal{L}}_{\mathfrak{D}}(\theta)].
$$
To ensure computational stability, we assume the weighted rewards are bounded such that the weighted return remains within the range consistent with $R_{\max}$ (e.g., via clipping large importance weights, a standard practice in off-policy evaluation).

\begin{remark}[Weight clipping preserves the certificate]\label{rem:clipping}
Note that clipping the importance weights as $w^c_j = \min(w_j, M)$ introduces bias, but this bias is strictly \emph{pessimistic} and does not invalidate the bound. Since $G(\xi) \ge 0$ and $w^c_j \le w_j$, the clipped true loss satisfies $\mathcal{L}(\theta) \le \mathcal{L}_c(\theta)$, where $\mathcal{L}_c(\theta) = -\mathbb{E}[w^c G(\xi)]$. Applying Theorem~\ref{Th:PBRL} to $\mathcal{L}_c$ and chaining gives $\mathcal{L}(\theta) \le \hat{\mathcal{L}}^c_{\mathfrak{D}}(\theta) + \text{complexity term}$, so the certificate remains formally valid, albeit more conservative. A complete derivation is provided in Appendix~\ref{app:proof-c}.
\end{remark}

Following PAC-Bayesian folklore, we endow the parameter space $\Theta$ with a distribution $\mu \in \mathcal{P}(\Theta)$ playing the role of a prior selected independently of data (or partial dependence on it, cf. \citet{Parrado12a, perez-ortiz2021}), and a posterior $\rho \in \mathcal{P}(\Theta)$ chosen after observing $\mathfrak{D}$. This formalism enables reasoning about randomized policies drawn from $\rho$ with guarantees based on their divergence from $\mu$. Crucially for our analysis, changing one transition in the data results in a quantifiable bounded effect on the empirical loss defined in \eqref{eq:true_loss}:

\begin{lemma}[Bounded differences]\label{lem:bd}
Let $\theta\in\Theta$ be fixed policy parameters, and let $\mathfrak D$ and $\bar{\mathfrak D}$ be sets of trajectories. 
Then, there exists $c\in\mathbb{R}_{+}^{H\times T}$ such that
\begin{equation}
\label{eq:bd}
\hspace{-5pt}
\bigl|
\hat{\mathcal L}_{\mathfrak D}(\theta)
-
\hat{\mathcal L}_{\bar{\mathfrak D}}(\theta)
\bigr|
\le
\sum_{h'=1}^{H}\sum_{j'=1}^{T}
     c_{(h',j')}\,
     \mathbb I\Bigl[\xi^{(j')}_{h'}\neq\bar\xi^{(j')}_{h'}\Bigr]
\end{equation}
\end{lemma}
\noindent
Intuitively, $c_{(h,j)}$ quantifies the \emph{transition‑level influence}
of altering the $(h,j)$‑th state–action–reward tuple on the average return.
A complete derivation---including a justification of why this bound covers propagation of the perturbed transition to future steps---is given in Appendix~\ref{app:proof-c}. The result yields the explicit vector $c\in\mathbb{R}_{+}^{H\times T}$ to be used, namely
\begin{align}\label{eq:vec-c}
c_{(h,j)}=\frac{\gamma^{h-1}R_{\max}}{T},
\end{align}
and its norm (Appendix~\ref{app:norm-c}) used in Theorem~\ref{Th:PBRL} below:
\begin{align}\label{eq:vec-c-norm}
\|c\|^{2}=\frac{R_{\max}^{2}}{T(1-\gamma^{2})}\bigl(1-\gamma^{2H}\bigr).
\end{align}
\subsection{Main Result}

The above bounded-differences property (Lemma~\ref{lem:bd}) is precisely what allows us to apply concentration inequalities to dependent data. Importantly, the proof strategy is leveraging \citet{paulin2018}'s extension of McDiarmid's inequality to Markov chains, which provides concentration for functions satisfying bounded differences on Markovian sequences. The key insight is that while the transitions are temporally dependent, the bounded-differences condition with explicit constants $\|c\|^2$ enables us to control how perturbations propagate through the dependency structure.

Applying \citet{paulin2018}'s concentration result yields a tail bound on the deviation $\mathcal{L}(\theta)-\hat{\mathcal{L}}_{\mathfrak{D}}(\theta)$ (see inequality \eqref{eq:mgf}) that depends explicitly on the mixing time $\tau_{\min}$ of the policy-induced Markov chain. $\tau_{\min}$ is the smallest number of steps after which the distribution of the chain’s state is, in a statistical sense, nearly indistinguishable from its long‑run or stationary distribution in total variation distance, no matter where the chain started. In other words, it measures how quickly the chain “forgets” its initial state and becomes well mixed. 
Combining the tail bound from inequality \eqref{eq:mgf} with the standard PAC-Bayesian change-of-measure technique gives our main result:

\begin{theorem} \label{Th:PBRL}
    Let the weighted reward function be bounded in $[0, R_{max}]$ and let $\mathcal{M}$ be a (not necessarily time-homogeneous) Markov Decision Process (MDP) induced by any policy $\pi_b$ such that it satisfies $\tau_{\min}< +\infty$. For any prior $\mu$ over $\Theta$, and any $\delta\in(0,1)$, with probability at least $1-\delta$ over the sample $\mathfrak{D}$ of $T$ trajectories with time horizon $H$, simultaneously for all posterior distributions $\rho$ over $\Theta$, we have:
    \begin{align}
        \mathbb{E}_{\theta \sim \rho}[\mathcal{L}(\theta)] 
        &\leq \mathbb{E}_{\theta \sim \rho}[\hat{\mathcal{L}}_{\mathfrak{D}}(\theta)] \; \nonumber \\[1mm]
        &\hspace{-45pt}
        + \sqrt{\frac{R_{\max}^2 (1-\gamma^{2H})}{T(1 - \gamma^2)} \tau_{\min} \left( \mathrm{KL}(\rho \| \mu) + \ln\frac{\sqrt{2}}{\delta} \right)}.
        \label{eq:main}
    \end{align}
\end{theorem}
In particular, this inequality can be applied to posteriors $\rho$ chosen after interacting with the environment.

The bound in Theorem~\ref{Th:PBRL} can be straightforwardly converted to a PAC-Bayes lower bound on the true expected value function $\mathbb{E}_{\theta \sim \rho}\left[V_{\pi_{\theta}}\right]$ (see Appendix~\ref{AppB}, inequality \eqref{eq:main_lower_bound}), using the fact that $\mathcal{L}(\theta) = -V_{\pi_{\theta}}$ (\eqref{eq:true_loss}, \eqref{eq:emp_loss}) and a simple rearrangement of terms. The true expected value is lower-bounded by an empirical estimate minus an uncertainty term that accounts for limited data ($1/T$), temporal correlations ($\tau_{\min}$), and posterior complexity ($\mathrm{KL}(\rho\|\mu)$). This interpretation suggests a natural approach to policy optimization: select the posterior $\rho$ that maximizes this lower bound (equivalently, minimizes the upper bound in inequality \eqref{eq:main}). Such a strategy would automatically balance exploitation (maximizing the empirical value) and theoretically-justified exploration (accounting for uncertainty). 

\subsection{Key Improvements and Discussion}

\paragraph{Improved Horizon Dependence.} Previous PAC-Bayes bounds for RL suffered from prohibitive scaling with the effective horizon $1/(1-\gamma)$. \citet{fard2012} bounded the value error via the Bellman error, a conversion (their Eq.~(7)) that degrades sample complexity to $\mathcal{O}((1-\gamma)^{-4})$. \citet{tasdighi2025deepexplorationpacbayes} face comparable constraints. For $\gamma = 0.99$, such scaling renders these bounds practically vacuous. In contrast, our transition-level analysis bounds the value error directly via the concentration of discounted returns, achieving a  scaling of $\mathcal{O}((1-\gamma)^{-1})$, significantly tighter. This improvement arises because our sensitivity term scales with $\sum^H \gamma^{2h} \approx (1-\gamma)^{-1}$ (Appendix~\ref{app:proof-c}), avoiding the $(1-\gamma)^{-2}$ penalty inherent to the two-step derivation from the Bellman error. Specifically, we require only $T \gtrsim \frac{R_{\max}^{2}\tau_{\min}}{1-\gamma^{2}}$ trajectories.

\paragraph{Explicit Mixing Time Dependence.} Unlike bounds for general mixing processes that depend on abstract coefficients, our result features explicit dependence on $\tau_{\min}$, the mixing time of the policy-induced Markov chain. This quantity is well studied and has a clear interpretation in terms of environment dynamics. While extending concentration inequalities from independent to mixing settings is sometimes viewed as straightforward, the reality involves several subtle challenges. Although the distribution of $X_{t}$ becomes close to the stationary distribution $\pi$ after $\tau_{\min}$ steps, this does not guarantee that $X_{t}$ is approximately independent of $X_0$, let alone that consecutive states $X_{t}$ and $X_{t+1}$ are independent. The dependence between observations decays exponentially with the mixing time, but achieving approximate independence typically requires waiting several multiples of $\tau_{\min}$, not just $\tau_{\min}$ itself.

Moreover, applying McDiarmid-type inequalities to RL requires establishing that the negative empirical return satisfies a bounded-differences condition with explicit, tractable constants (Lemma~\ref{lem:bd}). This necessitates careful analysis of how perturbations at individual transitions propagate through the sequential structure to affect future states and rewards. Our transition-level analysis directly addresses this challenge by quantifying the error propagation through the Markov dependency structure. The proof is provided in Appendix~\ref{App:error_propagation}.

\paragraph{Practical Tractability and Robustness.} The bound requires estimating $\tau_{\min}$, which presents both computational and robustness considerations. We estimate mixing time using autocorrelation decay of the reward signal, as this can be computed from streaming trajectories without storing full visitation counts. An alternative approach utilizing the pseudo-spectral gap \cite{karagulyan2025empiricalpacbayesboundsmarkov} could offer tighter bounds when the gap can be estimated. However, constructing fully empirical estimators for these gaps currently requires finite state spaces or specific parametric assumptions (e.g., AR(1)), rendering them intractable for general deep RL in continuous environments.

The estimation approach used here is robust to errors in a specific direction: if we overestimate $\tau_{\min}$, our bound remains valid but becomes looser, which is harmless in practice. However, underestimation can be problematic as it leads to overconfidence. To mitigate this, we compute autocorrelation from additional sources (e.g., state features) to cross-validate mixing time estimates. In practice, we err on the side of caution by using a conservative initial estimate and taking the maximum with the latest autocorrelation estimation, trading some tightness for reliability. Empirical analysis can be found in Section~\ref{sec:mixing_time_analysis}.

\section{PB-SAC: A Practical Algorithm for PAC-Bayesian Reinforcement Learning}\label{sec:algorithm}

Translating the theoretical PAC-Bayes bound into a practical learning algorithm requires addressing challenges of maintaining posterior distributions over policy parameters in deep RL. The novel algorithm proposed here, \textbf{PAC-Bayes Soft Actor-Critic (PB-SAC)}, builds upon SAC while integrating PAC-Bayesian bounds and posterior-guided exploration (pseudo-code and illustrative figure in Appendix~\ref{algo:PB-SAC}).

\paragraph{Motivation and notion of safety.}
The primary motivation for PB-SAC is \emph{deployment safety}: providing a formal guarantee that a trained policy will perform well on unseen trajectories, not merely on those encountered during training. This notion of safety is distinct from ``Safe RL'' in the obstacle-avoidance sense; it concerns the \emph{reliability of empirical performance estimates}. Standard SAC can achieve high training scores while silently overfitting with no mechanism to detect the gap before deployment. PB-SAC addresses this by computing a rigorous PAC-Bayesian lower bound on the true expected return throughout training (solid lines in Figure~\ref{fig:pac_bayes}), which acts as a deployment certificate. A \emph{small gap} between the empirical return and this lower bound is evidence that the empirical estimate reliably predicts future performance on unseen data; a \emph{large gap} is a quantifiable warning against deployment. Actively optimizing this lower bound---rather than treating it as a passive post-hoc tool---allows PB-SAC to transform the generalization guarantee into a live component of the learning process.

PB-SAC's central insight is that policy parameters always represent the posterior mean, updated through standard SAC gradients during regular training, ensuring most learning follows proven SAC dynamics while periodic PAC-Bayesian updates refine the posterior and guide exploration. Following \citet{zhang2024statisticalguaranteeslifelongrl}, we maintain a diagonal Gaussian posterior $\rho(\theta) = \mathcal{N}(\upsilon, \operatorname{diag}(\sigma^2))$ over flattened policy parameters with learnable mean $\upsilon$ and standard deviation $\sigma$. The prior $\mu$ undergoes periodic moving average updates toward the current posterior with linear decay, preventing KL divergence explosion while preserving bound validity and maintaining exploration capability as the prior stabilizes during training.

\subsection{Posterior-Guided Exploration}

During exploration, PB-SAC leverages the posterior distribution to implement uncertainty-driven exploration. Rather than standard $\epsilon$-greedy exploration, the algorithm samples policies from the posterior and selects actions that maximize Q-values under posterior uncertainty: 
$$\theta_{explore} \gets \arg \max_{\theta_i \in \mathfrak{T}} Q(s, \pi_{\theta}(s))$$ 
such that $\mathfrak{T}$ is a set of policy parameter vectors drawn from $\rho$. Crucially, $\mathfrak{T}$ is a \emph{finite} sample: at each exploration step we draw $|\mathfrak{T}|$ candidate parameter vectors from $\rho$ and evaluate $Q(s,\pi_{\theta_i}(s))$ for each via a single forward pass through the frozen critic, then take the argmax. The optimisation is therefore $\mathcal{O}(|\mathfrak{T}|)$ critic evaluations---no continuous search over $\Theta$ is required, making PGE fully tractable. The posterior standard deviation $\sigma$ governs the diversity of candidates: a wider posterior encourages exploration of more distant parameter regions, while a narrow posterior concentrates candidates near the current mean, naturally interpolating between exploration and exploitation as training progresses. This posterior-guided exploration naturally balances exploitation of the mean policy (the current actor) when it yields the highest value, with exploration of alternative policies in regions of high posterior uncertainty where potentially superior policies may exist. This approach provides theoretical grounding for the exploration strategy through PAC-Bayesian computations, ensuring that exploration is guided by uncertainty quantification rather than arbitrary randomness.

\subsection{Alternating Optimization via PAC-Bayes-$\lambda$}

The core challenge in optimizing our PAC-Bayesian bound (Theorem~\ref{Th:PBRL}) lies in its structure: while the KL divergence term $\mathrm{KL}(\rho\|\mu)$ is convex in its first argument (the posterior parameters), the square root composition in the bound is not guaranteed to be convex, potentially leading to optimization difficulties. To address this challenge, we use variational approximation. By applying the identity $\sqrt{x} = \inf_{\lambda > 0} (\frac{x}{2\lambda} + \frac{\lambda}{2})$, we recover the convex objective $\mathcal{J}(\rho, \lambda)$ in \eqref{eq:pb-lambda-obj}. This objective has a structure remarkably similar to the PAC-Bayes-$\lambda$ bound of \citet{thiemann17a}, derived from the classical PAC-Bayes-kl bound for majority vote learning (see, e.g., \citet{Germain2015}, Corollary~21; \citet{Seeger2002, Langford05a}). Both approaches share the key insight of introducing an auxiliary trade-off parameter $\lambda$ that transforms non-convex objectives into quasi-convex relaxations amenable to stable alternating optimization.

Our implementation optimizes the PAC-Bayes-$\lambda$ objective shown next to obtain the posterior $\rho$, and then substitutes this optimized posterior into inequality \eqref{eq:main}. 
\begin{align}\label{eq:pb-lambda-obj}
    &\hspace{-7.5pt}
    \underbrace{
    \mathbb{E}_{\theta \sim \rho}[\hat{\mathcal{L}}_{\mathfrak{D}}(\theta)] + \frac{\|c\|^2 \tau_{\min} \left( \mathrm{KL}(\rho \| \mu) + \ln\frac{\sqrt{2}}{\delta} \right)}{2\lambda}
    + \frac{\lambda}{2} 
    }
    \\
    &\hspace{85pt} \mathcal{J}(\rho, \lambda) \nonumber
\end{align}
This decomposition enables stable alternating optimization: we optimize posterior parameters for fixed $\lambda$, then compute the optimal closed form $\lambda^*$ for fixed posterior parameters (see equation~\eqref{eq:optimal_lambda}), ensuring convergence and maintaining the theoretical guarantee of our PAC-Bayesian bound. As shown in Appendix~\ref{App:optimize_lambda}, optimizing this objective yields a tighter certificate.

\subsection{Policy-level REINFORCE trick}

Even with the above decomposition, we cannot straightforwardly compute the gradient 
$$\nabla_{(\upsilon, \sigma)}\mathbb{E}_{\theta \sim \rho}\left[\hat{\mathcal{L}}_{\mathfrak{D}}(\theta)\right]$$ 
because sampling cannot occur inside the gradient operation. To address this challenge, we employ a two-stage approach. First, we collect fresh rollouts during the PAC-Bayes update cycle using the mean policy, then sample policies from $\rho$ and evaluate their discounted returns on these rollouts using importance sampling. Importantly, these rollouts are evaluated as \emph{intact trajectories}---not as randomly shuffled transitions from the replay buffer---so that the intra-trajectory dependence structure required by Theorem~\ref{Th:PBRL} is preserved (see Section~\ref{sec:problem_setup}). This ensures that bound computation accounts for distributional shift between the data-generating policy and the current posterior distribution. With estimated returns for each sampled policy, we apply the log-likelihood trick (REINFORCE, \citet{Williams1992}) at the policy level rather than the traditional action level. We prove this extension in Appendix~\ref{App:REINFORCE}. This technique allows us to exchange gradient and expectation operations: $$\mathbb{E}_{\theta \sim \rho}\left[\nabla_{(\upsilon, \sigma)} \log \mathbb{P}_{\upsilon, \sigma}(\theta) \hat{\mathcal{L}}_{\mathfrak{D}}(\theta)\right],$$ 
yielding a tractable sampling-based gradient estimator.

\subsection{Adaptive Sampling}

Our most critical innovation addresses the actor-critic misalignment problem that arises after PAC-Bayesian updates. When the posterior mean shifts significantly, critics become misaligned with the new policy distribution, creating instability in training. Without this mechanism, ablation studies (Figure~\ref{fig:ablation_adaptation}) reveal a characteristic sawtooth pattern: performance drops sharply immediately after each PAC-Bayesian update, then gradually recovers until the next update cycle.

PB-SAC resolves this misalignment issue through adaptive sampling: immediately following PAC-Bayesian updates, we freeze the actor and employ high-rate posterior sampling (256 samples) to expose critics to the full posterior distribution for recalibration, then resume efficient learning with minimal sampling (1 sample, the posterior mean), achieving computational efficiency without sacrificing stability.

\section{Experiments}

We now turn to empirical validation across representative continuous control tasks to demonstrate that PAC-Bayesian computations for RL deliver on this promise while maintaining competitive learning performance.

\subsection{Experimental Design}
We evaluate PB-SAC on four MuJoCo continuous control environments \citep{gymnasium, DMC} spanning different complexity levels: HalfCheetah, Ant, Hopper, and Walker2d. Our evaluation tracks two critical metrics: PAC-Bayesian certificate evolution (Figure~\ref{fig:complete_analysis}, right) and learning performance relative to baselines (Figure~\ref{fig:complete_analysis}, left).

We run PB-SAC with PAC-Bayesian updates every 20,000 environment steps, employing our adaptive sampling (256 posterior samples during critic adaptation, posterior mean alone during regular training). We compare against vanilla SAC \citep{sac} using identical network architectures and include PBAC \citep{tasdighi2025deepexplorationpacbayes}, a PAC-Bayes deep exploration method, despite its primary strengths manifesting in sparse reward settings. We justify this comparison by evaluating our algorithm in the same sparse-reward setting, with results reported in Appendix~\ref{App:more_results}. Additional ablation studies examining the impact of our algorithmic components (adaptive sampling, posterior-guided exploration, and mixing time estimation) are also provided in Appendix~\ref{App:more_results}. Table~\ref{tab:hyperparameters} summarizes all hyperparameters.

\begin{figure*}[t]
      \centering
      \begin{subfigure}[t]{0.59\textwidth}
          \centering
          \includegraphics[width=\linewidth]{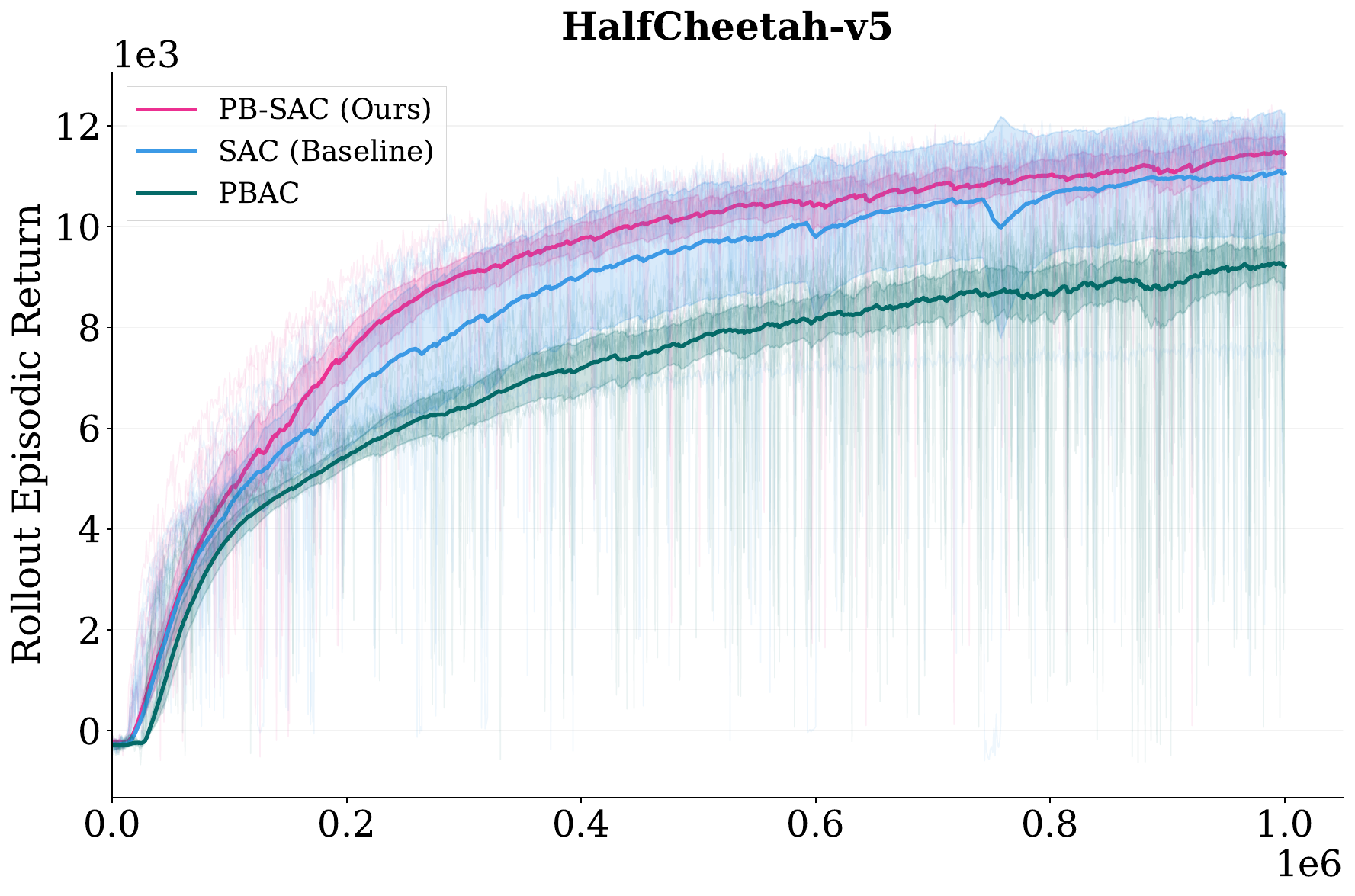}
          \label{fig:comp_halfcheetah}
          \includegraphics[width=\linewidth]{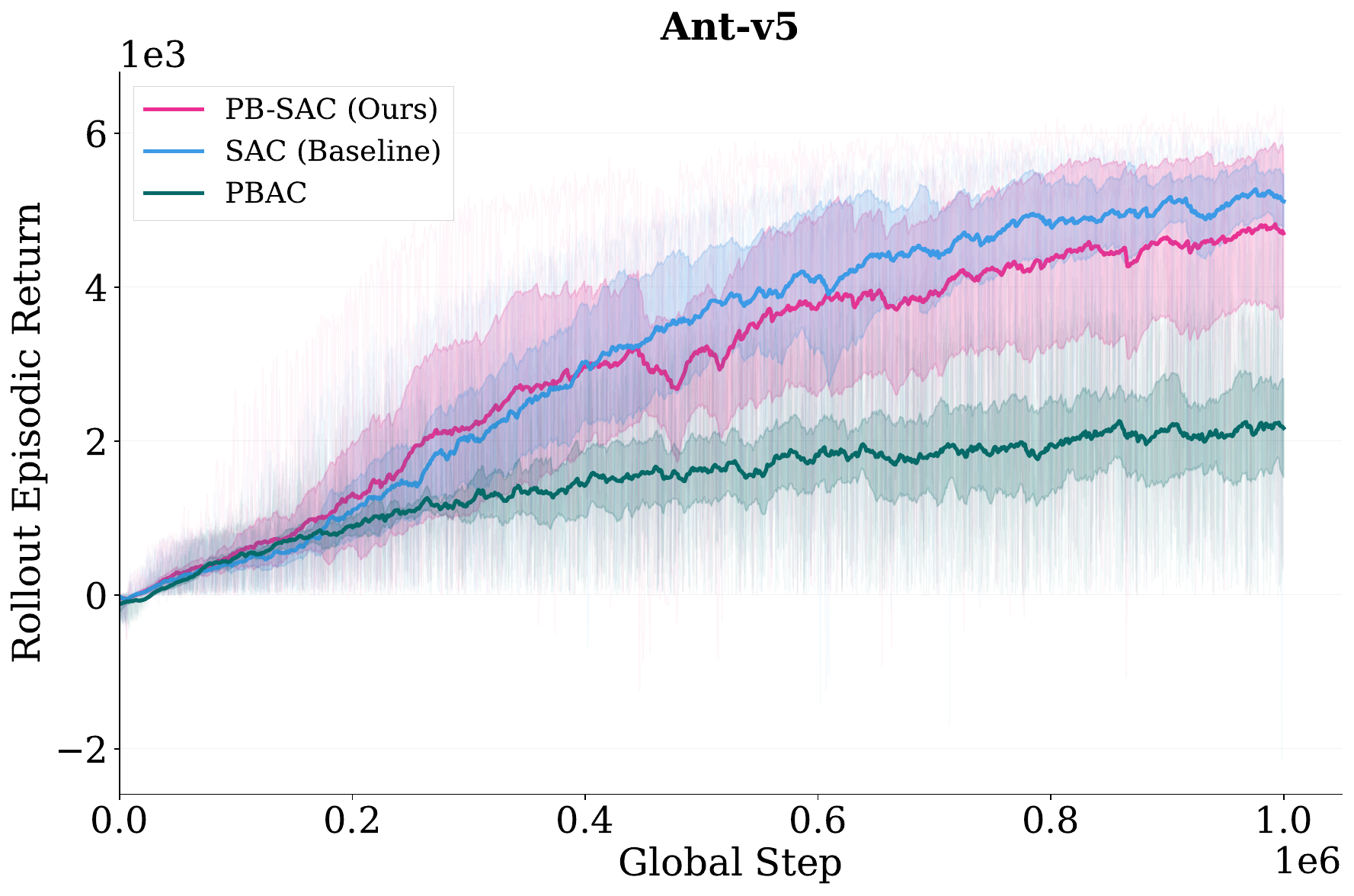}
          \label{fig:comp_ant}
          \caption{Algorithm comparisons}
          \label{fig:comparisons}
      \end{subfigure}
      \hfill
      \begin{subfigure}[t]{0.39\textwidth}
          \centering
          \includegraphics[width=\linewidth]{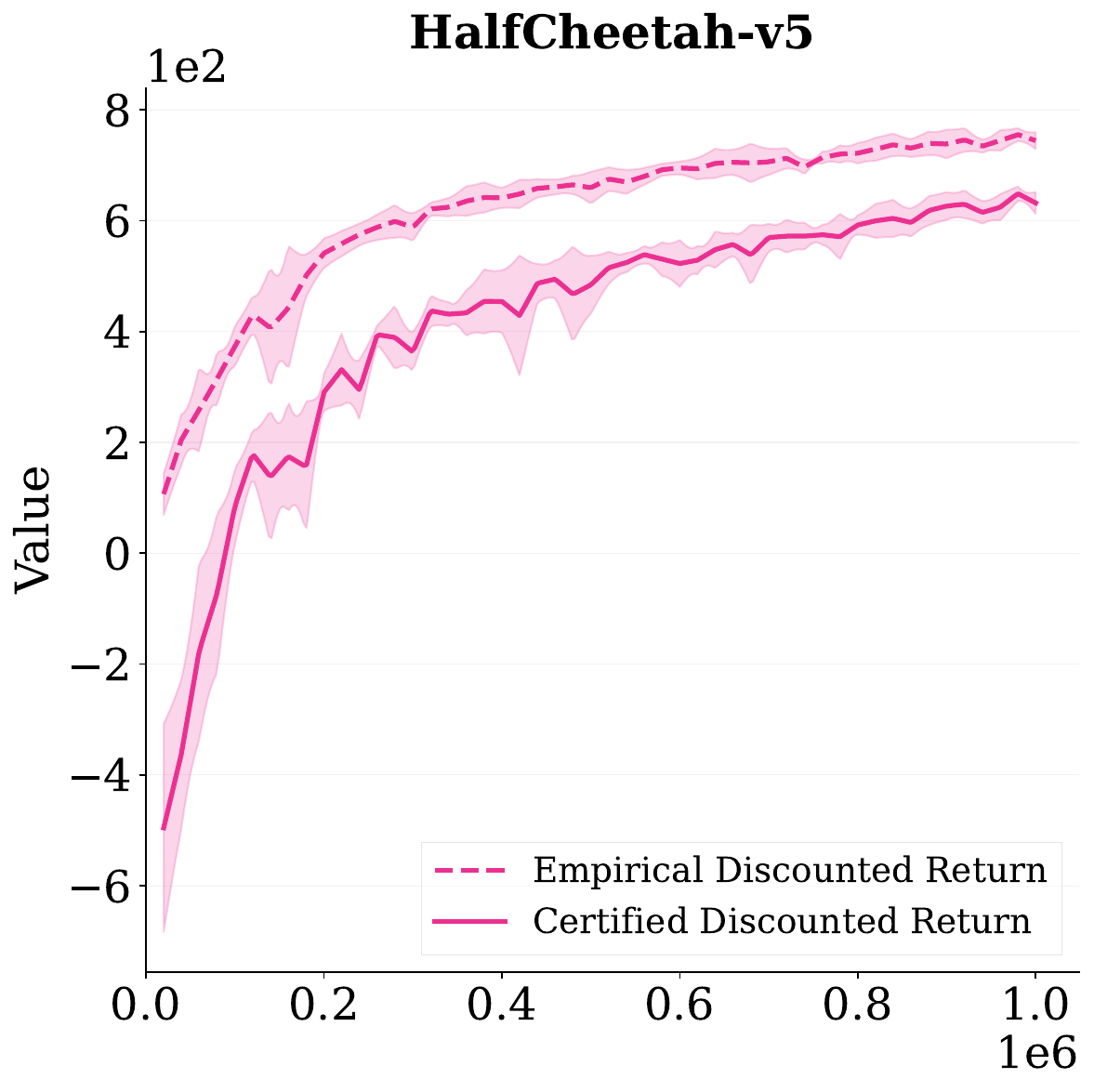}
          \label{fig:pb_halfcheetah}
          \includegraphics[width=\linewidth]{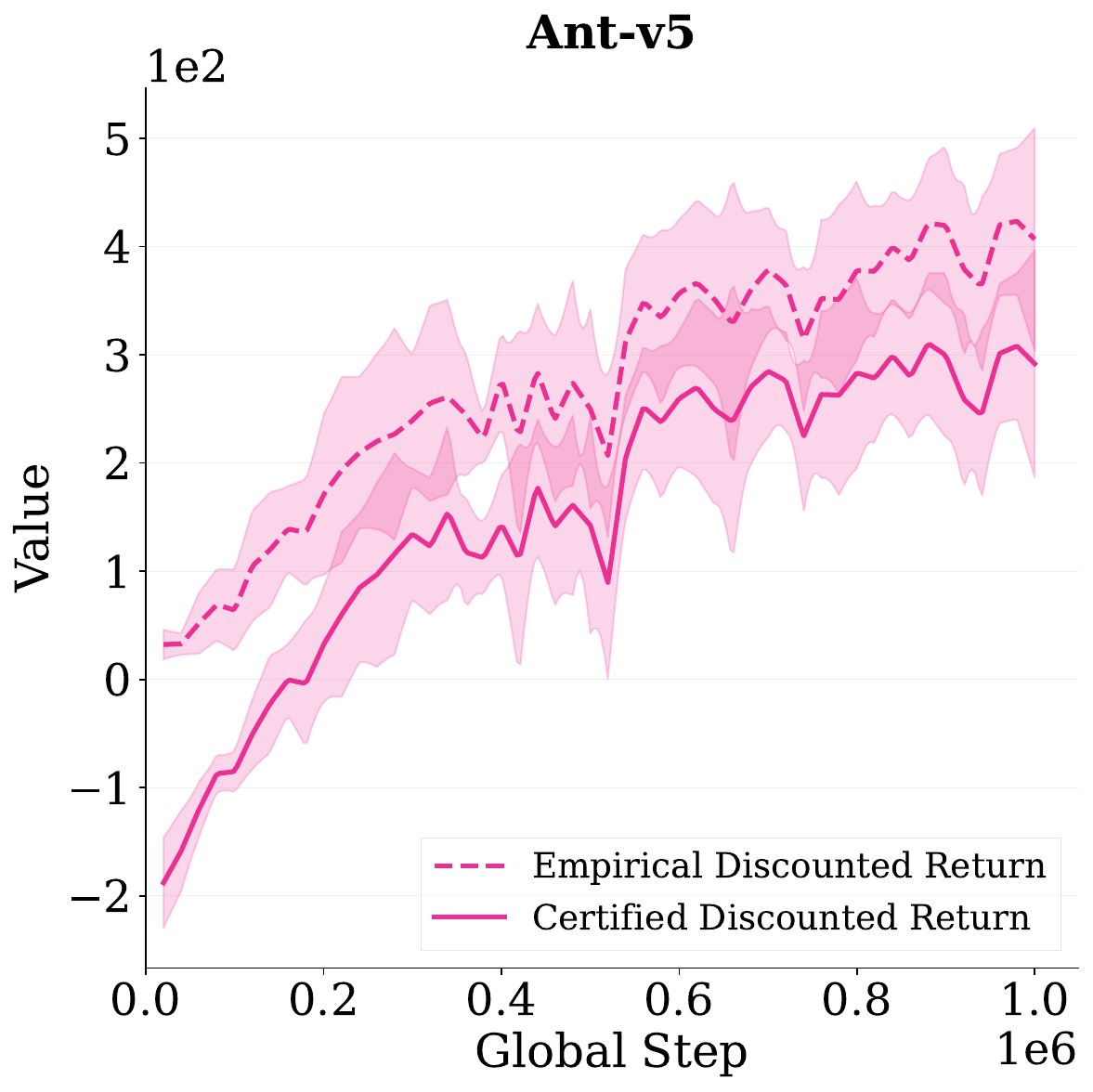}
          \label{fig:pb_ant}
          \caption{PAC-Bayes analysis}
          \label{fig:pac_bayes}
      \end{subfigure}
      \caption{\textbf{(a)} Performance comparison between our \coloredsquare{EC3091} PB-SAC, its baseline \coloredsquare{3C9AE6} SAC, and \coloredsquare{056A68} PBAC from~\citet{tasdighi2025deepexplorationpacbayes}; \textbf{(b)} PAC-Bayes analysis of PB-SAC across environments. The empirical discounted return (dashed line) corresponds to $\mathbb{E}_{\theta \sim \rho}[-\hat{\mathcal{L}}_{\mathfrak{D}}(\theta)]$, and the certified discounted return (solid line) corresponds to the lower bound on $\mathbb{E}_{\theta \sim \rho}[-\mathcal{L}(\theta)]$ provided by Theorem~\ref{Th:PBRL} (after rearranging the terms).}
      \label{fig:complete_analysis}
  \end{figure*}

\subsection{The Tightening of Performance Certificates}

Figure~\ref{fig:pac_bayes} reveals the most compelling aspect of our results: the PAC-Bayesian bounds consistently tighten throughout training, tracking the improvement in learned policies. In HalfCheetah, the bounds become informative within 100\textit{k} steps and continue tightening as performance improves, demonstrating that our certificates provide genuine confidence estimates for high-performing policies rather than vacuous guarantees. Particularly noteworthy is the bounds' behavior during performance fluctuations: they appropriately widen during periods of high variance whilst tightening when performance stabilizes (e.g., HalfCheetah between 100\textit{k} and 200\textit{k} steps). This stability stems from decaying moving average prior updates, which prevent KL explosion in early stages; without this mechanism, KL explosion hinders learning by pulling the posterior back towards the prior rather than allowing improvement.

Hopper, Walker2d (Figure~\ref{fig:complete_analysis_appendix}), and Ant exhibit similar bound evolution, with certificates tracking performance and reflecting meaningful task completion. Notably, Ant—the most challenging test case due to its 3D dynamics and large state space—demonstrates that our bounds remain meaningful even for complex, high-dimensional continuous control tasks. The bounds capture the qualitative behavior predicted by our theory.

\subsection{Empirical Validation Across Environments}

The learning curves in Figure~\ref{fig:comparisons} address a fundamental concern about PAC-Bayesian RL: whether theoretical guarantees necessitate performance sacrifices. Our results demonstrate that PB-SAC maintains competitive performance with SAC across all tested environments while providing formal certificates. This validates our architectural choices, particularly the policy-posterior synchronization that ensures most learning follows proven SAC dynamics, with periodic PAC-Bayesian updates serving as refinements rather than disruptions, guided by the adaptive sampling that prevents critic destabilization (see Figure~\ref{fig:ablation_adaptation}).

PBAC, designed specifically for deep exploration through multi-objective optimization (diversity, coherence, and propagation), underperforms both methods on dense-reward tasks. To ensure a fair comparison—since PBAC's primary strengths manifest in sparse-reward settings—we evaluated PB-SAC on the sparse-reward task Ant (very delayed) from PBAC's original work \citep{tasdighi2025deepexplorationpacbayes}. Results (Appendix~\ref{App:more_resuls_sparse_rewards}) demonstrate that PB-SAC ultimately surpasses PBAC while outperforming vanilla SAC, confirming that our posterior-guided exploration provides meaningful benefits in sparse-reward settings without requiring PBAC's high computational complexity (ensemble of critic networks). These results establish that our framework provides non-vacuous certificates and competitive performance across both dense and sparse reward tasks.

\subsection{Mixing Time Analysis}\label{sec:mixing_time_analysis}

A critical component of the  new framework is mixing time estimation and its effect on bound evaluation.  

Underestimation of $\tau_{\min}$ can lead to overconfident bounds. To assess robustness, we conducted experiments with fixed mixing time estimates ranging from 1 (fastest mixing, potential underestimation) to 1000 (conservative overestimation for MuJoCo tasks) without dynamic updates during training. Figure~\ref{fig:mixing_time_analysis} illustrates these results on HalfCheetah. As expected, smaller mixing time values produce tighter bounds,\footnote{Using a mixing time equal to 1 is equivalent to an i.i.d. bound, more specifically to one that can be derived from the original McDiarmid's inequality \citep{McDiarmid_1989} (not the one we used). This demonstrates how tight such bounds can be.} while larger values yield more conservative certificates. 
However, a notable finding is that overestimation proves less problematic than anticipated: while the bound remains theoretically valid but looser, the algorithm effectively adapts to this conservatism, maintaining meaningful certificates even with substantial overestimation. This robustness suggests that erring on the side of caution with conservative mixing time estimates is a viable practical strategy. 

In practice, we mitigate underestimation risk by cross-checking autocorrelation estimates from multiple signals (rewards and state features) and enforcing monotonicity by taking the running maximum across updates. These results demonstrate that our framework provides reliable certificates across a wide range of mixing time specifications.

\section{Conclusion}

This work addresses a fundamental challenge at the intersection of learning theory and deep reinforcement learning: deriving non-vacuous generalization guarantees for sequential decision-making under temporally correlated data, and turning those guarantees into a live training objective.

Three challenges had to be overcome. First, RL trajectories are Markov-dependent, breaking classical i.i.d. assumptions; a bounded-differences analysis of the empirical return (Lemma~\ref{lem:bd}) integrated with Paulin's concentration inequality for Markov chains~\citep{paulin2018} addresses this, with the bound carrying explicit dependence on the mixing time $\tau_{\min}$. Second, prior bounds scaled as $\mathcal{O}((1-\gamma)^{-4})$, rendering them vacuous in practice; a direct bound on the value error through discounted returns, rather than through the Bellman error, brings this to $\mathcal{O}((1-\gamma)^{-1})$. Third, the PAC-Bayes objective is non-convex, and periodic posterior updates destabilize the critic. A PAC-Bayes-$\lambda$ variational relaxation converts the objective into a convex alternating-optimization scheme (Section~\ref{sec:algorithm}); our adaptive sampling mechanism then exposes the critic to the updated posterior at high sample rate after each update, and the sawtooth instability documented in Figure~\ref{fig:ablation_adaptation} disappears.

PB-SAC, the resulting algorithm, maintains competitive performance with its baseline SAC across dense-reward continuous-control tasks and provides formal lower bounds on the expected return that tighten throughout training. In sparse-reward settings, posterior-guided exploration led to PB-SAC outperforming both SAC and PBAC \citep{tasdighi2025deepexplorationpacbayes}, showing that the certification benefit comes with a meaningful exploration advantage.

Several limitations warrant consideration. Mixing time underestimation leads to overconfident bounds; our conservative strategy of taking the maximum over multiple autocorrelation sources trades some tightness for reliability. The KL divergence between Gaussian posteriors, while analytically convenient, does not respect the geometry of the parameter space~\citep{Viallard2023} and may limit posterior expressiveness. Future work could explore Wasserstein-based alternatives~\citep{amit2022integralprobabilitymetricspacbayes, haddouche2023wassersteinpacbayeslearningexploiting, Viallard2023}, structured posteriors, adaptive mixing time estimation, and extensions beyond actor-critic architectures. Despite these constraints, our results establish PAC-Bayesian computation as a viable path to trustworthy deep RL \citep{xu2022trustworthyreinforcementlearningintrinsic}.

\section*{Acknowledgements}
We gratefully acknowledge support from the CNRS/IN2P3 Computing Center (Lyon - France) for providing computing resources needed for this work.

\section*{Impact Statement}
This paper presents work whose goal is to advance the field of machine learning, specifically reinforcement learning. Our primary contributions are motivated by the quest for performance certificates for deep RL to pave the way for future safer deployment in high-stakes applications such as robotics and autonomous systems. There are many potential societal consequences of our work, none of which we feel must be specifically highlighted here.

\bibliography{main}

@misc{paulin2018,
      title={{Concentration inequalities for Markov chains by Marton couplings and spectral methods}}, 
      author={Daniel Paulin},
      year={2018},
      eprint={1212.2015},
      archivePrefix={arXiv},
      primaryClass={math.PR},
      url={https://arxiv.org/abs/1212.2015}, 
}

@inproceedings{fard2010,
 title = {{PAC-Bayesian Model Selection for Reinforcement Learning}},
 author = {Fard, M. and Pineau, Joelle},
 booktitle = {Advances in Neural Information Processing Systems},
 editor = {J. Lafferty and C. Williams and J. Shawe-Taylor and R. Zemel and A. Culotta},
 pages = {},
 publisher = {Curran Associates, Inc.},
 url = {https://proceedings.neurips.cc/paper_files/paper/2010/file/66368270ffd51418ec58bd793f2d9b1b-Paper.pdf},
 volume = {23},
 year = {2010}
}

@inproceedings{fard2012,
  author       = {Mahdi Milani Fard and
                  Joelle Pineau and
                  Csaba Szepesv{\'{a}}ri},
  editor       = {F{\'{a}}bio Gagliardi Cozman and
                  Avi Pfeffer},
  title        = {{PAC-Bayesian Policy Evaluation for Reinforcement Learning}},
  booktitle    = {{UAI} 2011, Proceedings of the Twenty-Seventh Conference on Uncertainty
                  in Artificial Intelligence, Barcelona, Spain, July 14-17, 2011},
  pages        = {195--202},
  publisher    = {{AUAI} Press},
  year         = {2011},
  url          = {http://arxiv.org/abs/1202.3717},
  timestamp    = {Wed, 03 Feb 2021 11:09:57 +0100},
  biburl       = {https://dblp.org/rec/conf/uai/FardPS11.bib},
  bibsource    = {dblp computer science bibliography, https://dblp.org}
}

@inproceedings{tasdighi2025deepexplorationpacbayes,
    title = {{Deep Exploration with PAC-Bayes}},
    abstract = "Reinforcement learning (RL) for continuous control under delayed rewards is an under-explored problem despite its significance in real-world applications. Many complex skills are based on intermediate ones as prerequisites. For instance, a humanoid locomotor must learn how to stand before it can learn to walk. To cope with delayed reward, an agent must perform deep exploration. However, existing deep exploration methods are designed for small discrete action spaces, and their generalization to state-of-the-art continuous control remains unproven. We address the deep exploration problem for the first time from a PAC-Bayesian perspective in the context of actor-critic learning. To do this, we quantify the error of the Bellman operator through a PAC-Bayes bound, where a bootstrapped ensemble of critic networks represents the posterior distribution, and their targets serve as a data-informed function-space prior. We derive an objective function from this bound and use it to train the critic ensemble. Each critic trains an individual soft actor network, implemented as a shared trunk and critic-specific heads. The agent performs deep exploration by acting epsilon-softly on a randomly chosen actor head. Our proposed algorithm, named PAC-Bayesian Actor-Critic (PBAC), is the only algorithm to consistently discover delayed rewards on continuous control tasks with varying difficulty.",
    author = "Bahareh Tasdighi and Manuel Haussmann and Nicklas Werge and Wu, \{Yi Shan\} and Melih Kandemir",
    year = "2025",
    month = oct,
    day = "21",
    url = "https://doi.org/10.3233/FAIA251178",
    language = "English",
    series = "Frontiers in Artificial Intelligence and Applications",
    publisher = "IOS Press BV",
    pages = "3138--3145",
    editor = "Ines Lynce and Nello Murano and Mauro Vallati and Serena Villata and Federico Chesani and Michela Milano and Andrea Omicini and Mehdi Dastani",
    booktitle = "ECAI 2025 - 28th European Conference on Artificial Intelligence, including 14th Conference on Prestigious Applications of Intelligent Systems, PAIS 2025 - Proceedings",
    address = "Netherlands",
}

@ARTICLE{McAllester1999-os,
  title    = "Some {PAC-Bayesian} Theorems",
  author   = "McAllester, David A",
  note     = "Previously published in conference proceedings of COLT'98",
  abstract = "This paper gives PAC guarantees for ``Bayesian''
              algorithms---algorithms that optimize risk minimization
              expressions involving a prior probability and a likelihood for
              the training data. PAC-Bayesian algorithms are motivated by a
              desire to provide an informative prior encoding information about
              the expected experimental setting but still having PAC
              performance guarantees over all IID settings. The PAC-Bayesian
              theorems given here apply to an arbitrary prior measure on an
              arbitrary concept space. These theorems provide an alternative to
              the use of VC dimension in proving PAC bounds for parameterized
              concepts.",
  url      = "https://link.springer.com/article/10.1023/A:1007618624809",
  journal  = "Machine Learning",
  volume   =  37,
  number   =  3,
  pages    = "355--363",
  month    =  dec,
  year     =  1999
}

@article{Seeger2002,
author = {Seeger, Matthias},
title = {{PAC-Bayesian Generalisation Error Bounds for Gaussian Process Classification}},
year = {2003},
issue_date = {3/1/2003},
publisher = {JMLR.org},
volume = {3},
number = {null},
issn = {1532-4435},
url = {https://jmlr.org/papers/v3/seeger02a.html},
abstract = {Approximate Bayesian Gaussian process (GP) classification techniques are powerful non-parametric learning methods, similar in appearance and performance to support vector machines. Based on simple probabilistic models, they render interpretable results and can be embedded in Bayesian frameworks for model selection, feature selection, etc. In this paper, by applying the PAC-Bayesian theorem of McAllester (1999a), we prove distribution-free generalisation error bounds for a wide range of approximate Bayesian GP classification techniques. We also provide a new and much simplified proof for this powerful theorem, making use of the concept of convex duality which is a backbone of many machine learning techniques. We instantiate and test our bounds for two particular GPC techniques, including a recent sparse method which circumvents the unfavourable scaling of standard GP algorithms. As is shown in experiments on a real-world task, the bounds can be very tight for moderate training sample sizes. To the best of our knowledge, these results provide the tightest known distribution-free error bounds for approximate Bayesian GPC methods, giving a strong learning-theoretical justification for the use of these techniques.},
journal = {J. Mach. Learn. Res.},
month = mar,
pages = {233–269},
numpages = {37},
keywords = {Bayesian learning, Gaussian processes, Gibbs classifier, Kernel machines, PAC-Bayesian framework, convex duality, generalisation error bounds, sparse approximations}
}

@article{Alquier_2024,
   title={{User-friendly Introduction to PAC-Bayes Bounds}},
   volume={17},
   ISSN={1935-8245},
   url={http://dx.doi.org/10.1561/2200000100},
   DOI={10.1561/2200000100},
   number={2},
   journal={Foundations and Trends® in Machine Learning},
   publisher={Now Publishers},
   author={Alquier, Pierre},
   year={2024},
   pages={174–303} }

@article{Parrado12a,
  author  = {Emilio Parrado-Hern{{\'a}}ndez and Amiran Ambroladze and John Shawe-Taylor and Shiliang Sun},
  title   = {{PAC-Bayes Bounds with Data Dependent Priors}},
  journal = {Journal of Machine Learning Research},
  year    = {2012},
  volume  = {13},
  number  = {112},
  pages   = {3507--3531},
  url     = {http://jmlr.org/papers/v13/parrado12a.html}
}

@inproceedings{Seldin11,
 title = {{PAC-Bayesian Analysis of Contextual Bandits}},
 author = {Seldin, Yevgeny and Auer, Peter and Shawe-taylor, John and Ortner, Ronald and Laviolette, Fran\c{c}ois},
 booktitle = {Advances in Neural Information Processing Systems},
 editor = {J. Shawe-Taylor and R. Zemel and P. Bartlett and F. Pereira and K.Q. Weinberger},
 pages = {},
 publisher = {Curran Associates, Inc.},
 url = {https://proceedings.neurips.cc/paper_files/paper/2011/file/58e4d44e550d0f7ee0a23d6b02d9b0db-Paper.pdf},
 volume = {24},
 year = {2011}
}

@InProceedings{Seldin12a,
  title = 	 {{PAC-Bayes-Bernstein Inequality for Martingales and its Application to Multiarmed Bandits}},
  author = 	 {Seldin, Yevgeny and Cesa-Bianchi, Nicolò and Auer, Peter and Laviolette, François and Shawe-Taylor, John},
  booktitle = 	 {Proceedings of the Workshop on On-line Trading of Exploration and Exploitation 2},
  pages = 	 {98--111},
  year = 	 {2012},
  editor = 	 {Glowacka, Dorota and Dorard, Louis and Shawe-Taylor, John},
  volume = 	 {26},
  series = 	 {Proceedings of Machine Learning Research},
  address = 	 {Bellevue, Washington, USA},
  month = 	 {02 Jul},
  publisher =    {PMLR},
  pdf = 	 {http://proceedings.mlr.press/v26/seldin12a/seldin12a.pdf},
  url = 	 {https://proceedings.mlr.press/v26/seldin12a.html}
}

@InProceedings{Ralaivola09a,
  title = 	 {{Chromatic PAC-Bayes Bounds for Non-IID Data}},
  author = 	 {Ralaivola, Liva and Szafranski, Marie and Stempfel, Guillaume},
  booktitle = 	 {Proceedings of the Twelfth International Conference on Artificial Intelligence and Statistics},
  pages = 	 {416--423},
  year = 	 {2009},
  editor = 	 {van Dyk, David and Welling, Max},
  volume = 	 {5},
  series = 	 {Proceedings of Machine Learning Research},
  address = 	 {Hilton Clearwater Beach Resort, Clearwater Beach, Florida USA},
  month = 	 {16--18 Apr},
  publisher =    {PMLR},
  pdf = 	 {http://proceedings.mlr.press/v5/ralaivola09a/ralaivola09a.pdf},
  url = 	 {https://proceedings.mlr.press/v5/ralaivola09a.html},
  abstract = 	 {PAC-Bayes bounds are among the most accurate generalization bounds for classifiers learned with IID data, and it    is particularly so for margin classifiers.  However, there are many    practical cases where the training data show some dependencies and    where the traditional IID assumption does not apply. Stating    generalization bounds for such frameworks is therefore of the utmost    interest, both from theoretical and practical standpoints. In this work, we propose the first –&nbsp;to the best of our knowledge&nbsp;–  PAC-Bayes generalization bounds for classifiers trained on data    exhibiting dependencies. The approach undertaken to establish our    results is based on the decomposition of a so-called dependency    graph that encodes the dependencies within the data, in sets of    independent data, through the tool of graph fractional covers.  Our    bounds are very general, since being able to find an upper bound on    the (fractional) chromatic number of the dependency graph is    sufficient to get new PAC-Bayes bounds for specific settings. We show how our results can be used to derive bounds for bipartite    ranking and windowed prediction on sequential data.}
}

@InProceedings{Abeles25a,
  title = 	 {Generalization bounds for mixing processes via delayed {online-to-PAC} conversions},
  author =       {Ab\'el\`es, Baptiste and Clerico, Eugenio and Neu, Gergely},
  booktitle = 	 {Proceedings of The 36th International Conference on Algorithmic Learning Theory},
  pages = 	 {23--40},
  year = 	 {2025},
  editor = 	 {Kamath, Gautam and Loh, Po-Ling},
  volume = 	 {272},
  series = 	 {Proceedings of Machine Learning Research},
  month = 	 {24--27 Feb},
  publisher =    {PMLR},
  pdf = 	 {https://raw.githubusercontent.com/mlresearch/v272/main/assets/abeles25a/abeles25a.pdf},
  url = 	 {https://proceedings.mlr.press/v272/abeles25a.html},
  abstract = 	 {We study the generalization error of statistical learning algorithms in a non i.i.d. setting, where the training data is sampled from a stationary mixing process. We develop an analytic framework for this scenario based on a reduction to online learning with delayed feedback. In particular, we show that the existence of an online learning algorithm with bounded regret (against a fixed statistical learning algorithm in a specially constructed game of online learning with delayed feedback) implies low generalization error of said statistical learning method even if the data sequence is sampled from a mixing time series. The rates demonstrate a trade-studied settings when the delay is tuned appropriately as a function of the mixing time of the process.}
}

@InProceedings{zhang2024statisticalguaranteeslifelongrl,
  title = 	 {{Statistical Guarantees for Lifelong Reinforcement Learning using PAC-Bayes Theory}},
  author =       {Zhang, Zhi and Chow, Chris and Zhang, Yasi and Sun, Yanchao and Zhang, Haochen and Jiang, Eric Hanchen and Liu, Han and Huang, Furong and Cui, Yuchen and PADILLA, OSCAR HERNAN MADRID},
  booktitle = 	 {Proceedings of The 28th International Conference on Artificial Intelligence and Statistics},
  pages = 	 {5050--5058},
  year = 	 {2025},
  editor = 	 {Li, Yingzhen and Mandt, Stephan and Agrawal, Shipra and Khan, Emtiyaz},
  volume = 	 {258},
  series = 	 {Proceedings of Machine Learning Research},
  month = 	 {03--05 May},
  publisher =    {PMLR},
  pdf = 	 {https://raw.githubusercontent.com/mlresearch/v258/main/assets/zhang25m/zhang25m.pdf},
  url = 	 {https://proceedings.mlr.press/v258/zhang25m.html},
  abstract = 	 {Lifelong reinforcement learning (RL) has been developed as a paradigm for extending single-task RL to more realistic, dynamic settings. In lifelong RL, the "life" of an RL agent is modeled as a stream of tasks drawn from a task distribution. We propose EPIC (Empirical PAC-Bayes that Improves Continuously),  a novel algorithm designed for lifelong RL using PAC-Bayes theory. EPIC learns a shared policy distribution, referred to as the world policy, which enables rapid adaptation to new tasks while retaining valuable knowledge from previous experiences. Our theoretical analysis establishes a relationship between the algorithm’s generalization performance and the number of prior tasks preserved in memory. We also derive the sample complexity of EPIC in terms of RL regret. Extensive experiments on a variety of environments demonstrate that EPIC significantly outperforms existing methods in lifelong RL, offering both theoretical guarantees and practical efficacy through the use of the world policy.}
}

@article{perez-ortiz2021,
  author       = {Mar{\'{\i}}a P{\'{e}}rez{-}Ortiz and
                  Omar Rivasplata and
                  John Shawe{-}Taylor and
                  Csaba Szepesv{\'{a}}ri},
  title        = {{Tighter Risk Certificates for Neural Networks}},
  journal      = {J. Mach. Learn. Res.},
  volume  = {22},
  number  = {227},
  pages   = {1--40},
  year         = {2021},
  url          = {https://jmlr.org/papers/v22/20-879.html},
  timestamp    = {Wed, 11 Sep 2024 14:41:27 +0200},
  biburl       = {https://dblp.org/rec/journals/jmlr/Perez-OrtizRSS21.bib},
  bibsource    = {dblp computer science bibliography, https://dblp.org}
}

@misc{xu2022trustworthyreinforcementlearningintrinsic,
      title={{Trustworthy Reinforcement Learning Against Intrinsic Vulnerabilities: Robustness, Safety, and Generalizability}}, 
      author={Mengdi Xu and Zuxin Liu and Peide Huang and Wenhao Ding and Zhepeng Cen and Bo Li and Ding Zhao},
      year={2022},
      eprint={2209.08025},
      archivePrefix={arXiv},
      primaryClass={cs.LG},
      url={https://arxiv.org/abs/2209.08025}, 
}

@article{Donsker-Varadhan-I,
title = {{Asymptotic evaluation of certain Markov process expectations for large time---I}},
author = {Donsker, M. D. and Varadhan, S. R. S.},
journal = {Communications on Pure and Applied Mathematics},
volume = {28},
number = {1},
pages = {1-47},
url = {https://doi.org/10.1002/cpa.3160280102},
eprint = {https://onlinelibrary.wiley.com/doi/pdf/10.1002/cpa.3160280102},
year = {1975}
}

@article{Donsker-Varadhan-IV,
title = {{Asymptotic evaluation of certain Markov process expectations for large time---IV}},
author = {Donsker, M. D. and Varadhan, S. R. S.},
journal = {Communications on Pure and Applied Mathematics},
volume = {36},
number = {2},
pages = {183-212},
url = {https://doi.org/10.1002/cpa.3160360204},
eprint = {https://onlinelibrary.wiley.com/doi/pdf/10.1002/cpa.3160360204},
year = {1983}
}

@article{Germain2015,
  author  = {Pascal Germain and Alexandre Lacasse and Francois Laviolette and Mario March and Jean-Francis Roy},
  title   = {{Risk Bounds for the Majority Vote: From a PAC-Bayesian Analysis to a Learning Algorithm}},
  journal = {Journal of Machine Learning Research},
  year    = {2015},
  volume  = {16},
  number  = {26},
  pages   = {787--860},
  url     = {http://jmlr.org/papers/v16/germain15a.html}
}

@article{Catoni2007,
   title={{PAC-Bayesian Supervised Classification: The Thermodynamics of Statistical Learning}},
   author={Olivier Catoni},
   volume={56},
   ISSN={0749-2170},
   url={http://dx.doi.org/10.1214/074921707000000391},
   DOI={10.1214/074921707000000391},
   journal={IMS Lecture Notes Monograph Series},
   publisher={Institute of Mathematical Statistics},
   year={2007},
   pages={1–163} }

@InProceedings{sac,
  title = 	 {{Soft Actor-Critic: Off-Policy Maximum Entropy Deep Reinforcement Learning with a Stochastic Actor}},
  author =       {Haarnoja, Tuomas and Zhou, Aurick and Abbeel, Pieter and Levine, Sergey},
  booktitle = 	 {Proceedings of the 35th International Conference on Machine Learning},
  pages = 	 {1861--1870},
  year = 	 {2018},
  editor = 	 {Dy, Jennifer and Krause, Andreas},
  volume = 	 {80},
  series = 	 {Proceedings of Machine Learning Research},
  month = 	 {10--15 Jul},
  publisher =    {PMLR},
  pdf = 	 {http://proceedings.mlr.press/v80/haarnoja18b/haarnoja18b.pdf},
  url = 	 {https://proceedings.mlr.press/v80/haarnoja18b.html},
  abstract = 	 {Model-free deep reinforcement learning (RL) algorithms have been demonstrated on a range of challenging decision making and control tasks. However, these methods typically suffer from two major challenges: very high sample complexity and brittle convergence properties, which necessitate meticulous hyperparameter tuning. Both of these challenges severely limit the applicability of such methods to complex, real-world domains. In this paper, we propose soft actor-critic, an off-policy actor-critic deep RL algorithm based on the maximum entropy reinforcement learning framework. In this framework, the actor aims to maximize expected reward while also maximizing entropy. That is, to succeed at the task while acting as randomly as possible. Prior deep RL methods based on this framework have been formulated as Q-learning methods. By combining off-policy updates with a stable stochastic actor-critic formulation, our method achieves state-of-the-art performance on a range of continuous control benchmark tasks, outperforming prior on-policy and off-policy methods. Furthermore, we demonstrate that, in contrast to other off-policy algorithms, our approach is very stable, achieving very similar performance across different random seeds.}
}

@inproceedings{Konda-Tsitsiklis-actor-critic,
 title = {{Actor-Critic Algorithms}},
 author = {Konda, Vijay and Tsitsiklis, John},
 booktitle = {Advances in Neural Information Processing Systems},
 editor = {S. Solla and T. Leen and K. M\"{u}ller},
 pages = {},
 publisher = {MIT Press},
 url = {https://proceedings.neurips.cc/paper_files/paper/1999/file/6449f44a102fde848669bdd9eb6b76fa-Paper.pdf},
 volume = {12},
 year = {1999}
}

@inproceedings{Sutton-PG,
 author = {Sutton, Richard S and McAllester, David and Singh, Satinder and Mansour, Yishay},
 booktitle = {Advances in Neural Information Processing Systems},
 editor = {S. Solla and T. Leen and K. M\"{u}ller},
 pages = {},
 publisher = {MIT Press},
 title = {Policy Gradient Methods for Reinforcement Learning with Function Approximation},
 url = {https://proceedings.neurips.cc/paper_files/paper/1999/file/464d828b85b0bed98e80ade0a5c43b0f-Paper.pdf},
 volume = {12},
 year = {1999}
}

@article{KorayDQN,
  title        = {{Playing Atari with Deep Reinforcement Learning}},
  author       = {Volodymyr Mnih and
                  Koray Kavukcuoglu and
                  David Silver and
                  Alex Graves and
                  Ioannis Antonoglou and
                  Daan Wierstra and
                  Martin A. Riedmiller},
  journal      = {CoRR},
  volume       = {abs/1312.5602},
  year         = {2013},
  url          = {http://arxiv.org/abs/1312.5602},
  eprinttype    = {arXiv},
  eprint       = {1312.5602},
  timestamp    = {Mon, 13 Aug 2018 16:47:42 +0200},
  biburl       = {https://dblp.org/rec/journals/corr/MnihKSGAWR13.bib},
  bibsource    = {dblp computer science bibliography, https://dblp.org}
}

@misc{gymnasium,
      title={{Gymnasium: A Standard Interface for Reinforcement Learning Environments}}, 
      author={Mark Towers and Ariel Kwiatkowski and Jordan Terry and John U. Balis and Gianluca De Cola and Tristan Deleu and Manuel Goulão and Andreas Kallinteris and Markus Krimmel and Arjun KG and Rodrigo Perez-Vicente and Andrea Pierré and Sander Schulhoff and Jun Jet Tai and Hannah Tan and Omar G. Younis},
      year={2024},
      eprint={2407.17032},
      archivePrefix={arXiv},
      primaryClass={cs.LG},
      url={https://arxiv.org/abs/2407.17032}, 
}

@article{DMC,
  title        = {{DeepMind Control Suite}},
  author       = {Yuval Tassa and
                  Yotam Doron and
                  Alistair Muldal and
                  Tom Erez and
                  Yazhe Li and
                  Diego de Las Casas and
                  David Budden and
                  Abbas Abdolmaleki and
                  Josh Merel and
                  Andrew Lefrancq and
                  Timothy P. Lillicrap and
                  Martin A. Riedmiller},
  journal      = {CoRR},
  volume       = {abs/1801.00690},
  year         = {2018},
  url          = {http://arxiv.org/abs/1801.00690},
  eprinttype    = {arXiv},
  eprint       = {1801.00690},
  timestamp    = {Mon, 22 Jul 2019 16:19:02 +0200},
  biburl       = {https://dblp.org/rec/journals/corr/abs-1801-00690.bib},
  bibsource    = {dblp computer science bibliography, https://dblp.org}
}

@article{freedman1975tail,
  title={On tail probabilities for martingales},
  author={Freedman, David A},
  journal={The Annals of Probability},
  pages={100--118},
  year={1975},
  url={https://projecteuclid.org/journals/annals-of-probability/volume-3/issue-1/On-Tail-Probabilities-for-Martingales/10.1214/aop/1176996452.full},
  publisher={JSTOR}
}

@article{Azuma1967WEIGHTEDSO,
  title={WEIGHTED SUMS OF CERTAIN DEPENDENT RANDOM VARIABLES},
  author={Kazuoki Azuma},
  journal={Tohoku Mathematical Journal},
  year={1967},
  volume={19},
  pages={357-367},
  url={https://api.semanticscholar.org/CorpusID:120707243}
}

@inproceedings{Viallard2023,
 author = {Viallard, Paul and Haddouche, Maxime and Simsekli, Umut and Guedj, Benjamin},
 booktitle = {Advances in Neural Information Processing Systems},
 editor = {A. Oh and T. Naumann and A. Globerson and K. Saenko and M. Hardt and S. Levine},
 pages = {56108--56137},
 publisher = {Curran Associates, Inc.},
 title = {{Learning via Wasserstein-Based High Probability Generalisation Bounds}},
 url = {https://proceedings.neurips.cc/paper_files/paper/2023/file/af2bb2b2280d36f8842e440b4e275152-Paper-Conference.pdf},
 volume = {36},
 year = {2023}
}

@misc{haddouche2023wassersteinpacbayeslearningexploiting,
      title={{Wasserstein PAC-Bayes Learning: Exploiting Optimisation Guarantees to Explain Generalisation}}, 
      author={Maxime Haddouche and Benjamin Guedj},
      year={2023},
      eprint={2304.07048},
      archivePrefix={arXiv},
      primaryClass={stat.ML},
      url={https://arxiv.org/abs/2304.07048}, 
}

@inproceedings{amit2022integralprobabilitymetricspacbayes,
 title = {{Integral Probability Metrics PAC-Bayes Bounds}},
 author = {Amit, Ron and Epstein, Baruch and Moran, Shay and Meir, Ron},
 booktitle = {Advances in Neural Information Processing Systems},
 editor = {S. Koyejo and S. Mohamed and A. Agarwal and D. Belgrave and K. Cho and A. Oh},
 pages = {3123--3136},
 publisher = {Curran Associates, Inc.},
 url = {https://proceedings.neurips.cc/paper_files/paper/2022/file/14da7aea05debb963b3d8d46449d51a0-Paper-Conference.pdf},
 volume = {35},
 year = {2022}
}

@InProceedings{thiemann17a,
  title = 	 {{A Strongly Quasiconvex PAC-Bayesian Bound}},
  author = 	 {Thiemann, Niklas and Igel, Christian and Wintenberger, Olivier and Seldin, Yevgeny},
  booktitle = 	 {Proceedings of the 28th International Conference on Algorithmic Learning Theory},
  pages = 	 {466--492},
  year = 	 {2017},
  editor = 	 {Hanneke, Steve and Reyzin, Lev},
  volume = 	 {76},
  series = 	 {Proceedings of Machine Learning Research},
  month = 	 {15--17 Oct},
  publisher =    {PMLR},
  pdf = 	 {http://proceedings.mlr.press/v76/thiemann17a/thiemann17a.pdf},
  url = 	 {https://proceedings.mlr.press/v76/thiemann17a.html},
  abstract = 	 {We propose a new PAC-Bayesian bound and a way of constructing a hypothesis space, so that the bound is convex in the posterior distribution and also convex in a trade-off parameter between empirical performance of the posterior distribution and its complexity. The complexity is measured by the Kullback-Leibler divergence to a prior. We derive an alternating procedure for minimizing the bound. We show that the bound can be rewritten as a one-dimensional function of the trade-off parameter and provide sufficient conditions under which the function has a single global minimum. When the conditions are satisfied the alternating minimization is guaranteed to converge to the global minimum of the bound. We provide experimental results demonstrating that rigorous minimization of the bound is competitive with cross-validation in tuning the trade-off between complexity and empirical performance. In all our experiments the trade-off turned to be quasiconvex even when the sufficient conditions were violated.}
}

@article{Williams1992,
author = {Williams, Ronald J.},
title = {Simple Statistical Gradient-Following Algorithms for Connectionist Reinforcement Learning},
year = {1992},
issue_date = {May 1992},
publisher = {Kluwer Academic Publishers},
address = {USA},
volume = {8},
number = {3–4},
issn = {0885-6125},
url = {https://doi.org/10.1007/BF00992696},
doi = {10.1007/BF00992696},
abstract = {This article presents a general class of associative reinforcement learning algorithms for connectionist networks containing stochastic units. These algorithms, called REINFORCE algorithms, are shown to make weight adjustments in a direction that lies along the gradient of expected reinforcement in both immediate-reinforcement tasks and certain limited forms of delayed-reinforcement tasks, and they do this without explicitly computing gradient estimates or even storing information from which such estimates could be computed. Specific examples of such algorithms are presented, some of which bear a close relationship to certain existing algorithms while others are novel but potentially interesting in their own right. Also given are results that show how such algorithms can be naturally integrated with backpropagation. We close with a brief discussion of a number of additional issues surrounding the use of such algorithms, including what is known about their limiting behaviors as well as further considerations that might be used to help develop similar but potentially more powerful reinforcement learning algorithms.},
journal = {Mach. Learn.},
month = may,
pages = {229–256},
numpages = {28},
keywords = {Reinforcement learning, connectionist networks, gradient descent, mathematical analysis}
}

@article{Langford05a,
  author  = {John Langford},
  title   = {{Tutorial on Practical Prediction Theory for Classification}},
  journal = {Journal of Machine Learning Research},
  year    = {2005},
  volume  = {6},
  number  = {10},
  pages   = {273--306},
  url     = {http://jmlr.org/papers/v6/langford05a.html}
}

@ARTICLE{hellstrom2020generalization_correction,
  author={Hellström, Fredrik and Durisi, Giuseppe},
  journal={IEEE Journal on Selected Areas in Information Theory}, 
  title={{Corrections to “Generalization Bounds via Information Density and Conditional Information Density” [Nov 20 824-839]}}, 
  year={2021},
  volume={2},
  number={3},
  pages={1072-1073},
  keywords={Statistical learning;Machine learning;Information theory},
  url={https://doi.org/10.1109/JSAIT.2021.3088240}
}

@ARTICLE{hellstrom2020generalization,
  author={Hellström, Fredrik and Durisi, Giuseppe},
  journal={IEEE Journal on Selected Areas in Information Theory}, 
  title={{Generalization Bounds via Information Density and Conditional Information Density}}, 
  year={2020},
  volume={1},
  number={3},
  pages={824-839},
  keywords={Mutual information;Statistical learning;Random variables;Upper bound;Training data;Machine learning;Machine learning;statistical learning},
  url={https://doi.org/10.1109/JSAIT.2020.3040992}}

@book{Wainwright_2019,
    place={Cambridge},
    series={Cambridge Series in Statistical and Probabilistic Mathematics},
    title={High-Dimensional Statistics: A Non-Asymptotic Viewpoint},
    publisher={Cambridge University Press}, author={Wainwright, Martin J.},
    year={2019},
    collection={Cambridge Series in Statistical and Probabilistic Mathematics},
    url={https://doi.org/10.1017/9781108627771}
}

@incollection{McDiarmid_1989,
    place={Cambridge},
    series={London Mathematical Society Lecture Note Series},
    title={On the method of bounded differences},
    booktitle={Surveys in Combinatorics, 1989: Invited Papers at the Twelfth British Combinatorial Conference},
    publisher={Cambridge University Press},
    author={McDiarmid, Colin},
    editor={Siemons, J.Editor},
    year={1989}, pages={148–188},
    collection={London Mathematical Society Lecture Note Series}
}

@misc{karagulyan2025empiricalpacbayesboundsmarkov,
      title={{Empirical PAC-Bayes bounds for Markov chains}}, 
      author={Vahe Karagulyan and Pierre Alquier},
      year={2025},
      eprint={2509.20985},
      archivePrefix={arXiv},
      primaryClass={stat.ML},
      url={https://arxiv.org/abs/2509.20985}, 
}

@book{SuttonB2018book,
  author       = {Richard S. Sutton and
                  Andrew G. Barto},
  title        = {Reinforcement Learning: An Introduction, 2nd Edition},
  publisher    = {{MIT} Press},
  year         = {2018},
  url          = {http://www.incompleteideas.net/book/the-book-2nd.html},
  timestamp    = {Mon, 17 Mar 2025 15:33:08 +0100},
  biburl       = {https://dblp.org/rec/books/lib/SuttonB2018.bib},
  bibsource    = {dblp computer science bibliography, https://dblp.org}
}
\bibliographystyle{icml2026}

\newpage
\appendix
\onecolumn
\section{Mathematical Tools}

\label{math_tools}

\begin{lemma}[Markov’s Inequality]\label{lem:Markov_Inequality}
    For any random variable \(X\), for any \(a > 0\), we have
    \[
    \mathbb{P}\{|X| \geq a\} \leq \frac{\mathbb{E}[|X|]}{a}.
    \]
\end{lemma}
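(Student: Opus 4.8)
The plan is to prove this via the standard indicator-function argument, exploiting the non-negativity of $|X|$. The central observation is the pointwise inequality
\[
a\,\mathbb{I}\bigl[|X|\geq a\bigr] \;\leq\; |X|,
\]
which holds for every outcome: on the event $\{|X|\geq a\}$ the left side equals $a$ while the right side is at least $a$, and on the complementary event the left side is $0$ while the right side is non-negative. Once this bound is in hand, the result follows by taking expectations and rearranging.

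First I would establish the pointwise bound by the two-case analysis just described. Next I would apply the expectation operator to both sides, invoking its monotonicity (if $Y\leq Z$ pointwise then $\mathbb{E}[Y]\leq\mathbb{E}[Z]$). On the left, linearity gives $\mathbb{E}\bigl[a\,\mathbb{I}[|X|\geq a]\bigr]=a\,\mathbb{E}\bigl[\mathbb{I}[|X|\geq a]\bigr]=a\,\mathbb{P}\{|X|\geq a\}$, since the expectation of an indicator equals the probability of its underlying event. On the right, $\mathbb{E}[|X|]=\mu$ by hypothesis. Combining the two yields $a\,\mathbb{P}\{|X|\geq a\}\leq\mu$, and dividing through by $a>0$ delivers the stated inequality.

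There is essentially no substantive obstacle here, as the result is elementary. The only point requiring a modicum of care is verifying the pointwise inequality on the event $\{|X|<a\}$, where one must invoke $|X|\geq 0$ to conclude that the right-hand side dominates the (zero) left-hand side; this is precisely where the non-negativity of the absolute value enters, and it is what makes the bound valid with no further assumptions on the law of $X$. Equivalently, one could argue by splitting the expectation as $\mathbb{E}[|X|]=\mathbb{E}\bigl[|X|\,\mathbb{I}[|X|\geq a]\bigr]+\mathbb{E}\bigl[|X|\,\mathbb{I}[|X|<a]\bigr]$, discarding the non-negative second summand and lower-bounding $|X|$ by $a$ inside the first, but the indicator version above is the most economical route to the conclusion.
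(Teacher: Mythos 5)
Your proof is correct: the pointwise bound $a\,\mathbb{I}[|X|\geq a]\leq |X|$, monotonicity of expectation, and division by $a>0$ constitute the standard and complete argument for Markov's inequality. The paper itself states this lemma as a classical tool without providing any proof, so there is nothing to compare against; your indicator-function derivation is exactly the canonical proof one would supply.
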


\begin{lemma}[Change of measure inequality] \label{lem:change_measure}
    For any measurable function $f: \Theta \rightarrow \mathbb{R}$ and distributions $\mu, \rho \in \mathcal{P}(\Theta)$:
    \begin{equation}
    \mathbb{E}_{\theta \sim \rho}[f(\theta)] \leq \mathrm{KL}(\rho \| \mu) + \ln\mathbb{E}_{\theta \sim \mu}[\exp(f(\theta))]
    \end{equation}
    where $\mathrm{KL}(\rho \| \mu)$ is the Kullback-Leibler divergence.
\end{lemma}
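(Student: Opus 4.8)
The plan is to use the Donsker--Varadhan variational argument built around the Gibbs measure associated with $f$. I would first dispatch the degenerate cases: if $\rho$ is not absolutely continuous with respect to $\mu$ then $\mathrm{KL}(\rho\|\mu)=+\infty$ and the inequality is vacuous, and if $Z := \mathbb{E}_{\theta \sim \mu}[\exp(f(\theta))] = +\infty$ the right-hand side is again $+\infty$. So assume $\rho \ll \mu$ and $Z<\infty$, and define the tilted (Gibbs) distribution $\pi_f$ through its Radon--Nikodym derivative $\frac{d\pi_f}{d\mu}(\theta) = \exp(f(\theta))/Z$, which is a genuine probability density since it integrates to $1$ under $\mu$.

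The core step is to expand the nonnegative quantity $\mathrm{KL}(\rho \| \pi_f) \geq 0$. Using the chain rule $\frac{d\rho}{d\pi_f} = \frac{d\rho}{d\mu}\cdot\frac{d\mu}{d\pi_f}$ together with $\frac{d\mu}{d\pi_f}(\theta) = Z\exp(-f(\theta))$, I would compute
\[
\mathrm{KL}(\rho \| \pi_f) = \mathbb{E}_{\theta \sim \rho}\!\left[\ln\tfrac{d\rho}{d\mu}(\theta)\right] + \ln Z - \mathbb{E}_{\theta \sim \rho}[f(\theta)] = \mathrm{KL}(\rho \| \mu) + \ln Z - \mathbb{E}_{\theta \sim \rho}[f(\theta)].
\]
Rearranging and invoking $\mathrm{KL}(\rho \| \pi_f) \geq 0$ immediately yields $\mathbb{E}_{\theta \sim \rho}[f(\theta)] \leq \mathrm{KL}(\rho \| \mu) + \ln Z$, which is exactly the claim.

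An equivalent route avoids naming the Gibbs measure: write $\mathbb{E}_{\theta\sim\rho}[f(\theta)] - \mathrm{KL}(\rho\|\mu) = \mathbb{E}_{\theta\sim\rho}\!\left[\ln\!\big(\exp(f(\theta))/\tfrac{d\rho}{d\mu}(\theta)\big)\right]$, apply Jensen's inequality (concavity of $\ln$) to pull the expectation inside the logarithm, and observe that the resulting expression reduces to $\int \exp(f)\,d\mu = Z$ after the factor $\tfrac{d\rho}{d\mu}$ cancels against $d\rho$. Either way the algebra is a couple of lines.

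The only real subtlety is measure-theoretic bookkeeping rather than any deep idea: one must handle the degenerate cases up front so that the variational identity is applied only where all terms are finite and the Radon--Nikodym derivatives are well-defined $\rho$-almost everywhere. I expect this case analysis to be the main (though entirely routine) obstacle; once it is cleared, the inequality follows in one line from the nonnegativity of relative entropy.
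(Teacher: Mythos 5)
Your proof is correct and is the standard argument: the paper itself never proves this lemma (it is stated as a known tool in its mathematical-tools appendix and attributed to the Donsker--Varadhan variational formula), and your Gibbs-measure computation---expanding $\mathrm{KL}(\rho\|\pi_f)\ge 0$ for the tilted measure $d\pi_f/d\mu = e^{f}/Z$---is exactly the canonical derivation of that formula. Nothing further is needed; your up-front case analysis for $\rho\not\ll\mu$ and $Z=\infty$ (together with the remaining degenerate case $\rho\ll\mu$ but $\mathrm{KL}(\rho\|\mu)=+\infty$, which is equally vacuous since $Z>0$ guarantees the right-hand side is $+\infty$) disposes of the measure-theoretic edge cases, and your Jensen-based variant is also sound, with the only cosmetic remark that the final integral there runs over the support of $\rho$ and hence gives $\le Z$ rather than $=Z$, which only helps the inequality.
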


\subsection{Concentration for Markov chains via Marton coupling}
We use Paulin's extension of McDiarmid's bounded‑difference inequality to Markov chains \citep{paulin2018}. This extension provides concentration inequalities for functions of dependent random variables, with constants that depend on the mixing properties of the chain.

\subsubsection{Marton coupling and mixing time}
The key insight in Paulin's approach \citep{paulin2018} is to use a coupling structure known as Marton coupling, which quantifies the strength of dependence between random variables in a discrete-time stochastic process, particularly for Markov chains. 

Consider a vector of (generally dependent) random variables $X = (X_1, \ldots, X_N)$ taking values in space $\Lambda = \Lambda_1 \times \ldots \times \Lambda_N$. A Marton coupling provides a way to couple the distributions of future states $X_{i+t}$ conditioned on different past states $X_i$.
Let $\mathcal{L}(X_{i+t} | X_i = x)$ be the conditional law (conditional distribution) of
$X_{i+t}$ given $X_i = x$.

For a Markov chain $X = (X_1, \ldots, X_N)$, let $\tau(\varepsilon)$ denote the mixing time of the chain in total variation distance, defined as the minimal $t$ such that for every $1 \leq i \leq N-t$ and $x,y \in \Lambda_i$ the following holds:
\begin{align}
d_{\operatorname{TV}}(\mathcal{L}(X_{i+t} | X_i = x), \mathcal{L}(X_{i+t} | X_i = y)) \leq \varepsilon
\end{align}

We define the normalized mixing time parameter $\tau_{\min}$ as:
\begin{align}
\tau_{\min}\;=\;\inf_{0\le\varepsilon<1}\tau(\varepsilon)\Bigl(\tfrac{2-\varepsilon}{1-\varepsilon}\Bigr)^{2}
\end{align}

\subsubsection{McDiarmid's inequality for Markov chains}
Consider a function $f : \Lambda \to \mathbb{R}$ satisfying the bounded-differences property: For any $x,y \in \Lambda$,
\begin{align}
f(x)-f(y)\le\sum_{i=1}^{N}c_i \mathbb{I}[x_i\neq y_i]
\end{align}
where $c \in \mathbb{R}^{N}_+$ and $\mathbb{I}[\text{condition}]$ is the indicator function (equal to 1/0 according to whether the `condition' is true/false). 
Paulin's theorem then gives:
\begin{align}
    \Pr\bigl(|f(X)-\mathbb{E}f(X)|\ge t\bigr)\;\le\;2\exp\bigl(-2t^{2}/\|c\|^{2}\tau_{\min}\bigr). \label{eq:paulin}
\end{align}
The squared norm $\|c\|^2$  on the right-hand side is the squared Euclidean norm, defined as usual: $\|c\|^2 = \sum_{i=1}^N c_i^2$.

\subsubsection{Application to bounded differences in MDPs}
For Markov decision processes, this inequality is particularly useful when analyzing the difference between value functions. If perturbing a single transition can change the value by at most $c_i$, then the total effect on a function of trajectories satisfies the bounded-differences property, and then the deviation from its mean can be bounded by the above concentration inequality, with the mixing time of the MDP properly accounting for the propagation of the perturbation through future states.

\section{Derivation of PAC-Bayes Value-Error Bound for RL}
\label{AppB}

\subsection{Bounded-differences property for MDP trajectories}

We begin by recalling the definitions of discounted return for a trajectory $\xi$ and the corresponding value function from Section~\ref{sec:methods}:
\begin{align*}
G(\xi) &= \sum_{k=0}^{H-1}\gamma^{k}R_{k+1} \\
V_{\pi_{\theta}} &= \mathbb{E}_{\xi \sim \mathcal{M}}[G(\xi)]
\end{align*}

As defined in \eqref{eq:true_loss} and \eqref{eq:emp_loss}, the empirical and expected losses are:
\begin{align*}
\hat{\mathcal{L}}_{\mathfrak{D}}(\theta) &= -\frac{1}{T}\sum_{j=1}^{T}G(\xi^{(j)}) \\
\mathcal{L}(\theta) &= -\mathbb{E}_{\xi \sim \mathcal{M}}[G(\xi)] = -V_{\pi_{\theta}}
\end{align*}

To apply Paulin's inequality, we shall establish the bounded-differences condition for this empirical loss. Specifically, if we show that replacing one transition in a trajectory affects $\hat{\mathcal{L}}_{\mathfrak{D}}(\theta)$ by at most $c_{(h,j)}$, then this will give a bounded-differences property of the form 
$$
\hat{\mathcal{L}}_{\mathfrak{D}}(\theta) - \hat{\mathcal{L}}_{\bar{\mathfrak{D}}}(\theta) \leq \sum_{h=1}^{H}\sum_{j=1}^{T} c_{(h,j)}\mathbb{I}[\xi^{(j)}_{h} \neq \bar{\xi}^{(j)}_{h}]
$$ 
where $c \in \mathbb{R}^{H \times T}_{+}$ and $\mathbb{I}[\cdot]$ is the indicator function. This way, Paulin's inequality \eqref{eq:paulin} applied to this setting then would give the probability tail inequality establishing the concentration of $\hat{\mathcal{L}}_{\mathfrak{D}}(\theta)$ around its mean $\mathcal{L}(\theta)$.

\subsection{Quantifying the impact of perturbed transitions}\label{app:proof-c}
Suppose we replace a single transition at position $h$ in trajectory $j$. This perturbation alters the trajectory $\xi^{(j)}$ to $\bar{\xi}^{(j)}$ and effectively changes the cumulative importance weight associated with that trajectory.

By our assumption in Section \ref{sec:problem_setup}, the weighted rewards are effectively bounded in $[0, R_{\max}]$. Therefore, we can bound the sensitivity of the empirical loss by treating the weighted signal similarly to the on-policy case. The maximum change in the weighted discounted return due to a perturbation at step $h$ is bounded by the propagation of the error through the discount factor:
\begin{align*}
|\hat{\mathcal{L}}_{\mathfrak{D}}(\theta) - \hat{\mathcal{L}}_{\bar{\mathfrak{D}}}(\theta)| \leq \frac{\gamma^{h-1}R_{\max}}{T} = c_{(h,j)}
\end{align*}

\paragraph{Validity under weight clipping.}
Clipping large importance weights for numerical stability introduces bias into the empirical loss. We now verify that this bias is strictly pessimistic and leaves the certificate intact.

Let the clipped weights be $w^c_j = \min(w_j, M)$ for some threshold $M > 0$ and define the clipped empirical and true losses:
\[
\hat{\mathcal{L}}^c_{\mathfrak{D}}(\theta) = -\frac{1}{T}\sum_{j=1}^T w^c_j\, G(\xi^{(j)}),
\qquad
\mathcal{L}_c(\theta) = -\mathbb{E}_{\xi\sim\mathcal{M}}\!\left[w^c\, G(\xi)\right].
\]
Since $G(\xi^{(j)}) \ge 0$ and $w^c_j \le w_j$, we have $w^c_j G(\xi^{(j)}) \le w_j G(\xi^{(j)})$ for every trajectory, hence
\begin{align}
    \mathcal{L}(\theta) \;\le\; \mathcal{L}_c(\theta). \label{eq:clip_pessimistic}
\end{align}
Applying Theorem~\ref{Th:PBRL} to the clipped loss $\mathcal{L}_c$ (which satisfies the same bounded-differences condition since $w^c_j G \le w_j G \le R_{\max}$ pointwise) gives, with probability at least $1-\delta$:
\[
\mathcal{L}_c(\theta) \;\le\; \hat{\mathcal{L}}^c_{\mathfrak{D}}(\theta) + \sqrt{\frac{R_{\max}^2(1-\gamma^{2H})}{T(1-\gamma^2)}\,\tau_{\min}\!\left(\mathrm{KL}(\rho\|\mu)+\ln\tfrac{\sqrt{2}}{\delta}\right)}.
\]
Chaining with \eqref{eq:clip_pessimistic}:
\[
\mathcal{L}(\theta) \;\le\; \hat{\mathcal{L}}^c_{\mathfrak{D}}(\theta) + \text{complexity term}.
\]
Hence the certificate remains formally valid under clipping, it simply becomes more conservative. In our implementation we use Softmax Self-Normalised Importance Sampling over standardised log-ratios, which naturally constrains all weights to $[0,1]$ and enjoys the same conservative property (see Remark~\ref{rem:clipping} in the main text).

\subsection{Derivation of $\|c\|^2$ for the PAC-Bayes bound}
\label{app:norm-c}

To apply McDiarmid's inequality for Markov chains as developed by \citet{paulin2018}, we need to compute $\|c\|^2$:
\begin{align*}
\lVert c \rVert^{2}
  &= \sum_{j=1}^{T} \sum_{h=1}^{H} c^{2}(h,j) \\[4pt]
  &= \frac{R_{\max}^{2}}{T^{2}}\; \cdot T
     \sum_{h=1}^{H} \gamma^{\,2(h-1)} \\[4pt]
  &= \frac{R_{\max}^{2}}{T}\;
     \underbrace{\sum_{h=0}^{H-1} \gamma^{\,2h}}_{\text{finite geometric series}} \\[4pt]
  &= \frac{R_{\max}^2}{T} \cdot \frac{1 - \gamma^{2H}}{1 - \gamma^2}
  \, .
\end{align*}

For infinite-horizon settings where $H \rightarrow \infty$ and $\gamma < 1$, the series converges to $1/(1 - \gamma^2)$, this simplifies to
\[
\lVert c \rVert^2 = \frac{R_{\max}^2}{T(1 - \gamma^2)}.
\]

\subsection{Full accounting of perturbation propagation effects}\label{App:error_propagation}

A critical question is whether our derivation of $\|c\|^2$ fully accounts for the propagation of perturbations through the trajectory. For likelihood ratio divergence, we rely on the standard assumption (Section~\ref{sec:problem_setup} and Remark~\ref{rem:clipping}) that weighted rewards remain bounded in $[0, R_{\max}]$, so perturbations to the importance-weighted signal are contained within the same range as on-policy rewards. Since a perturbation at step $h$ in trajectory $j$ affects all subsequent transitions in that trajectory, the bounded-differences indicator is 1 for every $(h',j)$ with $h' \geq h$.

For a perturbation at step $h$ in trajectory $j$, the sum of corresponding coefficients is:
\begin{align*}
\sum_{h'=h}^{H}c_{(h',j)} = \frac{R_{\max}}{T}\sum_{k=0}^{H-h}\gamma^{h-1+k} = \frac{R_{\max}\gamma^{h-1}}{T} \cdot \frac{1 - \gamma^{H-h+1}}{1 - \gamma}
\end{align*}

The actual maximum change in discounted return from this perturbation (worst case: reward changes from 0 to $R_{\max}$) is:
\begin{align*}
|G(\xi^{(j)}) - G(\bar{\xi}^{(j)})| \leq R_{\max}\gamma^{h-1}\sum_{k=0}^{H-h}\gamma^k = R_{\max}\gamma^{h-1} \cdot \frac{1 - \gamma^{H-h+1}}{1 - \gamma}
\end{align*}

When divided by $T$ (because $\hat{\mathcal{L}}_{\mathfrak{D}}(\theta)$ averages over $T$ trajectories), we get exactly the same quantity as the sum of coefficients above. Therefore, the bounded-differences condition holds with equality, confirming that our derivation of $\|c\|^2$ fully accounts for all propagation effects.

This careful accounting of propagation effects allows us to apply McDiarmid's inequality for Markov chains \citep{paulin2018} to obtain the PAC-Bayes bound in Theorem~\ref{Th:PBRL} with the correct constants.

\subsection{Derivation of the PAC-Bayes Bound}

Having established the bounded-differences property and quantified the impact of perturbations via $\|c\|^2$, we now proceed to derive the PAC-Bayes bound on the expected difference between empirical and true losses.

\subsubsection{From McDiarmid to moment generating function}

McDiarmid's inequality for Markov chains (inequality \eqref{eq:paulin}, \citealp{paulin2018}) provides a concentration inequality in the form of a probability tail inequality for the deviation between empirical and expected losses. From this, via well-known techniques we can derive a bound on the moment generating function (MGF):

\begin{lemma}[MGF bound for Markov chains] \label{lem:mgf}
For any $\kappa > 0$ and policy parameters $\theta \in \Theta$:
\begin{equation}\label{eq:mgf}
\mathbb{E}_{\mathfrak{D} \sim \mathcal{M}^{(T)}}\left[\exp\left(\kappa(\mathcal{L}(\theta) - \hat{\mathcal{L}}_{\mathfrak{D}}(\theta))\right)\right] \leq \exp\left( \frac{\kappa^2\|c\|^2\tau_{\min}}{8} \right)
\end{equation}
where $\tau_{\min}$ is the mixing time of the Markov chain induced by a behavior policy $\pi_b$.
\end{lemma}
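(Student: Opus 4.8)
The plan is to treat \eqref{eq:mgf} not as a consequence of the tail inequality \eqref{eq:paulin}, but as the sub-Gaussian moment estimate that \emph{produces} it, and to obtain it by invoking Paulin's Marton-coupling argument with the explicit bounded-differences constants of Lemma~\ref{lem:bd}. First I would fix $\theta$, set $f(\mathfrak{D}) := \hat{\mathcal{L}}_{\mathfrak{D}}(\theta)$, and note that by the definition of the losses in \eqref{eq:losses} we have $\mathbb{E}_{\mathfrak{D}\sim\mathcal{M}^{(T)}}[f(\mathfrak{D})] = \mathcal{L}(\theta)$, so the exponent appearing in \eqref{eq:mgf} is exactly $\kappa\bigl(f(\mathfrak{D}) - \mathbb{E}f(\mathfrak{D})\bigr)$. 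The data are the $HT$ transition tuples making up the $T$ trajectories; viewed jointly they form a (time-inhomogeneous) Markov chain on the product space, the within-trajectory transitions carrying the dependence and the reset to $\nu$ at each episode boundary introducing independence (the strongest form of mixing).

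Next I would upgrade the single-transition estimate of Lemma~\ref{lem:bd} to the full bounded-differences hypothesis required by \citep{paulin2018}. Altering several coordinates at once is handled by a standard coordinate-by-coordinate telescoping through the triangle inequality, which yields $f(x)-f(y)\le\sum_i c_i\,\mathbb{I}[x_i\neq y_i]$ for arbitrary configurations, with the re-indexed constants $c_i = c_{(h,j)} = \gamma^{h-1}R_{\max}/T$ and the exact norm $\|c\|^2 = R_{\max}^2(1-\gamma^{2H})/\bigl(T(1-\gamma^2)\bigr)$ already computed in \eqref{eq:vec-c-norm}. This is precisely the input to Paulin's framework.

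I would then read off Paulin's intermediate sub-Gaussian bound $\mathbb{E}[\exp(\kappa(f-\mathbb{E}f))]\le\exp\bigl(\kappa^2\|c\|^2\tau_{\min}/8\bigr)$, which is the quantity his Chernoff step converts into \eqref{eq:paulin}. As an internal consistency check I would run that step in reverse: minimizing $-\kappa t + \kappa^2\|c\|^2\tau_{\min}/8$ over $\kappa>0$ at $\kappa = 4t/(\|c\|^2\tau_{\min})$ returns the exponent $-2t^2/(\|c\|^2\tau_{\min})$ of \eqref{eq:paulin}, confirming that the constant $1/8$ is the correct one. Equivalently, since for fixed $\theta$ the $T$ episodes are independent, one may apply Paulin per trajectory (a length-$H$ chain with mixing time $\tau_{\min}$) and factor the joint generating function, in which case the per-episode norms sum back to the same $\|c\|^2\tau_{\min}$; this route makes the cross-trajectory structure transparent and avoids reasoning about the concatenated chain directly.

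The main obstacle is the modeling step rather than any calculation: one must legitimize the Markov-chain treatment of the pooled transitions and verify that the relevant mixing quantity is the policy-induced $\tau_{\min}$, checking in particular that the episode-boundary resets do not inflate it—since independence across episodes only accelerates mixing, $\tau_{\min}$ remains a valid (if conservative) choice. A second pitfall to avoid is the tempting shortcut of deriving \eqref{eq:mgf} directly from the stated tail bound \eqref{eq:paulin}: that direction is lossy and would not recover the sharp constant $1/8$, so the argument must pass through Paulin's moment estimate itself.
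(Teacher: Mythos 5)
Your proposal is correct and takes essentially the same route as the paper: both reduce the lemma to Paulin's Marton-coupling concentration machinery \citep{paulin2018} applied with the bounded-differences constants of Lemma~\ref{lem:bd}, reading off the sub-Gaussian moment estimate $\exp\bigl(\kappa^2\|c\|^2\tau_{\min}/8\bigr)$ that underlies the tail bound \eqref{eq:paulin}. Your remark that this estimate must be taken from Paulin's intermediate MGF bound rather than re-derived from the stated tail inequality (which would lose the sharp constant $1/8$) is in fact a more careful reading than the paper's own one-line justification, which loosely says the MGF bound can be derived ``from'' the tail bound.
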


\subsubsection{Final bound via Sub-Gaussian Quadratic Moment}
\label{App:final_bound_subgaussian}

A direct application of Lemma~\ref{lem:mgf} via the PAC-Bayesian change of measure 
(Lemma~\ref{lem:change_measure}) yields a bound parametrized by the free variable 
$\kappa > 0$: for any fixed posterior $\rho$ and any $\kappa > 0$,
\begin{align}
\mathbb{E}_{\mathfrak{D}}\mathbb{E}_{\theta \sim \rho}[\mathcal{L}(\theta) - \hat{\mathcal{L}}_{\mathfrak{D}}(\theta)] &\leq \frac{\mathbb{E}_{\mathfrak{D}}\left[\mathrm{KL}(\rho \| \mu) \right]}{\kappa} + \frac{\kappa\|c\|^2\tau_{\min}}{8}
\end{align}
This could then be used to obtain a high-probability bound, by using Markov's inequality. Unfortunately, said bound
would hold for a fixed $\kappa$, preventing minimization over $\kappa$. A high-probability bound that holds uniformly over $\kappa$ could be obtained via a grid over $\kappa$ and a union bound, this argument is given in Appendix~\ref{App:optimize_lambda}.
In this section, instead of optimizing over $\kappa$, we leverage the sub-Gaussian nature of the generalization error to derive a tighter bound directly.

From Lemma~\ref{lem:mgf} (Equation \eqref{eq:mgf}), we can establish that the generalization error $\Delta(\theta) = \mathcal{L}(\theta) - \hat{\mathcal{L}}_{\mathfrak{D}}(\theta)$ satisfies the sub-Gaussian property:
\begin{align}
    \mathbb{E}_{\mathfrak{D}}\left[\exp(\kappa \Delta(\theta))\right] \leq \exp\left( \frac{\kappa^2 \|c\|^2 \tau_{\min}}{8} \right) = \exp\left( \frac{\kappa^2 b^2}{2} \right), 
    \hspace{7.5pt}\text{where}\hspace{5pt} 
    b^2 = \frac{\|c\|^2 \tau_{\min}}{4}.
\end{align}

We utilize the property that centered sub-Gaussian random variables satisfy a bound on their quadratic exponential moment. Specifically, following \citet{Wainwright_2019} (Theorem 2.6.IV) and the correction by \citet{hellstrom2020generalization, hellstrom2020generalization_correction}, for a $b$-sub-Gaussian variable $X$, and for any $\lambda \in [0, 1)$, we have $\mathbb{E}[\exp(\frac{\lambda X^2}{2b^2})] \leq \frac{1}{\sqrt{1-\lambda}}$. Choosing $\lambda = \frac{1}{2}$ for simplicity:
\begin{align}\label{eq:quadratic_mgf}
    \mathbb{E}_{\mathfrak{D}}\left[ \exp\left( \frac{\Delta(\theta)^2}{4b^2} \right) \right] \leq \sqrt{2}.
\end{align}

We now apply the PAC-Bayesian change of measure (Lemma~\ref{lem:change_measure}) to the function $f(\theta) = \frac{\Delta(\theta)^2}{4b^2}$:
\begin{align}
    \mathbb{E}_{\theta \sim \rho}\left[ \frac{(\mathcal{L}(\theta) - \hat{\mathcal{L}}_{\mathfrak{D}}(\theta))^2}{4b^2} \right] \leq \mathrm{KL}(\rho \| \mu) + \ln \mathbb{E}_{\theta \sim \mu}\left[ \exp\left( \frac{\Delta(\theta)^2}{4b^2} \right) \right].
\end{align}

Taking the expectation with respect to the data distribution $\mathfrak{D} \sim \mathcal{M}^{(T)}$ and applying Jensen's inequality to the concave function $\ln(\cdot)$ followed by Fubini's theorem to swap expectations in the second term:
\begin{align}
    \mathbb{E}_{\mathfrak{D}} \mathbb{E}_{\theta \sim \rho}\left[ \frac{(\mathcal{L}(\theta) - \hat{\mathcal{L}}_{\mathfrak{D}}(\theta))^2}{4b^2} \right] &\leq \mathbb{E}_{\mathfrak{D}}[\mathrm{KL}(\rho \| \mu)] + \ln \mathbb{E}_{\theta \sim \mu} \mathbb{E}_{\mathfrak{D}} \left[ \exp\left( \frac{\Delta(\theta)^2}{4b^2} \right) \right] \\
    &\leq \mathbb{E}_{\mathfrak{D}}[\mathrm{KL}(\rho \| \mu)] + \ln(\sqrt{2}), \quad \text{(using \ref{eq:quadratic_mgf})}.
\end{align}

Multiplying by $4b^2$ and substituting $b^2 = \frac{\|c\|^2 \tau_{\min}}{4}$:
\begin{align}
    \mathbb{E}_{\mathfrak{D}} \mathbb{E}_{\theta \sim \rho}\left[ (\mathcal{L}(\theta) - \hat{\mathcal{L}}_{\mathfrak{D}}(\theta))^2 \right] &\leq \|c\|^2 \tau_{\min} \left( \mathbb{E}_{\mathfrak{D}}[\mathrm{KL}(\rho \| \mu)] + \ln\sqrt{2} \right).
\end{align}

Finally, we apply Jensen's inequality to the left-hand side, using $(\mathbb{E}[X])^2 \leq \mathbb{E}[X^2]$:
\begin{align}
    \left( \mathbb{E}_{\mathfrak{D}} \mathbb{E}_{\theta \sim \rho}\left[ \mathcal{L}(\theta) - \hat{\mathcal{L}}_{\mathfrak{D}}(\theta) \right] \right)^2 \leq \mathbb{E}_{\mathfrak{D}} \mathbb{E}_{\theta \sim \rho}\left[ (\mathcal{L}(\theta) - \hat{\mathcal{L}}_{\mathfrak{D}}(\theta))^2 \right].
\end{align}

Taking the square root gives the final in-expectation bound:
\begin{align}
    \left| \mathbb{E}_{\mathfrak{D}} \mathbb{E}_{\theta \sim \rho}[\mathcal{L}(\theta) - \hat{\mathcal{L}}_{\mathfrak{D}}(\theta)] \right| \leq \sqrt{\|c\|^2 \tau_{\min} \left( \mathbb{E}_{\mathfrak{D}}[\mathrm{KL}(\rho \| \mu)] + \frac{1}{2}\ln 2 \right)}.
\end{align}

\subsubsection{High-probability bound}\label{App:final_prob_bound_subgaussian}

To obtain the high-probability result (Theorem~\ref{Th:PBRL}), we apply Markov's inequality to $\mathbb{E}_{\theta\sim\mu}\!\left[\exp\!\left(\frac{\Delta(\theta)^2}{4b^2}\right)\right]$ as a random variable over $\mathfrak{D}$ (recall that $\Delta(\theta) = \mathcal{L}(\theta) - \hat{\mathcal{L}}_{\mathfrak{D}}(\theta)$), whose expectation over $\mathfrak{D}$ is at most $\sqrt{2}$ by \eqref{eq:quadratic_mgf} and Fubini. This gives a high-probability event that does not involve $\rho$; the change-of-measure inequality (Lemma~\ref{lem:change_measure}) then yields the following (similar to \citet[Eq.~3]{hellstrom2020generalization_correction}): with probability at least $1-\delta$ over $\mathfrak{D}$, simultaneously for all $\rho\in\mathcal{P}(\Theta)$, 
\begin{align}
    \mathbb{E}_{\theta \sim \rho}\left[ (\mathcal{L}(\theta) - \hat{\mathcal{L}}_{\mathfrak{D}}(\theta))^2 \right] \leq 4b^2 \left( \mathrm{KL}(\rho \| \mu) + \ln\frac{\sqrt{2}}{\delta} \right).
\end{align}
Using Jensen's inequality $(\mathbb{E}_{\theta \sim \rho}[\Delta(\theta)])^2 \leq \mathbb{E}_{\theta \sim \rho}[\Delta(\theta)^2]$ and taking the square root:
\begin{align}
    \left| \mathbb{E}_{\theta \sim \rho}[\mathcal{L}(\theta) - \hat{\mathcal{L}}_{\mathfrak{D}}(\theta)] \right| \leq \sqrt{\|c\|^2 \tau_{\min} \left( \mathrm{KL}(\rho \| \mu) + \ln\frac{\sqrt{2}}{\delta} \right)}.
\end{align}
Substituting $\|c\|^2$ from \eqref{eq:vec-c-norm} yields the result in Theorem~\ref{Th:PBRL}.

Recalling that $\mathcal{L}(\theta) = -V_{\pi_\theta}$ from \eqref{eq:true_loss}, we obtain the PAC-Bayes lower bound on the expected value function:
\begin{align}
\mathbb{E}_{\theta \sim \rho}[V_{\pi_\theta}] \geq \mathbb{E}_{\theta \sim \rho}[-\hat{\mathcal{L}}_{\mathfrak{D}}(\theta)] - \sqrt{\|c\|^2 \tau_{\min} \left( \mathrm{KL}(\rho \| \mu) + \ln\frac{\sqrt{2}}{\delta} \right)}. \label{eq:main_lower_bound}
\end{align}

where the true expected value is lower-bounded by an empirical estimate minus an uncertainty term that accounts for limited data ($1/T$), temporal correlations ($\tau_{\min}$), and posterior complexity ($\mathrm{KL}(\rho\|\mu)$).

\subsubsection{Optimization of the bound}
\label{App:optimize_lambda}

The high-probability bound derived above takes the following form: simultaneously for all $\rho$, it holds that
\begin{align}
    \mathbb{E}_{\theta \sim \rho}[\mathcal{L}(\theta)] \leq \underbrace{\mathbb{E}_{\theta \sim \rho}[\hat{\mathcal{L}}_{\mathfrak{D}}(\theta)]}_{\text{Linear in } \rho} + \underbrace{\sqrt{\|c\|^2 \tau_{\min} \left( \mathrm{KL}(\rho \| \mu) + \ln\frac{\sqrt{2}}{\delta} \right)}}_{\text{Concave in KL (Non-convex in } \rho)}.
\end{align}
Directly optimizing this objective with respect to the posterior $\rho$ is challenging because the square root of the KL divergence makes the overall objective non-convex. To address this, we employ a variational approximation to obtain a surrogate objective that is amenable to alternating convex optimization.

We utilize the algebraic identity for the square root function: for any $x \geq 0$, $\sqrt{x} = \inf_{\lambda > 0} \left( \frac{x}{2\lambda} + \frac{\lambda}{2} \right)$. By identifying $x$ with the complexity term inside the square root, we obtain an upper bound on the true risk that depends on the auxiliary parameter $\lambda$:
\begin{align}
    \mathbb{E}_{\theta \sim \rho}[\mathcal{L}(\theta)] \leq \inf_{\lambda > 0} \mathcal{J}(\rho, \lambda),
\end{align}
where the surrogate objective $\mathcal{J}(\rho, \lambda)$ is defined as:
\begin{align}
    \mathcal{J}(\rho, \lambda) = \mathbb{E}_{\theta \sim \rho}[\hat{\mathcal{L}}_{\mathfrak{D}}(\theta)] + \frac{\|c\|^2 \tau_{\min} \left( \mathrm{KL}(\rho \| \mu) + \ln\frac{\sqrt{2}}{\delta} \right)}{2\lambda} + \frac{\lambda}{2}.
\end{align}
This surrogate objective has two desirable properties that justify the alternating optimization procedure in our algorithm (Section~\ref{sec:algorithm}):

\textbf{Convexity in $\rho$:} \hspace{2.5pt}
For a fixed $\lambda$, the term becomes a linear scaling of the KL divergence, which is convex in $\rho$. This allows for stable gradient-based updates of the posterior parameters.

\textbf{Convexity in $\lambda$:} \hspace{2.5pt}
For a fixed $\rho$, the function is strictly convex in $\lambda$, allowing for an analytic solution:
\begin{align}\label{eq:optimal_lambda}
    \lambda^* = \sqrt{\|c\|^2 \tau_{\min} \left( \mathrm{KL}(\rho \| \mu) + \ln\frac{\sqrt{2}}{\delta} \right)}.
\end{align}
Minimizing $\mathcal{J}(\rho, \lambda)$ via alternating updates is therefore mathematically equivalent to minimizing the upper bound of the rigorous high-probability inequality derived in Section~\ref{App:final_prob_bound_subgaussian}.

\section{Alternative derivation via union bound and a grid over $\kappa$}

Below, we provide another alternative derivation, keeping the parameter $\kappa$ and optimizing it via a grid search and union bound following the procedure from \citet{Alquier_2024}.

\subsection{PAC-Bayes change of measure}

Starting from the MGF bound above (Lemma~\ref{lem:mgf}), we can follow the standard PAC-Bayes derivation. Let $\Theta$ be our parameter space and let $\mu \in \mathcal{P}(\Theta)$ be a prior distribution over $\Theta$ chosen independently of the data. For any posterior distribution $\rho \in \mathcal{P}(\Theta)$ (which may depend on $\mathfrak{D}$), we apply the change of measure inequality that follows from Donsker–Varadhan variational formula \citep{Donsker-Varadhan-I,Donsker-Varadhan-IV}.

Let $f(\theta) = \kappa(\mathcal{L}(\theta) - \hat{\mathcal{L}}_{\mathfrak{D}}(\theta))$. Applying Lemma~\ref{lem:change_measure}:
\begin{align}
\mathbb{E}_{\theta \sim \rho}[\kappa(\mathcal{L}(\theta) - \hat{\mathcal{L}}_{\mathfrak{D}}(\theta))] &\leq \mathrm{KL}(\rho \| \mu) + \ln\mathbb{E}_{\theta \sim \mu}[\exp(\kappa(\mathcal{L}(\theta) - \hat{\mathcal{L}}_{\mathfrak{D}}(\theta)))]
\end{align}

\subsection{Combining with the MGF bound}

Taking the expectation with respect to $\mathfrak{D} \sim \mathcal{M}^{(T)}$ on both sides:
\begin{align}
\mathbb{E}_{\mathfrak{D}}\mathbb{E}_{\theta \sim \rho}[\kappa(\mathcal{L}(\theta) - \hat{\mathcal{L}}_{\mathfrak{D}}(\theta))] &\leq \mathbb{E}_{\mathfrak{D}}\left[\mathrm{KL}(\rho \| \mu) \right] + \mathbb{E}_{\mathfrak{D}}\left[\ln\mathbb{E}_{\theta \sim \mu}[\exp(\kappa(\mathcal{L}(\theta) - \hat{\mathcal{L}}_{\mathfrak{D}}(\theta)))]\right]
\end{align}

Applying Jensen's inequality to the concave function $\ln(\cdot)$ yields:
\begin{align}
\mathbb{E}_{\mathfrak{D}}\mathbb{E}_{\theta \sim \rho}[\kappa(\mathcal{L}(\theta) - \hat{\mathcal{L}}_{\mathfrak{D}}(\theta))] &\leq \mathbb{E}_{\mathfrak{D}}\left[\mathrm{KL}(\rho \| \mu) \right] + \ln\mathbb{E}_{\mathfrak{D}}\left[\mathbb{E}_{\theta \sim \mu}[\exp(\kappa(\mathcal{L}(\theta) - \hat{\mathcal{L}}_{\mathfrak{D}}(\theta)))]\right]
\end{align}

By Fubini-Tonelli's theorem, we swap the order of expectations in the second term and apply the MGF bound from Lemma~\ref{lem:mgf}:
\begin{align}
\mathbb{E}_{\mathfrak{D}}\mathbb{E}_{\theta \sim \rho}[\kappa(\mathcal{L}(\theta) - \hat{\mathcal{L}}_{\mathfrak{D}}(\theta))] &\leq \mathbb{E}_{\mathfrak{D}}\left[\mathrm{KL}(\rho \| \mu) \right] + \ln\mathbb{E}_{\theta \sim \mu}\mathbb{E}_{\mathfrak{D}}[\exp(\kappa(\mathcal{L}(\theta) - \hat{\mathcal{L}}_{\mathfrak{D}}(\theta)))] \\
&\leq \mathbb{E}_{\mathfrak{D}}\left[\mathrm{KL}(\rho \| \mu) \right] + \ln\mathbb{E}_{\theta \sim \mu}\left[\exp\left(\frac{\kappa^2\|c\|^2\tau_{\min}}{8}\right)\right] \\
&= \mathbb{E}_{\mathfrak{D}}\left[\mathrm{KL}(\rho \| \mu) \right] + \frac{\kappa^2\|c\|^2\tau_{\min}}{8}
\end{align}

Dividing by $\kappa > 0$:
\begin{align}\label{eq:pb-kappa}
\mathbb{E}_{\mathfrak{D}}\mathbb{E}_{\theta \sim \rho}[\mathcal{L}(\theta) - \hat{\mathcal{L}}_{\mathfrak{D}}(\theta)] &\leq \frac{\mathbb{E}_{\mathfrak{D}}\left[\mathrm{KL}(\rho \| \mu) \right]}{\kappa} + \frac{\kappa\|c\|^2\tau_{\min}}{8}
\end{align}

where The true expected value is lower-bounded by an empirical estimate minus an uncertainty term that accounts for limited data ($1/T$), temporal correlations ($\tau_{\min}$), and posterior complexity ($\mathrm{KL}(\rho\|\mu)$).

To convert \eqref{eq:pb-kappa} into a high-probability bound uniform over a finite grid $\mathbb{K} \subset (0,+\infty)$ of candidate values of $\kappa$, one applies a union bound across all $\kappa \in \mathbb{K}$, at the cost of replacing $\log(1/\delta)$ with $\log(\mathrm{card}(\mathbb{K})/\delta)$ in the final expression. The complete argument---including the precise grid construction and the resulting corollary---is given in \citet[][Section~2.1.4, Theorem~2.4 and Corollary~2.5]{Alquier_2024}.

\section{A note on the Markov assumption for Bellman errors}\label{sec:conter-example}

The work of \citet{tasdighi2025deepexplorationpacbayes} makes an interesting theoretical contribution by modeling the sequence of Bellman errors as a Markov chain. While this approach provides valuable insights, it is worth examining the conditions under which this assumption holds.

We present a simple illustrative example that highlights when the Markov property may not apply to Bellman error sequences. Consider a basic MDP with four states $\{A, B, C, D\}$ and the following transition dynamics with discount factor $\gamma = 0$:
\begin{itemize}
    \item State $A$ transitions to state $C$ with reward $r=0$
    \item State $B$ transitions to state $D$ with reward $r=0$
    \item State $C$ has a self-loop with reward $r=+1$
    \item State $D$ has a self-loop with reward $r=-1$
\end{itemize}

\begin{figure}[H]
\begin{center}
    \begin{tikzpicture}[->, >=stealth, auto, node distance=0.5cm and 2cm, 
        every node/.style={minimum size=0.5cm, font=\footnotesize, align=center},
        state/.style={circle, draw, fill=LightBlue1!20},
        terminal/.style={circle, draw, fill=Coral2!20}]

        \node[state] (A) {\(A\)};
        \node[state, below=of A] (B) {\(B\)};
        \node[terminal, right=of A] (C) {\(C\)};
        \node[terminal, right=of B] (D) {\(D\)};

        \draw (A) edge[left, DodgerBlue3, thick] node[below] {\(r=0\)} (C);

        \draw (B) edge[left, DodgerBlue3, thick] node[below] {\(r=0\)} (D);

        \draw (C) edge[loop right, Coral4, thick] node[right] {\(r=+1\)} (C);

        \draw (D) edge[loop right, Coral4, thick] node[right] {\(r=-1\)} (D);

    \end{tikzpicture}
\end{center}
\caption{A basic four-state MDP}
\end{figure}

Using a value function $V \equiv 0$ that assigns zero value to all states, we can compute the Bellman errors:
\begin{align}
    \delta_t(A) &= r(A) + \gamma \max_a \mathbb{E}[V(s')|s=A,a] - V(A) = 0 + 0 \cdot V(C) - 0 = 0\\
    \delta_t(B) &= r(B) + \gamma \max_a \mathbb{E}[V(s')|s=B,a] - V(B) = 0 + 0 \cdot V(D) - 0 = 0
\end{align}

Both states $A$ and $B$ yield the same Bellman error $\delta_t = 0$ at time $t$. However, the subsequent errors differ:
\begin{align}
    \delta_{t+1}(C) &= r(C) + \gamma \max_a \mathbb{E}[V(s')|s=C,a] - V(C) = +1 + 0 \cdot V(C) - 0 = +1\\
    \delta_{t+1}(D) &= r(D) + \gamma \max_a \mathbb{E}[V(s')|s=D,a] - V(D) = -1 + 0 \cdot V(D) - 0 = -1
\end{align}

This example demonstrates that the current Bellman error value $\delta_t = 0$ alone does not uniquely determine the distribution of $\delta_{t+1}$, which depends on the underlying state that generated the current error.
Although this example used discount $\gamma=0$ for the sake of simplicity, it is worth noting that the same example can be done with $\gamma > 0$, leading to the same conclusion. 

This observation suggests that the Markov assumption for Bellman error sequences may require additional conditions or refinements to hold. Such considerations could be valuable for future theoretical developments building upon the framework proposed by \citet{tasdighi2025deepexplorationpacbayes}.

\section{REINFORCE Trick for Policy-Level Gradients}
\label{App:REINFORCE}

We prove that the REINFORCE trick can be extended from action-level to policy-level gradients, enabling tractable optimization of expectations over policy parameters.

\begin{theorem}[Policy-Level REINFORCE]
For a continuous posterior distribution $\rho_{\upsilon,\sigma}(\theta) = \mathcal{N}(\upsilon, \operatorname{diag}(\sigma^2))$ over policy parameters $\theta$, the gradient of the expected return can be computed as:
\begin{align}
\nabla_{\upsilon,\sigma} \mathbb{E}_{\theta \sim \rho}\left[\mathbb{E}[R_t | \pi_{\theta}]\right] = \mathbb{E}_{\theta \sim \rho}\left[\nabla_{\upsilon,\sigma} \log \rho_{\upsilon,\sigma}(\theta) \cdot \mathbb{E}[R_t | \pi_{\theta}]\right]
\end{align}
\end{theorem}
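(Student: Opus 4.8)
The plan is to recognize this as the classical score-function (log-derivative) identity, applied here at the level of the policy parameters $\theta$ rather than at the action level. The essential structural observation—and the reason the identity takes this clean form—is that the inner quantity $\E[R_\tau \mid \pi_\theta]$ is a function of $\theta$ alone and carries \emph{no} direct dependence on the distributional parameters $(\upsilon,\sigma)$: once $\theta$ is fixed, the expected return is a number determined by the MDP dynamics and the deterministic map $\theta \mapsto \pi_\theta$, so $(\upsilon,\sigma)$ enter only through the sampling density. Writing $f(\theta) := \E[R_\tau \mid \pi_\theta]$ and letting $\mathbb{P}_{\upsilon,\sigma}$ denote the Gaussian density of $\rho$, I would first rewrite the left-hand side as $\nabla_{\upsilon,\sigma}\int_{\Theta}\mathbb{P}_{\upsilon,\sigma}(\theta)\,f(\theta)\,d\theta$.

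First I would justify exchanging the gradient with the integral; this is the only analytically delicate step, and I would handle it via dominated convergence (the Leibniz rule). For a nondegenerate diagonal Gaussian the density $\mathbb{P}_{\upsilon,\sigma}(\theta)$ is jointly smooth in $(\theta,\upsilon,\sigma)$ on the open set $\{\sigma>0\}$, and $\nabla_{\upsilon,\sigma}\mathbb{P}_{\upsilon,\sigma}(\theta)$ equals $\mathbb{P}_{\upsilon,\sigma}(\theta)$ times a polynomial in $(\theta-\upsilon)/\sigma$. Because the reward is bounded in $[0,R_{\max}]$, the return satisfies $0 \le f(\theta) \le R_{\max}/(1-\gamma)$ uniformly in $\theta$, so on any compact neighborhood of a fixed $(\upsilon_0,\sigma_0)$ with $\sigma_0>0$ the integrand $\nabla_{\upsilon,\sigma}\bigl[\mathbb{P}_{\upsilon,\sigma}(\theta)\,f(\theta)\bigr]$ is dominated by an integrable Gaussian-type envelope. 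Dominated convergence then licenses the interchange, yielding $\int_{\Theta}\nabla_{\upsilon,\sigma}\mathbb{P}_{\upsilon,\sigma}(\theta)\,f(\theta)\,d\theta$.

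Next I would apply the log-derivative identity $\nabla_{\upsilon,\sigma}\mathbb{P}_{\upsilon,\sigma}(\theta) = \mathbb{P}_{\upsilon,\sigma}(\theta)\,\nabla_{\upsilon,\sigma}\log\mathbb{P}_{\upsilon,\sigma}(\theta)$, valid wherever the density is strictly positive—everywhere, for a nondegenerate Gaussian. Substituting and folding the density back into an expectation gives $\int_{\Theta}\mathbb{P}_{\upsilon,\sigma}(\theta)\,\nabla_{\upsilon,\sigma}\log\mathbb{P}_{\upsilon,\sigma}(\theta)\,f(\theta)\,d\theta = \E_{\theta\sim\rho}\bigl[\nabla_{\upsilon,\sigma}\log\mathbb{P}_{\upsilon,\sigma}(\theta)\,f(\theta)\bigr]$, which is exactly the claimed identity once $f(\theta)=\E[R_\tau\mid\pi_\theta]$ is restored. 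For concreteness I would also record the explicit score components for the diagonal Gaussian, namely $\nabla_{\upsilon_i}\log\mathbb{P}_{\upsilon,\sigma} = (\theta_i-\upsilon_i)/\sigma_i^{2}$ and $\nabla_{\sigma_i}\log\mathbb{P}_{\upsilon,\sigma} = (\theta_i-\upsilon_i)^{2}/\sigma_i^{3}-1/\sigma_i$, since these are the quantities the sampling-based estimator evaluates in practice.

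The main obstacle is the differentiation-under-the-integral step: it is the only place where genuine analysis rather than algebra is required, and it is precisely where the boundedness of the return and the nondegeneracy $\sigma>0$ of the posterior do the real work. Everything else is the textbook REINFORCE manipulation, and I would emphasize in the write-up that what makes the policy-level version legitimate is the independence of $f(\theta)$ from $(\upsilon,\sigma)$—the same observation that fails at the action level when the sampling distribution itself depends on the parameters being differentiated.
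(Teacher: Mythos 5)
Your proposal is correct and follows the same core route as the paper's proof: exchange the gradient with the expectation, then apply the log-derivative identity $\nabla_{\upsilon,\sigma}\mathbb{P}_{\upsilon,\sigma}(\theta)=\mathbb{P}_{\upsilon,\sigma}(\theta)\,\nabla_{\upsilon,\sigma}\log\mathbb{P}_{\upsilon,\sigma}(\theta)$, and fold the density back into an expectation. The differences are in execution, and they favor your version. The paper's proof writes the expectation as a discrete sum over parameters $\theta_i$ and additionally decomposes the return as $\sum_a \pi_{\theta_i}(a)q(a)$; this is a notational mismatch with the setting (the posterior $\rho$ is a nondegenerate Gaussian on a continuous parameter space), and the action-level decomposition plays no role in the argument—it is carried along and then collapsed back to $\mathbb{E}[R_\tau\mid\pi_{\theta_i}]$. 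You instead work directly with the integral $\int_\Theta \mathbb{P}_{\upsilon,\sigma}(\theta)f(\theta)\,d\theta$, treat $f(\theta)=\mathbb{E}[R_\tau\mid\pi_\theta]$ as a black-box bounded function, and—crucially—justify the interchange of gradient and integral via dominated convergence, using the boundedness of the return and a Gaussian envelope on a neighborhood where $\sigma>0$. The paper performs that interchange without comment (``moving the gradient inside the summation''), yet it is the only analytically delicate step; your identification of it as such, together with the observation that $f$ carries no dependence on $(\upsilon,\sigma)$, is precisely what makes the policy-level identity legitimate. In short: same approach, but your write-up is the continuous-setting version the theorem actually requires, and it closes the rigor gap the paper leaves open.
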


\begin{proof}
Let $J(\theta) = \mathbb{E}[R_t | \pi_{\theta}]$ denote the expected return of a specific policy parameterization $\theta$. We express the objective as an expectation over the continuous posterior density $\rho_{\upsilon,\sigma}(\theta)$ and apply the log-derivative trick directly:

\begin{align}
\nabla_{\upsilon,\sigma} \mathbb{E}_{\theta \sim \rho}\left[J(\theta)\right] &= \nabla_{\upsilon,\sigma} \int_{\Theta} \rho_{\upsilon,\sigma}(\theta) J(\theta) d\theta \\
&= \int_{\Theta} \nabla_{\upsilon,\sigma} \rho_{\upsilon,\sigma}(\theta) J(\theta) d\theta \\
&= \int_{\Theta} \rho_{\upsilon,\sigma}(\theta) \frac{\nabla_{\upsilon,\sigma} \rho_{\upsilon,\sigma}(\theta)}{\rho_{\upsilon,\sigma}(\theta)} J(\theta) d\theta \quad \text{(Log-derivative identity)} \\
&= \int_{\Theta} \rho_{\upsilon,\sigma}(\theta) \nabla_{\upsilon,\sigma} \log \rho_{\upsilon,\sigma}(\theta) J(\theta) d\theta \\
&= \mathbb{E}_{\theta \sim \rho}\left[\nabla_{\upsilon,\sigma} \log \rho_{\upsilon,\sigma}(\theta) \cdot J(\theta)\right]
\end{align}
Substituting $J(\theta)$ back into the expression yields the theorem statement.
\end{proof}
This result enables us to estimate the gradient via sampling: we sample policies $\{\theta_i\}$ from the posterior $\rho$, evaluate their expected returns, and compute the weighted gradient of the log-probability density. This extends the classical REINFORCE trick from action spaces to parameter spaces, allowing efficient optimization of posterior distributions over policies.

\section{PB-SAC Algorithm}\label{algo:PB-SAC}
This appendix presents the complete algorithmic details of our PAC-Bayes Soft Actor-Critic (PB-SAC) method. Figure~\ref{fig:algorithm-illustration-2} provides a high-level illustration of the algorithm flow described in Section~\ref{sec:algorithm}, showing the interaction between the actor, critic, and bound computation components. The periodic posterior update (step 2) guides exploration through PAC-Bayes bounds (step 3) while maintaining standard SAC training (step 1) using an adaptive sampling (step 4). Section~\ref{sec:pseudo-code} provides the complete pseudocode with implementation details, including the posterior-guided exploration mechanism, the alternating bound optimization, and the integration with the critic adaptation loop.
\begin{figure}[H]
\centering
\includegraphics[width=\textwidth]{figures/PB-SAC.png}
\caption{Illustration of our algorithm PB-SAC}
\label{fig:algorithm-illustration-2}
\end{figure}
\subsection{Pseudo Code}
\newpage
\label{sec:pseudo-code}
\begin{algorithm}[H]
\caption{PAC-Bayes Soft Actor-Critic (PB-SAC)}
\SetAlgoLined
\DontPrintSemicolon
\KwResult{Policy $\pi_\theta$, posterior $\rho$, PAC-Bayes bound}

\kwInit{Actor $\pi_\theta$, Critics $Q_1, Q_2$, replay buffer $\mathfrak{D}$}
\kwInit{Posterior $\rho(\theta) = \mathcal{N}(\upsilon, \operatorname{diag}(\sigma^2))$ where $\upsilon$ are the initial actor parameters; Prior $\mu(\theta) = \mathcal{N}(\upsilon_0, \operatorname{diag}(\sigma_0^2))$}
\kwInit{$\text{actor\_frozen} = \text{False}$, prior moving average decay $\iota = 0.99$, post. sampling rate $\epsilon_{\text{explore}} = 0.9$ (linearly decayed)}

\For{$t = 1, 2, \ldots$}{
    \eIf{not actor\_frozen}{
        \textcolor{DodgerBlue4}{\tcc{\textbf{Standard SAC + Posterior-Guided Exploration}}}
        \uIf{\textcolor{black}{$\text{random}() < \epsilon_{\text{explore}}$}}{
            \tcc{\textcolor{black}{Select policy maximizing Q-value from posterior}}
            \textcolor{black}{Sample multiple policies $\{\theta_i\} \sim \rho$}\;
            
            \textcolor{black}{$\theta_{\text{explore}} \gets \arg\max_{\theta_i} Q(s, \pi_\theta(s))$}\;
            
            \textcolor{black}{$a \gets \pi_{\theta_{\text{explore}}}(s)$}\;
        }
        \Else{
            \textcolor{black}{$a \gets \pi_{\theta}(s)$} \tcp*{\textcolor{black}{Current policy (posterior mean)}}
        }
        \textcolor{black}{Execute action $a$ and store transition in $\mathfrak{D}$}\;
        
        \If{\textcolor{black}{$|\mathfrak{D}| \geq \text{batch\_size}$}}{
            \textcolor{black}{Update critics $Q_1, Q_2$ with standard SAC loss}\;
            \textcolor{black}{Update actor $\pi_\theta$ with standard SAC loss}\;
            \textcolor{black}{$\upsilon \gets \theta$} \tcp*{\textcolor{black}{Sync posterior mean to current actor}}
        }
    }{
        \textcolor{DarkOrange3}{\tcc{\textbf{Critic Adaptation Phase (post PAC-Bayes update)}}}
        \textcolor{black}{Sample multiple policies $\{\theta_i\} \sim \rho$ with high sampling rate}\;
        
        \textcolor{black}{Compute critic targets averaged over policy samples $\{\theta_i\}$}\;
        
        \textcolor{black}{Update critics $Q_1, Q_2$ using averaged targets}\;
        
        \kwIf{\textcolor{black}{adaptation steps completed} \textbf{then} \textcolor{black}{$\text{actor\_frozen} \gets \text{False}$}}
    }
    
    \textcolor{Firebrick4}{\tcc{\textbf{PAC-Bayes Update Cycle}}}
    \If{\textcolor{black}{$t \bmod \text{pb\_update\_freq} = 0$}}{
        \textcolor{black}{$\mathfrak{D}_{\text{rollouts}} \gets \text{collect\_fresh\_rollouts}()$}\tcp*{\textcolor{black}{With the current policy}}
        \textcolor{black}{$\tau_{\min} \gets \text{estimate\_mixing\_time}(\mathfrak{D}_{\text{rollouts}})$}\;
        
        \textcolor{black}{$\mathfrak{D}_{\text{train}}, \mathfrak{D}_{\text{test}} \gets \text{split}(\mathfrak{D}_{\text{rollouts}})$}\;
        
        \textcolor{black}{Compute discounted returns $G_{\text{IS}}$ with importance sampling on $\mathfrak{D}_{\text{train}}$}\;
        
        \tcc{\textcolor{black}{Alternating optimization}}
        \For{\textcolor{black}{epoch $= 1, \ldots, \text{pb\_epochs}$}}{
            \tcc{\textcolor{black}{Optimize posterior parameters for fixed $\lambda$}}
            \textcolor{black}{$\sigma, \upsilon \gets \arg\min_{\color{Firebrick4}{\boldsymbol{(\sigma,\upsilon)}}} \mathcal{J}(\rho, \lambda)$}\tcp*{\textcolor{black}{inequality \ref{eq:pb-lambda-obj}}}
            
            \tcc{\textcolor{black}{Optimize $\lambda$ for fixed posterior}}
            \textcolor{black}{$\lambda \gets \arg\min_{\color{Firebrick4}{\boldsymbol{\lambda'}}} \mathcal{J}(\rho, \lambda')$}\tcp*{\textcolor{black}{inequality \ref{eq:pb-lambda-obj}}}
        }
        
        \textcolor{black}{$\text{bound} \gets \text{compute\_pac\_bayes\_bound}(\mathfrak{D}_{\text{test}}, \text{KL}(\rho\|\mu), \tau_{\min})$}\;
        
        \textcolor{black}{$\text{load\_policy\_params}(\upsilon)$}\tcp*{\textcolor{black}{Sync actor to posterior mean}}
        
        \textcolor{black}{$\text{actor\_frozen} \gets \text{True}$} \tcp*{\textcolor{black}{Initiate critic adaptation}}
    }
    \textcolor{DarkOliveGreen4}{\tcc{\textbf{Prior Reset for Maintained Exploration}}}
    \If{\textcolor{black}{$t \bmod \text{pb\_reset\_freq} = 0$}}{
        \textcolor{black}{$\upsilon_0 \gets \iota \cdot \upsilon + (1-\iota) \cdot \upsilon_0; \quad\sigma_0 \gets \iota \cdot \sigma + (1-\iota) \cdot \sigma_0$} \tcp*{\textcolor{black}{Moving average prior update}}
        
        \textcolor{black}{Linearly decay $\iota$ and $\epsilon_{\text{explore}}$}\;
    }
}
\Return{$\pi_\theta$, $\rho$, bound}\;
\end{algorithm}
\section{More Results}\label{App:more_results}
This appendix presents extended experimental results that complement the main paper findings. Figure~\ref{fig:complete_analysis_appendix} shows detailed performance comparisons on Hopper-v5 and Walker2d-v5 environments, demonstrating both the algorithm comparisons (panel a) and the PAC-Bayes analysis (panel b). The empirical discounted return tracks closely with our certified lower bound, validating our analysis above.

\begin{figure}[H]
      \centering
      \begin{subfigure}[t]{0.59\textwidth}
          \centering
          \includegraphics[width=\linewidth]{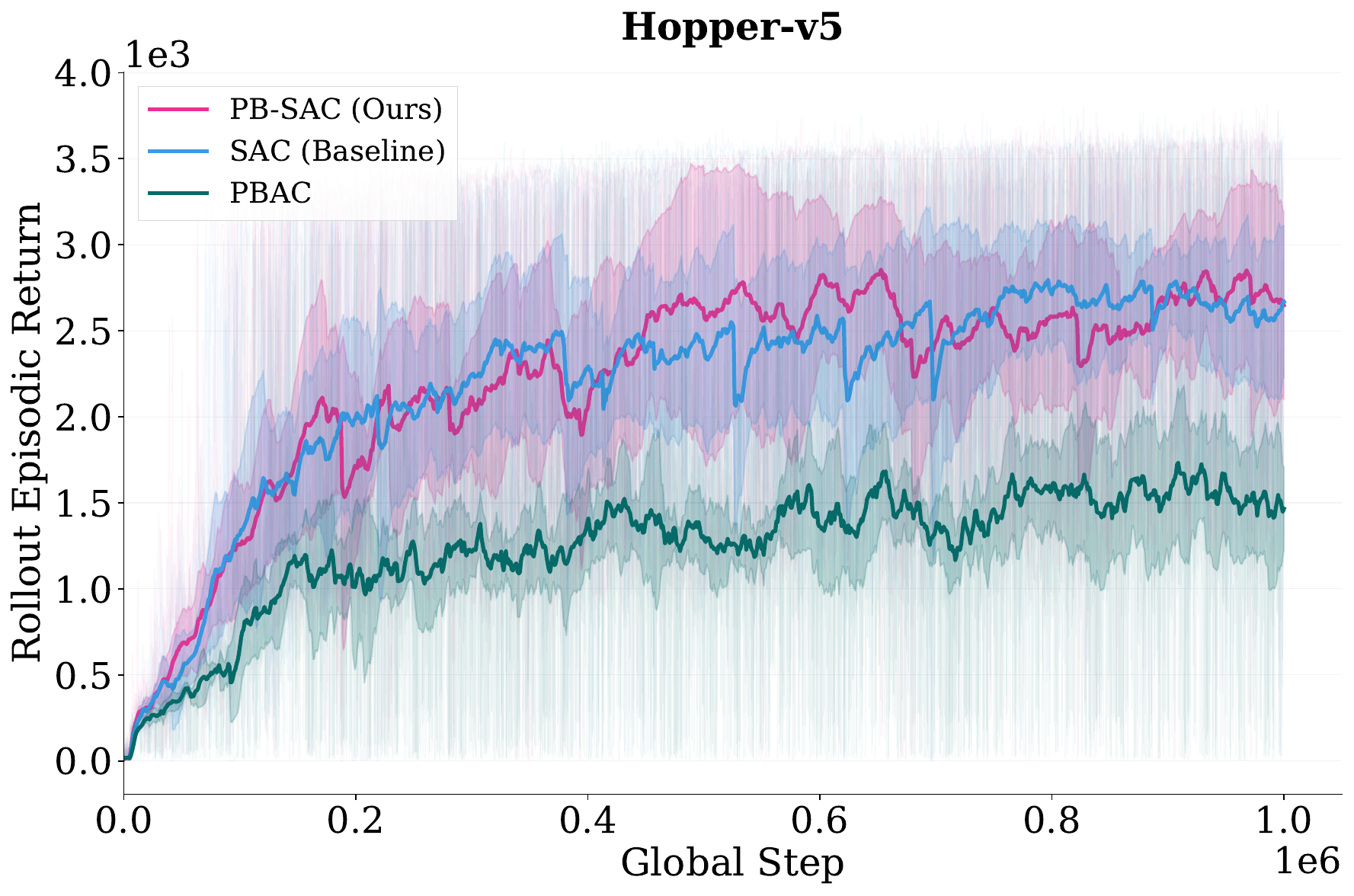}
          \label{fig:comp_hopper}

          \includegraphics[width=\linewidth]{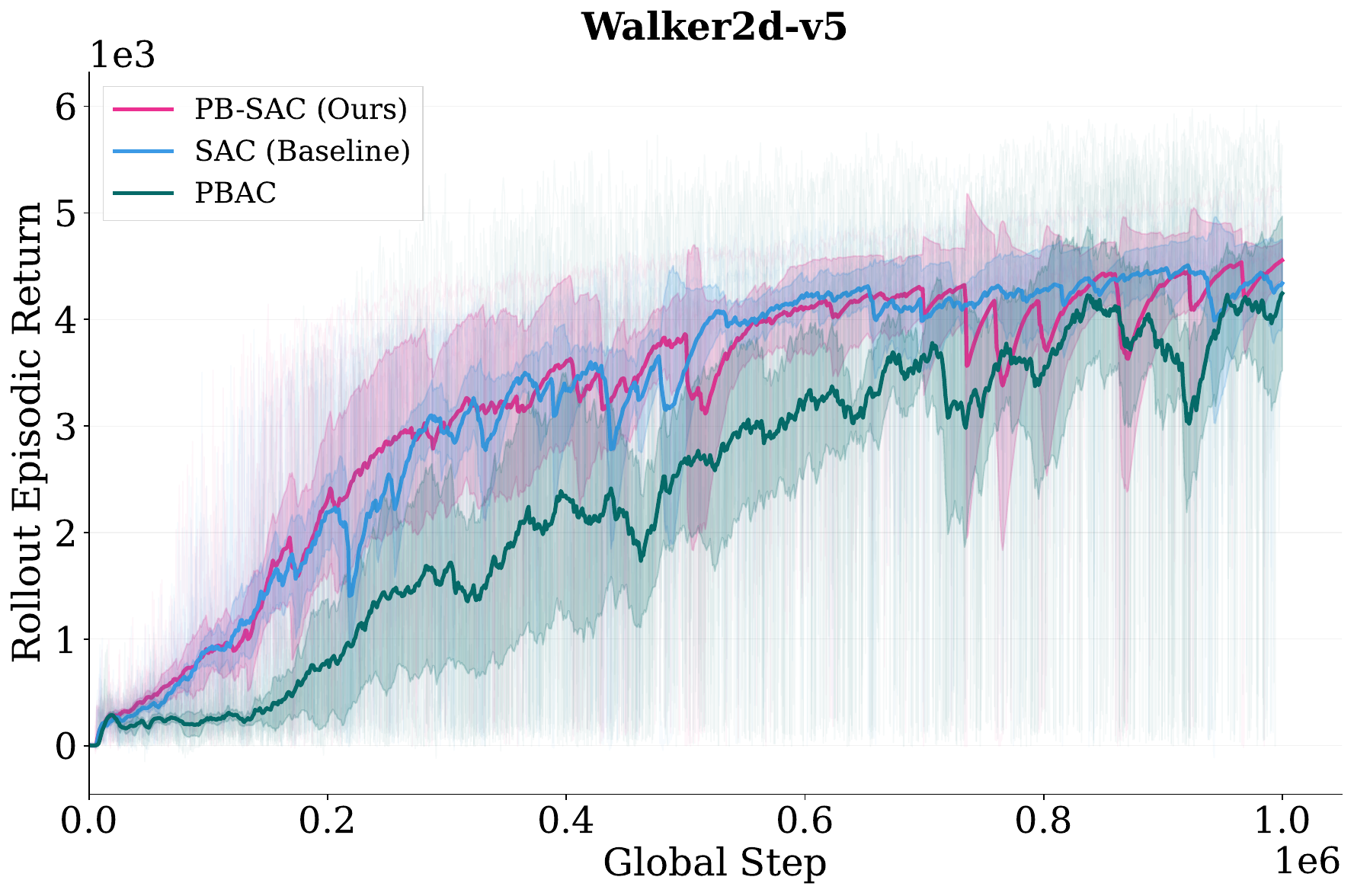}
          \label{fig:comp_walker2d}

          \caption{Algorithm comparisons}
          \label{fig:comparisons_appendix}
      \end{subfigure}
      \hfill
      \begin{subfigure}[t]{0.39\textwidth}
          \centering
          \includegraphics[width=\linewidth]{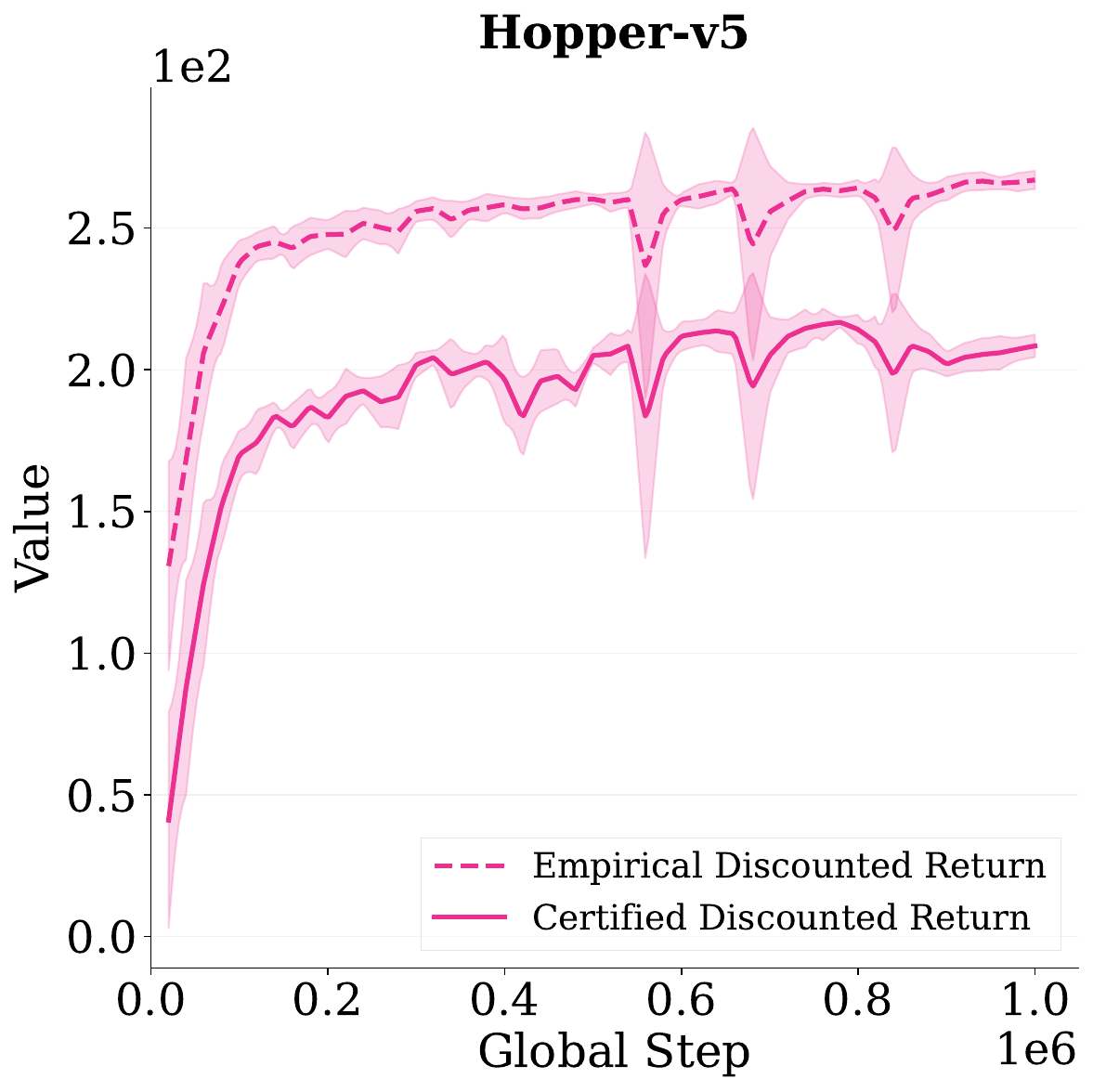}
          \label{fig:pb_hopper}

          \includegraphics[width=\linewidth]{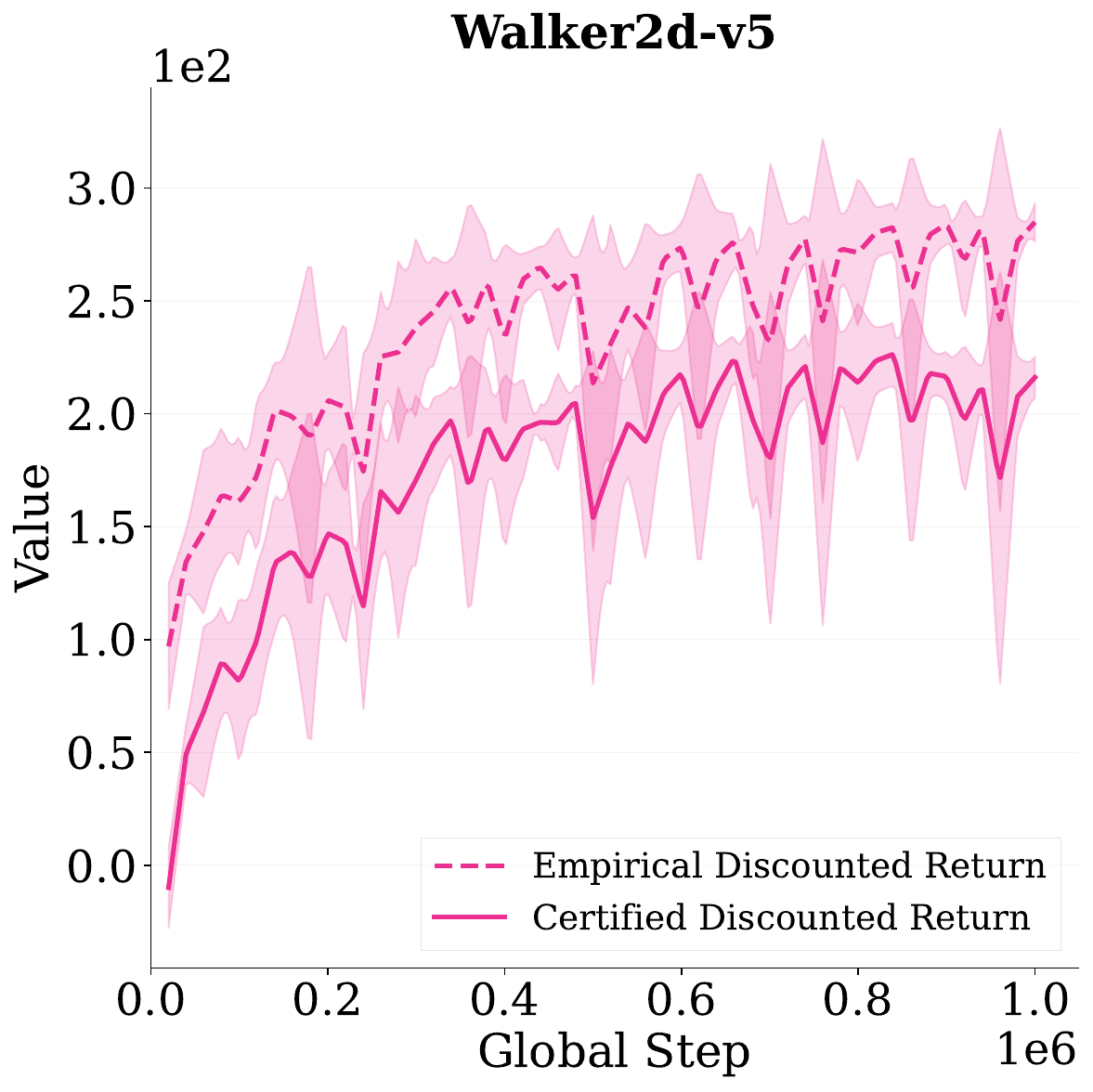}
          \label{fig:pb_walker2d}

          \caption{PAC-Bayes analysis}
          \label{fig:pac_bayes_appendix}
      \end{subfigure}
      \caption{\textbf{(a)} Performance comparison between our \coloredsquare{EC3091} PB-SAC, its baseline \coloredsquare{3C9AE6} SAC, and \coloredsquare{056A68} PBAC from~\citet{tasdighi2025deepexplorationpacbayes}; \textbf{(b)} PAC-Bayes analysis of PB-SAC across environments. The empirical discounted return (dashed line) corresponds to $\mathbb{E}_{\theta \sim \rho}[-\hat{\mathcal{L}}_{\mathfrak{D}}(\theta)]$, and the certified discounted return (solid line) corresponds to the lower bound on $\mathbb{E}_{\theta \sim \rho}[-\mathcal{L}(\theta)]$ provided by Theorem~\ref{Th:PBRL} (after rearranging the terms).}
      \label{fig:complete_analysis_appendix}
  \end{figure}

\subsection{Sparse-reward setting}\label{App:more_resuls_sparse_rewards}

\begin{figure}[H]
      \centering
      \begin{subfigure}[t]{0.59\textwidth}
          \centering
          \includegraphics[width=\linewidth]{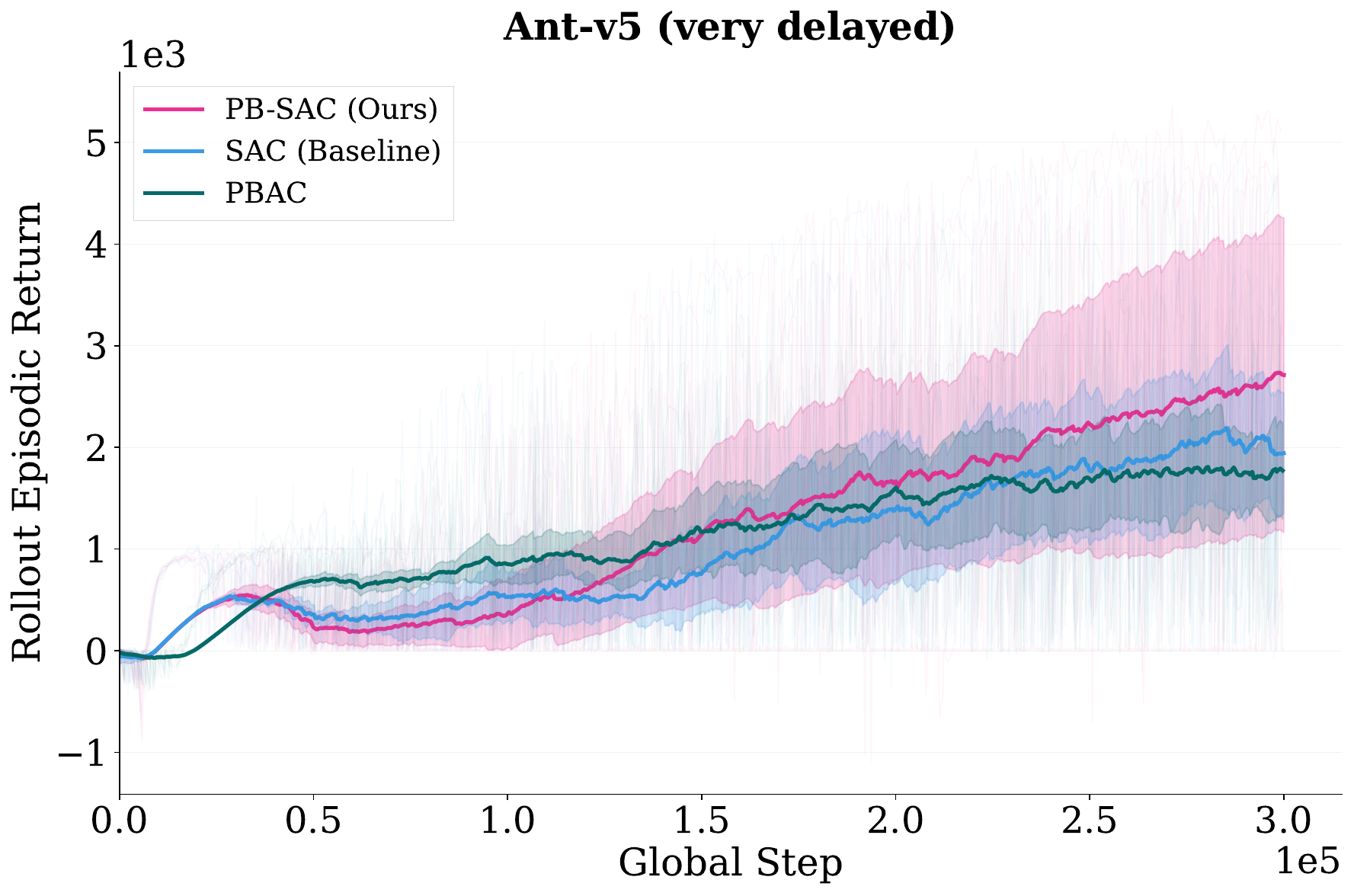}
          \label{fig:comp_ant_delayed}

          \caption{Algorithm comparisons}
          \label{fig:comparisons_appendix_sparse_reward}
      \end{subfigure}
      \hfill
      \begin{subfigure}[t]{0.39\textwidth}
          \centering
          \includegraphics[width=\linewidth]{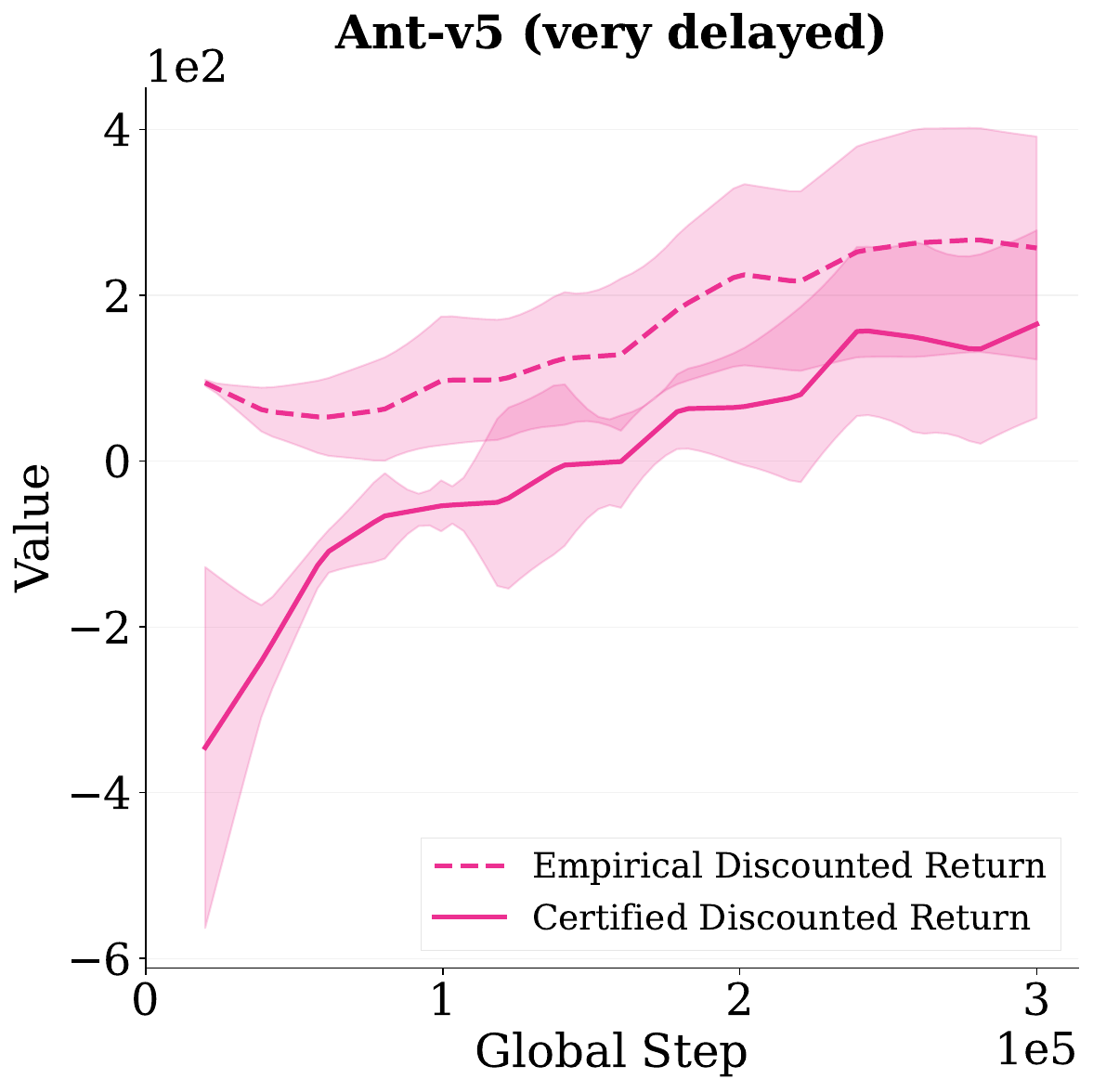}
          \label{fig:pb_ant_delayed}

          \caption{PAC-Bayes analysis}
          \label{fig:pac_bayes_appendix_sparse_reward}
      \end{subfigure}
      \caption{\textbf{(a)} Performance comparison between our \coloredsquare{EC3091} PB-SAC, its baseline \coloredsquare{3C9AE6} SAC, and \coloredsquare{056A68} PBAC from~\citet{tasdighi2025deepexplorationpacbayes}; \textbf{(b)} PAC-Bayes analysis of PB-SAC across environments. The empirical discounted return (dashed line) corresponds to $\mathbb{E}_{\theta \sim \rho}[-\hat{\mathcal{L}}_{\mathfrak{D}}(\theta)]$, and the certified discounted return (solid line) corresponds to the lower bound on $\mathbb{E}_{\theta \sim \rho}[-\mathcal{L}(\theta)]$ provided by Theorem~\ref{Th:PBRL} (after rearranging the terms).}
      \label{fig:complete_analysis_sparse_appendix}
  \end{figure}

\begin{figure}[t]
    \centering
    \begin{subfigure}[b]{0.3\textwidth}
        \includegraphics[width=\textwidth]{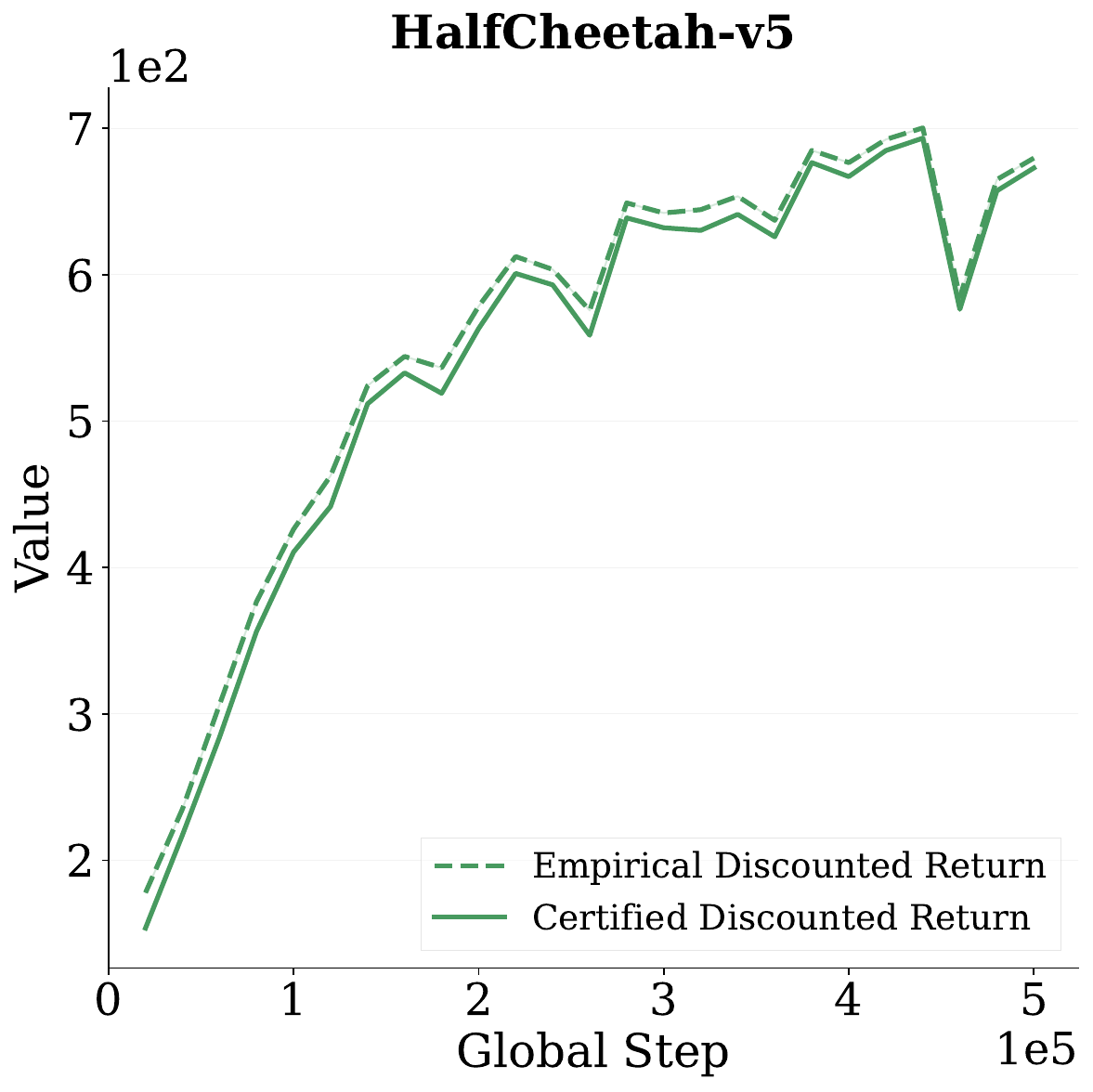}
        \caption{Mixing time: 1}
    \end{subfigure}
    \hfill
    \begin{subfigure}[b]{0.3\textwidth}
        \includegraphics[width=\textwidth]{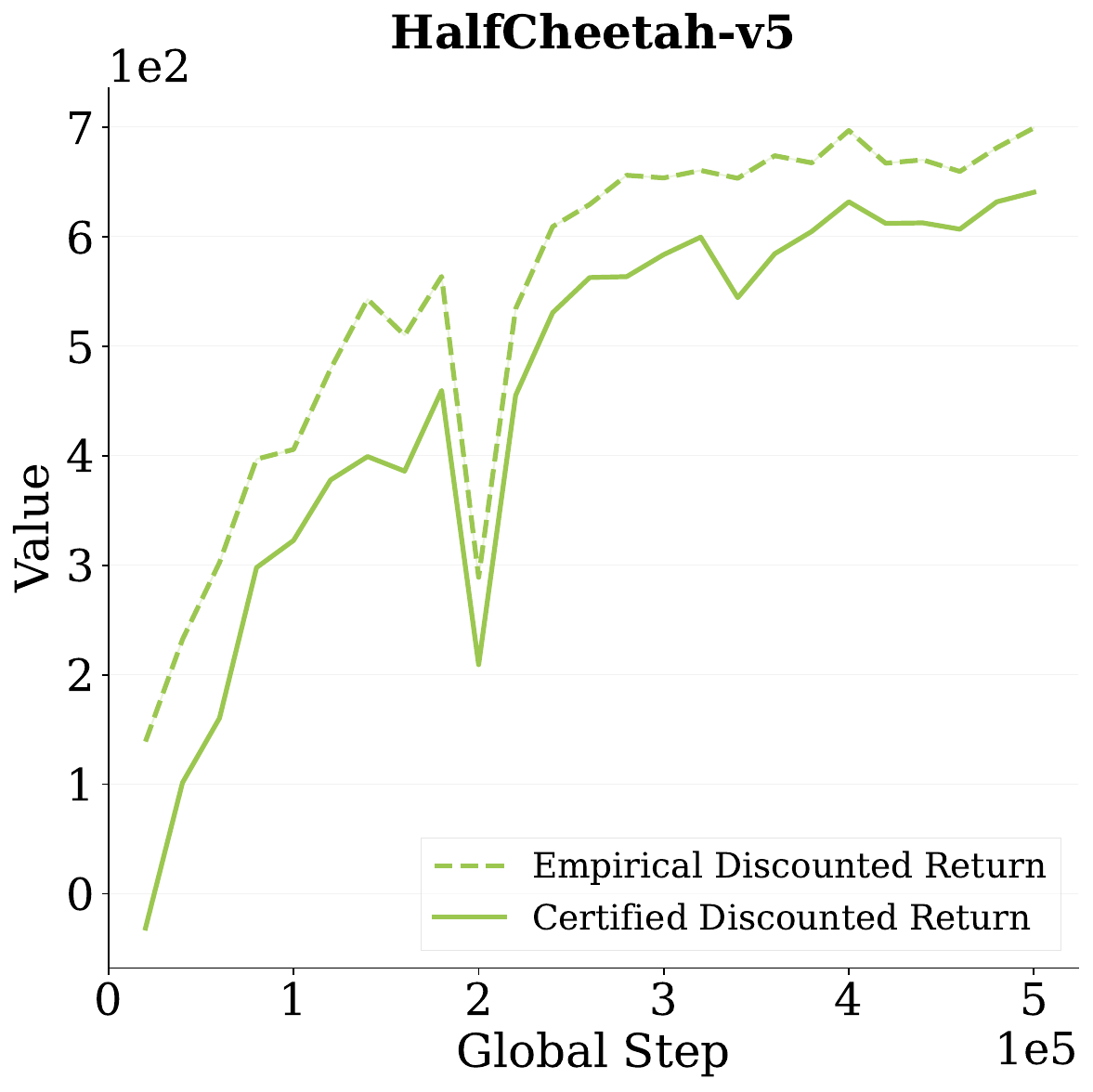}
        \caption{Mixing time: 50}
    \end{subfigure}
    \hfill
    \begin{subfigure}[b]{0.3\textwidth}
        \includegraphics[width=\textwidth]{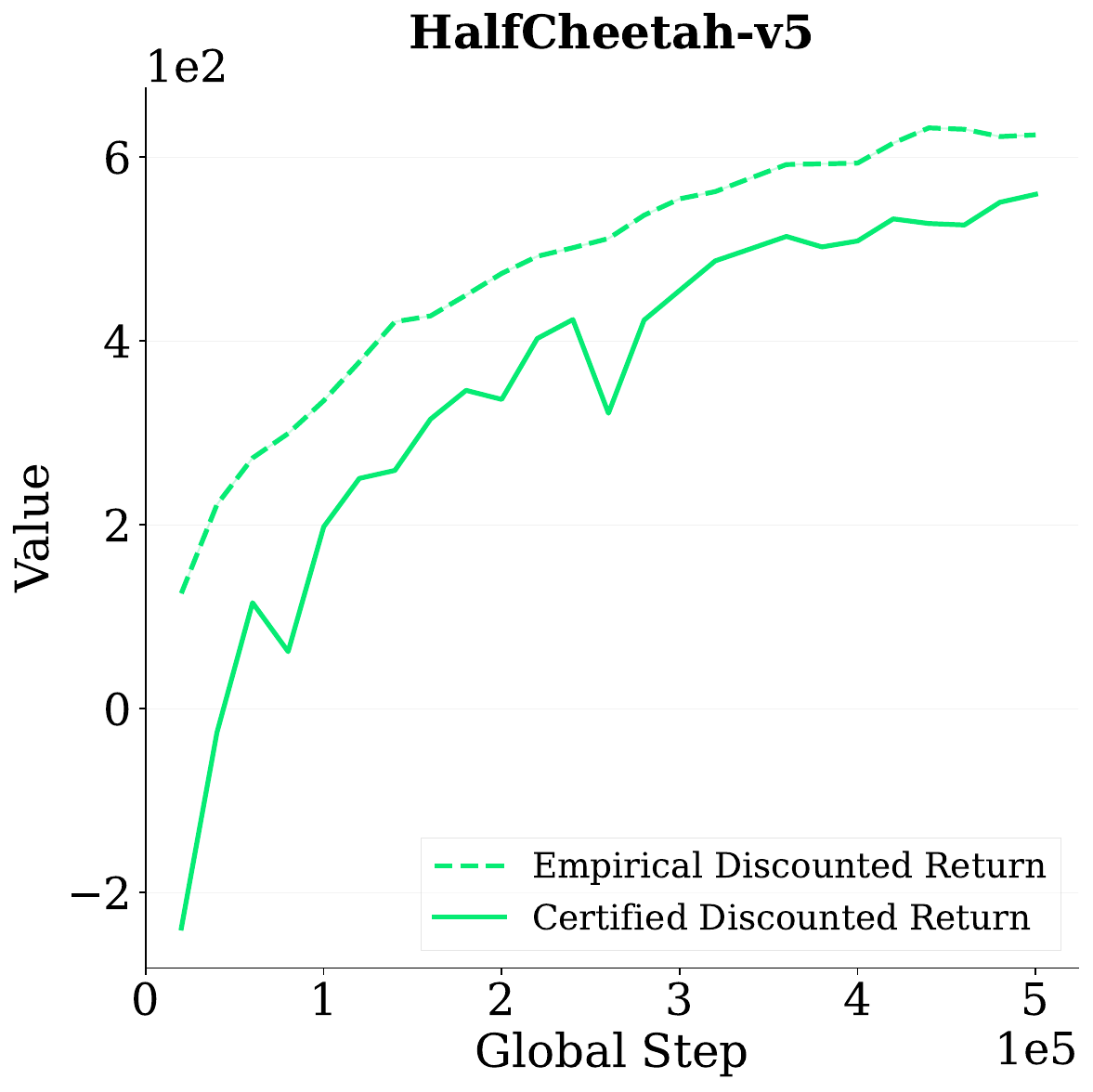}
        \caption{Mixing time: 100}
    \end{subfigure}
    
    \begin{subfigure}[b]{0.3\textwidth}
        \includegraphics[width=\textwidth]{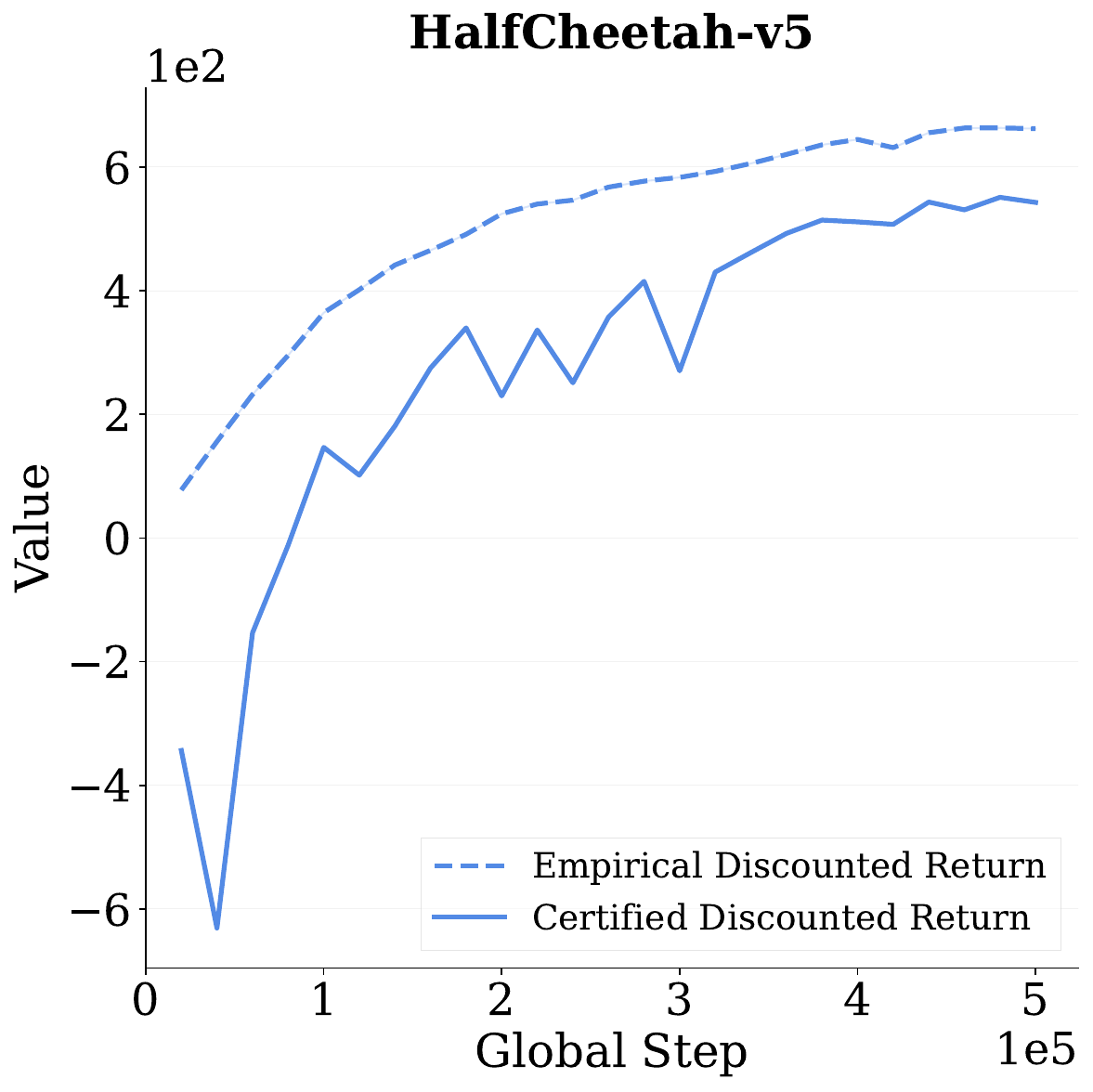}
        \caption{Mixing time: 300}
    \end{subfigure}
    \hfill
    \begin{subfigure}[b]{0.3\textwidth}
        \includegraphics[width=\textwidth]{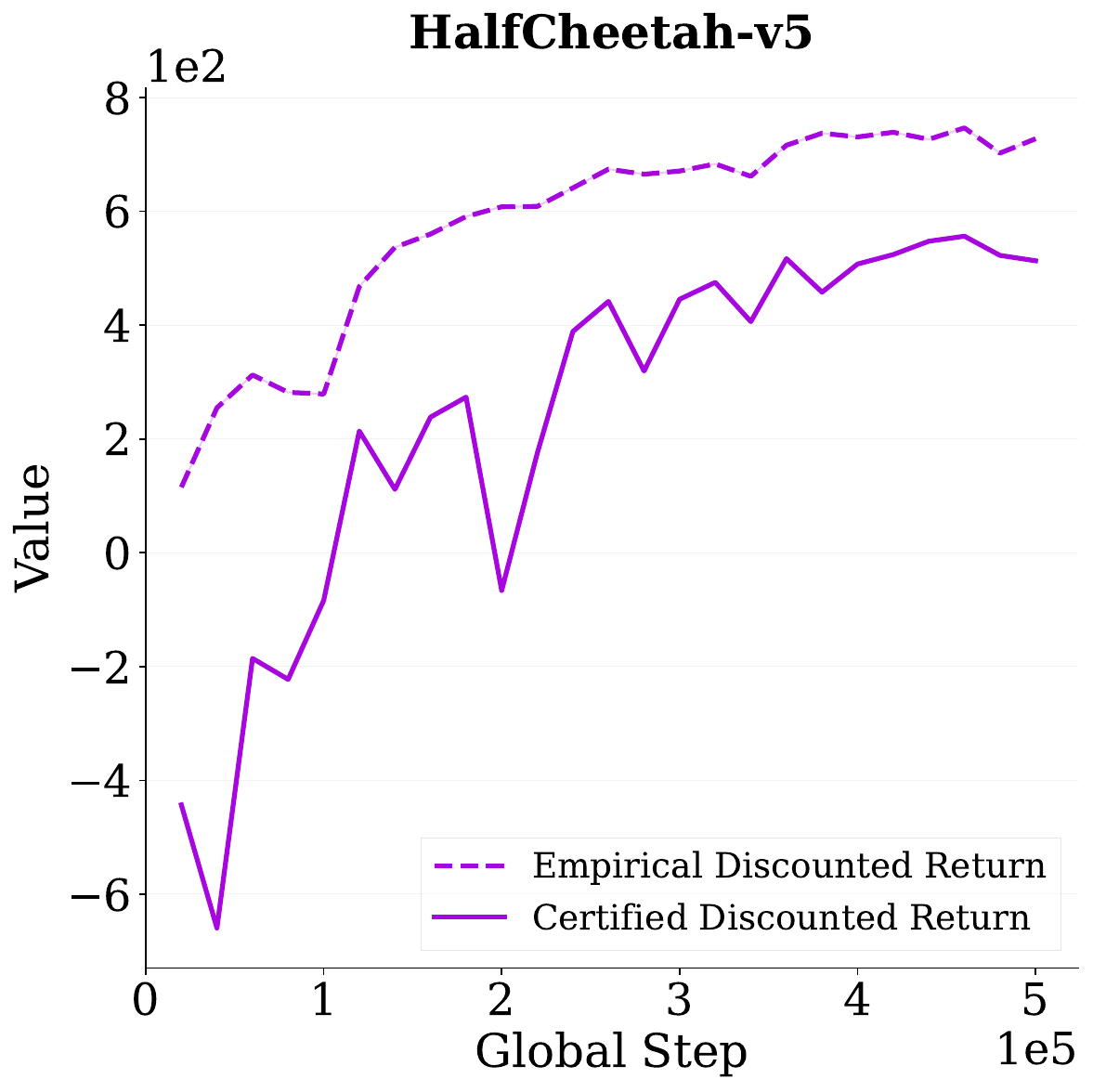}
        \caption{Mixing time: 500}
    \end{subfigure}
    \hfill
    \begin{subfigure}[b]{0.3\textwidth}
        \includegraphics[width=\textwidth]{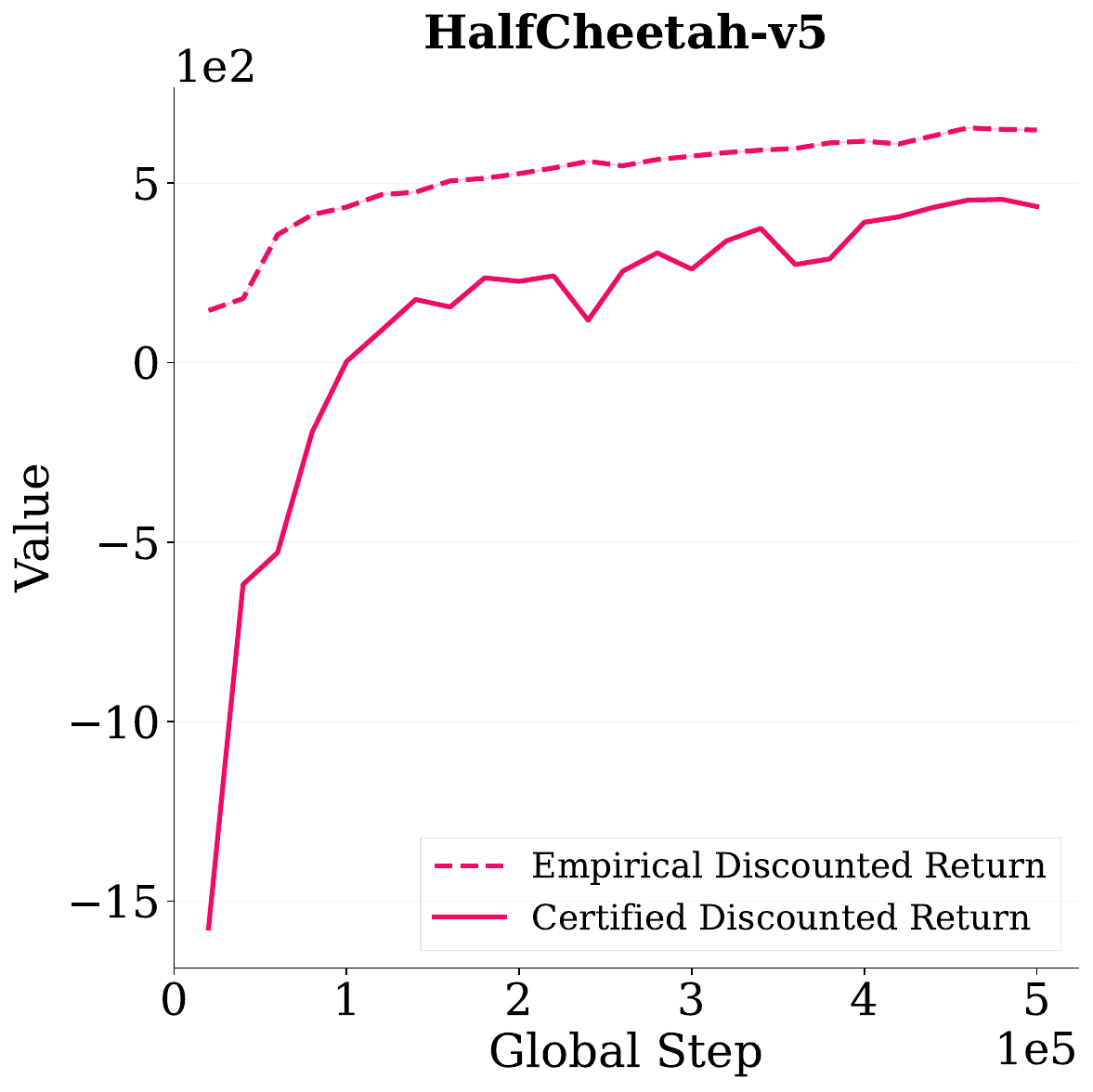}
        \caption{Mixing time: 1000}
    \end{subfigure}
    \hfill
    
    \caption{PAC-Bayesian bound behaviour with different fixed mixing time estimates on HalfCheetah-v5. Empirical discounted return (dashed) and certified discounted return (solid) are shown for mixing times ranging from 1 (a) to 1000 (f). Smaller mixing times yield tighter bounds but risk overconfidence if underestimated, while larger values provide conservative but still meaningful certificates. The algorithm maintains practical utility even with substantial mixing time overestimation, demonstrating robustness of the framework.}
\label{fig:mixing_time_analysis}
\end{figure}

\section{Ablation study}
\label{sec:ablation_study}

\begin{figure}[H]
    \centering
    \begin{subfigure}[b]{0.8\textwidth}
        \includegraphics[width=\textwidth]{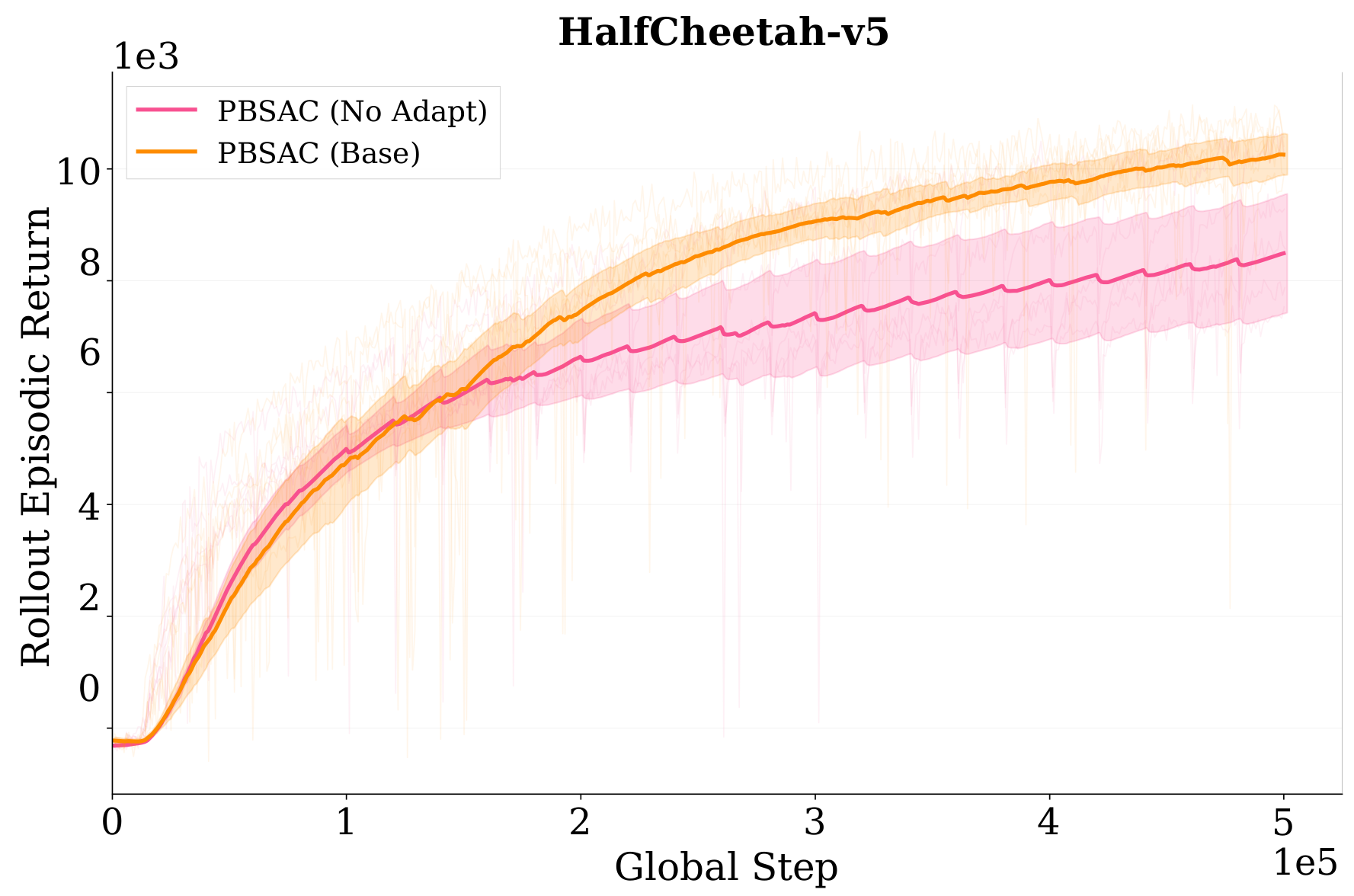}
        \caption{Adaptive Sampling}
        \label{fig:ablation_adaptation}
    \end{subfigure}
    \hfill
    \begin{subfigure}[b]{0.8\textwidth}
        \includegraphics[width=\textwidth]{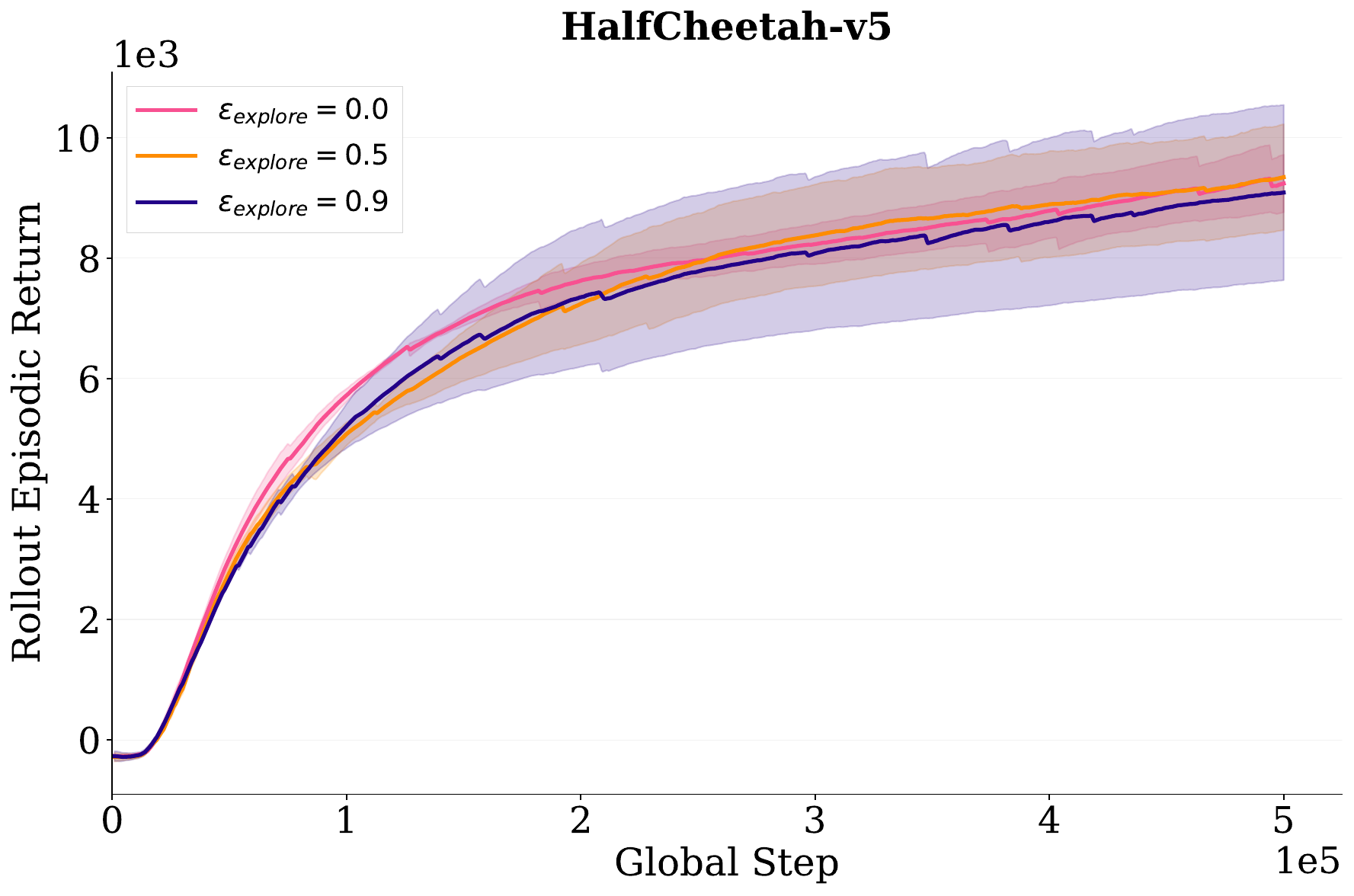}
        \caption{Posterior-Guided Exploration}
        \label{fig:ablation_pge}
    \end{subfigure}
    
    \caption{Ablations}
    \label{fig:ablations}
\end{figure}

\begin{figure}[H]
\begin{center}
\includegraphics[width=0.8\textwidth]{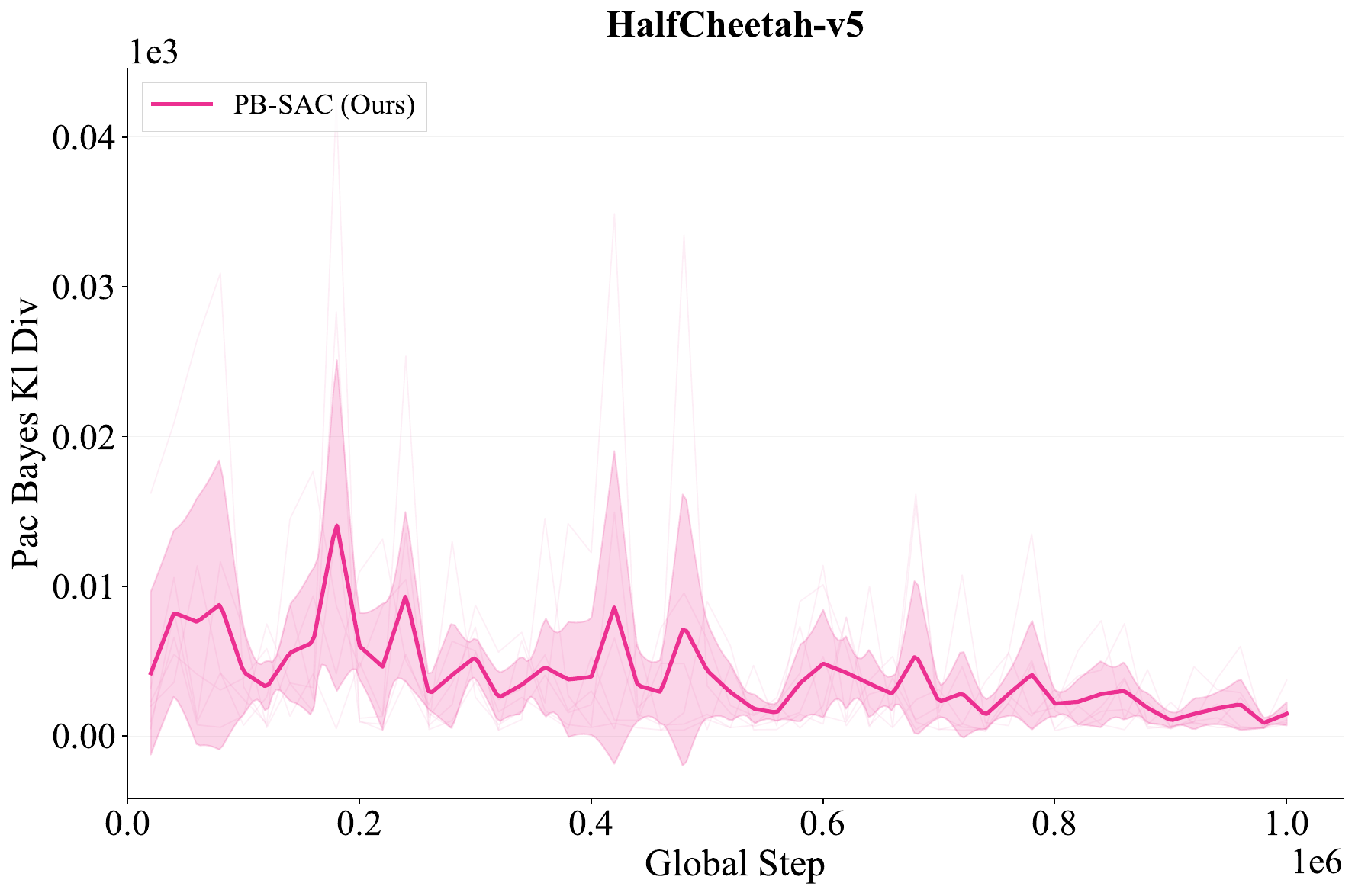}
\end{center}
\caption{KL divergence evolution}
\label{fig:kl_evolution}
\end{figure}

\section{Hyperparameter Selection}
\label{sec:hyperparams}

We carefully selected hyperparameters for our PAC-Bayes Soft Actor-Critic (PB-SAC) implementation to balance performance, sample efficiency, and theoretical guarantees. the common parameters with SAC are left unchanged, while we take the original hyperparameters of PBAC from the paper of \citet{tasdighi2025deepexplorationpacbayes}. Table~\ref{tab:hyperparameters} summarizes it all.

\begin{table}[htbp]
\centering
\caption{Hyperparameter Comparison for MuJoCo Continuous Control Tasks}
\label{tab:hyperparameters}
\resizebox{\textwidth}{!}{%
\begin{tabular}{|l|c|c|c|}
\hline
\textbf{Hyperparameter} & \textbf{PB-SAC (Our Algorithm)} & \textbf{SAC (Baseline)} & \textbf{PBAC} \\
\hline
\hline
\multicolumn{4}{|c|}{\textbf{Common SAC Parameters}} \\
\hline
Total Timesteps & $1 \times 10^6$ & $1 \times 10^6$ & $1 \times 10^6$ \\
Discount Factor ($\gamma$) & $0.99$ & $0.99$ & $0.99$ \\
Soft Update Coefficient ($\tau$) & $0.005$ & $0.005$ & $0.005$ \\
Batch Size & $256$ & $256$ & $256$ \\
Replay Buffer Size & $1 \times 10^6$ & $1 \times 10^6$ & $1 \times 10^6$ \\
Initial Temperature ($\alpha$) & $0.2$ & $0.2$ & $0.2$ \\
Temperature Learning Rate & $3 \times 10^{-4}$ & $3 \times 10^{-4}$ & $3 \times 10^{-4}$ \\
Target Update Frequency & $1$ & $1$ & $1$ \\
\hline
\multicolumn{4}{|c|}{\textbf{Algorithm-Specific Parameters}} \\
\hline
Actor Learning Rate & $3 \times 10^{-4}$ & $3 \times 10^{-4}$ & $3 \times 10^{-4}$ \\
Critic Learning Rate & $1 \times 10^{-3}$ & $1 \times 10^{-3}$ & $3 \times 10^{-4}$ \\
Learning Starts & $5{,}000$ & $5{,}000$ & $10{,}000$ \\
Training Frequency & $2$ & $2$ & $1$ \\
Automatic $\alpha$ Tuning & \checkmark & \checkmark & $\times$ \\
Multi-Head Architecture & $\times$ & $\times$ & \checkmark \\
Ensemble of Critics & $\times$ & $\times$ & \checkmark $(10)$ \\
Number of Heads & $1$ & $1$ & $10$ \\
\hline
\multicolumn{4}{|c|}{\textbf{Network Architecture}} \\
\hline
Policy Hidden Layers & $[256, 256]$ & $[256, 256]$ & $[256, 256]$ \\
Q-Function Hidden Layers & $[256, 256]$ & $[256, 256]$ & $[256, 256]$ \\
Activation Function & ReLU & ReLU & CReLU \\
\hline
\multicolumn{4}{|c|}{\textbf{PAC-Bayes Specific (PB-SAC Only)}} \\
\hline
Failure Probability ($\delta$) & $0.1$ & $-$ & $-$ \\
Initial Std Dev & $0.01$ & $-$ & $-$ \\
PB Update Frequency & $20{,}000$ & $-$ & $-$ \\
Actor Freeze Frequency & $20$ & $-$ & $-$ \\
Adaptation samples & $256$ & $-$ & $-$ \\
PB Rollout Trajectories & $100$ & $-$ & $-$ \\
PB Rollout Steps per Trajectory & $500$ & $-$ & $-$ \\
\hline
\multicolumn{4}{|c|}{\textbf{PBAC Specific}} \\
\hline
Bootstrap Rate & $-$ & $-$ & $0.05$ \\
Posterior Sampling Rate & $-$ & $-$ & $5$ \\
Prior Scaling & $-$ & $-$ & $5.0$ \\
\hline
\end{tabular}%
}
\end{table}

\end{document}